\documentclass[a4paper,11pt]{article}
\usepackage{geometry}
\usepackage{hyperref}
\usepackage{url}
\usepackage{hyperref}
\usepackage{url}
\usepackage{amsmath,amssymb,amsthm,mathtools}
\usepackage{amsfonts}
\usepackage{bm}
\usepackage{graphicx}
\usepackage{subcaption}
\usepackage{booktabs}
\usepackage{enumitem}
\usepackage{microtype}
\usepackage{xcolor}
\usepackage{multirow}
\usepackage{algorithm}
\usepackage{algpseudocode}

\mathtoolsset{showonlyrefs}

\theoremstyle{plain}
\newtheorem{theorem}{Theorem}[section]
\newtheorem{proposition}[theorem]{Proposition}
\newtheorem{lemma}[theorem]{Lemma}

\theoremstyle{definition}
\newtheorem{definition}[theorem]{Definition}

\theoremstyle{remark}
\newtheorem{remark}[theorem]{Remark}

\newcommand{\R}{\mathbb{R}}
\newcommand{\N}{\mathbb{N}}
\newcommand{\Z}{\mathbb{Z}}
\newcommand{\bx}{\bm{x}}
\newcommand{\by}{\bm{y}}
\newcommand{\bz}{\bm{z}}
\newcommand{\bv}{\bm{v}}
\newcommand{\bw}{\bm{w}}
\newcommand{\bb}{\bm{b}}
\newcommand{\bc}{\bm{c}}
\newcommand{\bu}{\bm{u}}

\newcommand{\Hol}{\mathcal H^{\alpha}_{\lambda,R}([0,1]^d)}
\newcommand{\Holshort}{\mathcal H}
\newcommand{\Net}{\mathcal N}
\newcommand{\bit}{\operatorname{bit}}
\newcommand{\Linf}{L^\infty([0,1]^d)}
\newcommand{\dexa}{\varrho}     
\newcommand{\dexac}{\varrho_c} 
\newcommand{\eact}{\sigma}      
\newcommand{\gact}{\phi}        

\DeclareMathOperator{\supp}{supp}

\usepackage[capitalize,noabbrev]{cleveref}

\title{Arbitrary-Accuracy Neural Approximation with \\Optimal Neuron Count and Near-Optimal Bit Complexity}

\author{
Zilan Cheng\thanks{Division of Mathematical Sciences, School of Physical and Mathematical Sciences, Nanyang Technological University, 637371, Singapore. Email: \texttt{zilan001@e.ntu.edu.sg}.}
\and Li-Lian Wang\thanks{Division of Mathematical Sciences, School of Physical and Mathematical Sciences, Nanyang Technological University, 637371, Singapore. Email: \texttt{lilian@ntu.edu.sg}.}
\and Zhongjian Wang\thanks{Corresponding author. Division of Mathematical Sciences, School of Physical and Mathematical Sciences, Nanyang Technological University, 637371, Singapore. Email: \texttt{zhongjian.wang@ntu.edu.sg}.}
}

\begin{document}
\maketitle
\begin{abstract}
We study the minimum number of hidden neurons required for
arbitrary-accuracy approximation of multivariate Hölder-continuous functions on $[0,1]^d$ and the associated encoding complexity. For $d\geq 2$, we construct a fixed, explicitly defined activation function for which a closed-form network with two hidden layers of widths $d$ and $1$ achieves arbitrary accuracy in the uniform norm. We prove that $d+1$ is the exact minimum total number of hidden neurons among standard feedforward networks with locally integrable activations and affine outputs. We further give a simpler construction using a single elementary activation that combines the floor and exponential functions. This construction requires three hidden layers of widths $d$, $1$, and $2$, only two neurons above the minimum. If a skip connection is allowed, widths $d$, $1$, and $1$ suffice. These
constructions use explicit grid addressing and integer encoding of quantized function values. For a bounded $\alpha$-Hölder class, they require
$O(\varepsilon^{-d/\alpha}\log(1/\varepsilon))$ bits, matching the metric-entropy lower bound up to a logarithmic factor.
\end{abstract}
\section{Introduction}
\label{sec:intro}

Universal approximation theorems \cite{cybenko1989approximation,hornik1989multilayer,hornik1991approximation,leshno1993multilayer} state that feedforward networks with suitable nonpolynomial activations are dense in $C([0,1]^d)$. In all classical statements, and in the quantitative theory that followed for ReLU networks \cite{yarotsky2017error,yarotsky2018optimal,petersen2018optimal,shen2022optimal}, the number of neurons grows without an explicit bound as the prescribed accuracy $\varepsilon$ tends to zero. A different regime was opened by \cite{maiorov1999ridge}: there is an analytic sigmoidal activation for which a two-hidden-layer network with widths $3d$ and $6d+3$ approximates every $f\in C([0,1]^d)$ to arbitrary accuracy, with the architecture fixed and only the parameters depending on $f$ and $\varepsilon$. Such \emph{fixed-size} universal approximators have since been constructed with progressively fewer neurons and progressively more explicit activations \cite{ismailov2014bounded,guliyev2018two,yarotsky2021elementary,shen2021neural,zhang2022deep}. In what follows, we list two questions that are left open by this line of work and will be answered here.

\paragraph{How few neurons suffice?}
The smallest known count is $3d+2$ hidden neurons, in two hidden layers of widths $d$ and $2d+2$, obtained by \cite{ismailov2014bounded}, whose activation is a specially constructed $C^\infty$ sigmoidal function whose existence is proved
theoretically; \cite{guliyev2018two} attain the same count and give an algorithmically computable construction. If different activations and skip connections are allowed, \cite{yarotsky2021elementary} obtained a smaller construction with $d+2$ hidden neurons.
No lower bound on the total number of hidden neurons was known beyond the trivial observation that the first layer must see the whole input. We construct and prove that \textbf{the exact answer is $d+1$}: a two-hidden-layer network of widths $(d,1)$ with a single explicit piecewise-constant activation and without any skip connection approximates every function in the H\"older class $\Hol$ (see \eqref{eq:holder-class}) to arbitrary $L^\infty$ accuracy (Theorem~\ref{thm:exact}), and no feedforward network with at most $d$ hidden neurons can do so, whatever its depth and whatever locally integrable activation it uses (Theorem~\ref{thm:lower}). The lower bound combines a rank argument for the first hidden layer with the fact, going back to \cite{lin1993fundamentality, maiorov1999ridge, pinkus1999approximation}, that sums of a fixed number of ridge functions are not dense; we give a direct proof valid for arbitrary locally integrable activations.

\paragraph{What does a small network cost?}
Fixed-size constructions pay for accuracy with the magnitude and precision of their parameters, a phenomenon \cite{zhang2022deep} call the \emph{curse of memory} but do not quantify. 
We therefore define the \emph{bit complexity}
of a network as the total binary length of its (rational) parameters and show that our constructions have worst-case bit complexity $\Theta(\varepsilon^{-d/\alpha}\log(1/\varepsilon))$ over the H\"older
class (Theorem~\ref{thm:bits}). 
On the other hand, the classical Kolmogorov--Tikhomirov
\emph{$\varepsilon$-entropy} estimate
\cite{kolmogorov1959varepsilon} implies that, for any fixed architecture and activation, any family of rational-parameter networks approximating the entire class to accuracy $\varepsilon$ must have
worst-case bit complexity $\Omega(\varepsilon^{-d/\alpha})$ (Proposition~\ref{prop:entropy}).
Thus, our constructions are information-theoretically near-optimal, up to a logarithmic factor. 
For fixed-size networks, bit complexity
therefore provides a natural measure of representational cost.

\paragraph{Elementary activations}
The activation attaining $d+1$ neurons is explicit but intricate: it is a piecewise-constant function that lists, block by block, the digits of every integer in every base (see \eqref{eq:dexa}). We therefore also consider a floor-exponential activation
\begin{equation}
\eact(t)=\begin{cases}2^{t}-1,&t<0,\\ \lfloor t\rfloor,&t\ge 0.\end{cases}
\label{eq:eact-intro}
\end{equation}
We call an activation \emph{elementary} if it is given by a finite formula in
finitely many standard functions; $\eact$ is elementary in this sense. Using
$\eact$ as the single shared activation, we give an explicit three-hidden-layer
network of widths $(d,1,2)$, hence $d+3$ hidden neurons, and a variant of widths
$(d,1,1)$ with one skip connection, hence $d+2$ hidden neurons
(Theorem~\ref{thm:elementary}). All parameters of both networks are rational
and given in closed form, which allows exact-arithmetic verification
(Section~\ref{sec:numerics}); to our knowledge, no earlier fixed-size
construction has this property. Elementary activations have appeared in
fixed-size constructions recently~\cite{yarotsky2021elementary,zhang2022deep}, but there the function-dependent weights are obtained from the density of an
irrational winding on the torus and are not given in closed form;
\cite{yarotsky2021elementary} notes that computing the weight is practically infeasible even for small problems. Moreover, those earlier constructions need $O(d^2)$ neurons when a single elementary activation is used. Table~\ref{tab:comparison}
summarizes the results.
\begin{table}[t]
\caption{Comparison of fixed-size universal approximators on
$[0,1]^d$. Unless ``two'' is indicated, a single activation is used. $^{\ast}$Optimal among all locally integrable activations for $d\ge2$ (Theorem~\ref{thm:lower}).
}
\label{tab:comparison}
\centering
\footnotesize
\setlength{\tabcolsep}{3pt}
\vspace{-2mm}
\begin{tabular}{@{}lllll@{}}
\toprule
\textbf{Reference}
& \textbf{Neurons}
& \textbf{Depth}
& \textbf{Activation}
& \textbf{Parameters}\\
\midrule
\cite{maiorov1999ridge}
& $9d+3$
& 2
& analytic sigmoidal
& existence\\

\cite{ismailov2014bounded}
& $3d+2$
& 2
& $C^\infty$ sigmoidal
& existence\\

\cite{guliyev2018two}
& $3d+2$
& 2
& computable $C^\infty$ sigmoidal
& existence\\

\cite{yarotsky2021elementary}
& $d+2$ (skip)
& 3
& two: floor and analytic, elementary
& existence\\

&
$O(d^2)$
& $O(1)$
& two: $\sin$ and $\arcsin$, elementary
& existence\\

&
$O(d^2)$
& $O(1)$
& $C^1$ sigmoidal, elementary
& existence\\

\cite{zhang2022deep}
& $\leq 396d(2d+1)$
& 11
& $C^0$ triangular wave--softsign, elementary
& existence\\

\midrule
Theorem~\ref{thm:exact}
& $d+1$ \textbf{(optimal$^\ast$)}
& 2
& explicit piecewise constant 
&closed-form\\

Theorem~\ref{thm:elementary}
& $d+3$
& 3
& floor--exponential, elementary 
&closed-form\\

Theorem~\ref{thm:elementary} 
& $d+2$ (skip)
& 3
& floor--exponential, elementary 
&closed-form\\
\bottomrule
\end{tabular}
\vspace{-4mm}
\end{table}

We summarize our \textbf{main contributions} as follows.
\begin{enumerate}[leftmargin=*,itemsep=1pt,topsep=2pt]
\item \textbf{Optimal neuron count.}
We construct a network of widths $(d,1)$ with a single explicit activation
$\dexa$ in \eqref{eq:dexa} and rational, closed-form parameters that
approximates every $f\in\Hol$ to arbitrary $L^\infty$ accuracy
(Theorem~\ref{thm:exact}), and we verify it in exact arithmetic
(Section~\ref{sec:numerics}). We prove that $d+1$ is the exact minimum
neuron count, at any depth and for any locally integrable activation
(Theorem~\ref{thm:lower}). We further show that, at least for $d=2$, no
continuous activation attains this count in $L^\infty$
(Proposition~\ref{prop:cont}), whereas under the weaker $L^p$ criterion,
$1\le p<\infty$, a continuous modification of $\dexa$ that we construct
attains it for every $d\ge2$ (Theorem~\ref{thm:contLp}).

\item \textbf{Elementary activation.}
For the single elementary activation in~\eqref{eq:eact-intro}, we give an
explicit construction with $d+3$ hidden neurons and no skip
connections, together with a $d+2$-neuron variant using one skip
connection (Theorem~\ref{thm:elementary}). All parameters are rational
and given in closed form, again permitting exact-arithmetic evaluation.
\item \textbf{Near-optimal storage complexity.}
We develop the \emph{bit complexity}, defined as the total binary length of the
rational network parameters, to quantify the information stored in a
fixed-size network. Our constructions have worst-case bit complexity
$\Theta(\varepsilon^{-d/\alpha}\log(1/\varepsilon))$ (Theorem~\ref{thm:bits}), which is information-theoretically optimal up to a
logarithmic factor.
\end{enumerate}
The rest of the paper is organized as follows:
Section~\ref{sec:setting} introduces the main concepts and notation.
Section~\ref{sec:optimal-neurons} establishes the optimal hidden-neuron
count by proving both sufficiency and necessity.
Section~\ref{sec:elementary} presents the constructions based on the
elementary floor--exponential activation.
Section~\ref{sec:bits} analyzes the bit complexity of the proposed
networks.
Section~\ref{sec:numerics} provides exact-arithmetic numerical
verification, and Section~\ref{sec:discussion} discusses limitations
and open problems. Additional proofs, extensions, numerical results,
and figures are provided in the appendices.

\section{Setting}
\label{sec:setting}
To formulate precisely the minimum-neuron problem and quantify the associated encoding cost, we first introduce the function class, network setting, and bit complexity, followed by the grid-quantization scheme underlying all our constructions.
\paragraph{H\"older class.}
Throughout, $d\in\N_+$, $\alpha\in(0,1]$ and $\lambda,R>0$. 
Consider
\begin{equation}
\Hol:=\bigl\{f\in C([0,1]^d):\|f\|_{\Linf}\le R,\ |f(\bx)-f(\by)|\le\lambda\|\bx-\by\|_\infty^{\alpha},\ \forall\bx,\by\bigr\}.
\label{eq:holder-class}
\end{equation}
Since $\Hol\subseteq\mathcal H^\alpha_{\lambda,\lceil R\rceil}([0,1]^d)$, we assume without loss of generality that $R\in\N_+$ whenever we need to make certain neural network parameters integers. 

\paragraph{Networks.}
An \emph{activation} is a function $\gact:\R\to\R$, applied componentwise to vectors. For the number of hidden layers $L\in\N_+$ and widths $\bm n=(n_1,\dots,n_L)\in\N_+^L$, a \emph{feedforward network} with input dimension $d$, activation $\gact$ and architecture $\bm n$ is a function $\Phi:[0,1]^d\to\R$ of the form
\begin{equation}
h_0(\bx)=\bx,\quad h_k(\bx)=\gact\bigl(W_kh_{k-1}(\bx)+\bb_k\bigr)\in\R^{n_k},\ k=1,\dots,L,\quad
\Phi(\bx)=\bm a^{\top}h_L(\bx)+c,
\label{eq:network}
\end{equation}
with $W_k\in\R^{n_k\times n_{k-1}}$ ($n_0=d$), $\bb_k\in\R^{n_k}$, $\bm a\in\R^{n_L}$, $c\in\R$. We denote the set of all such networks by $\Net(\bm n;\gact)$. 
We also consider the larger class $\Net^{\mathrm{skip}}(\bm n;\gact)$ in which the output layer is affine in all hidden layers, $\Phi(\bx)=\sum_{k=1}^{L}\bm a_k^{\top}h_k(\bx)+c$; the input still enters only through the first hidden layer. The number of \emph{hidden neurons} is $|\bm n|:=n_1+\dots+n_L$, the \emph{depth} is $L$, and the \emph{width} is $\max_k n_k$.
\begin{definition}[Arbitrary-accuracy approximation and minimal neuron count]
\label{def:fixed-size}
The network class $\Net(\bm n;\gact)$ \emph{approximates $\Hol$ to
arbitrary accuracy} if, for every $f\in\Hol$ and every
$\varepsilon>0$, there exists $\Phi\in\Net(\bm n;\gact)$ such that
$\|f-\Phi\|_{\Linf}\leq\varepsilon$; the same definition applies to
$\Net^{\mathrm{skip}}(\bm n;\gact)$. The \emph{minimal neuron count}
of $\Hol$ is
\begin{equation}
\begin{split}
\nu(\Holshort):=\min\bigl\{|\bm n|:\ 
&\exists L\in\N_+,\ \bm n\in\N_+^L,\ 
\gact\in L^1_{\mathrm{loc}}(\R),\\
&\forall f\in\Hol,\
\inf_{\Phi\in\Net(\bm n;\gact)}
\|f-\Phi\|_{\Linf}=0
\bigr\}.
\end{split}
\end{equation}
\vspace{-4mm}
\end{definition}
Only the architecture and the activation are fixed in Definition~\ref{def:fixed-size}; the weights and biases may depend on $f$ and $\varepsilon$. 
All activations used in our constructions are locally bounded and hence locally integrable.

\paragraph{Bit complexity.}
For a nonnegative integer $k$ let $\bit(k):=\lceil\log_2(k+1)\rceil$ be its binary length; and for a rational number $\theta=a/b$ in lowest terms ($a\in\Z$, $b\in\N_+$), let $\bit(\theta):=1+\bit(|a|)+\bit(b)$, the additional bit recording the sign. If all weights and biases of a network $\Phi$ are rational, define its \emph{bit complexity} by
\begin{equation}
\bit(\Phi):=\sum_{\theta\in\mathcal P(\Phi)}\bit(\theta),
\label{eq:bit-complexity}
\end{equation}
where $\mathcal P(\Phi)$ denotes the set of nonzero weights and biases of $\Phi$. 

\paragraph{Grid quantization (for all constructions).}
Fix $M\in\N_+$ and set $K:=(M+1)^d$. For $\bx\in[0,1]^d$ let $m_j(x_j):=\lfloor Mx_j\rfloor\in\{0,\dots,M\}$ and define the \emph{cell address}
\begin{equation}
r(\bx):=\sum_{j=1}^{d}(M+1)^{j-1}m_j(x_j)\in\{0,\dots,K-1\}.
\label{eq:address}
\end{equation}
The address is the base-$(M+1)$ integer whose digits are the coordinates $m_j(x_j)$. For $\ell\in\{0,\dots,K-1\}$, the \emph{cell} $Q_\ell:=\{\bx\in[0,1]^d:r(\bx)=\ell\}$ is a half-open box (degenerate on $x_j=1$) and can be represented by the grid point $\bx^{(\ell)}:=\bigl(m^{(\ell)}_1/M,\dots,m^{(\ell)}_d/M\bigr)$ whose coordinates $m^{(\ell)}_j=\lfloor\ell/(M+1)^{j-1}\rfloor\bmod(M+1)$ are the digits of $\ell$. See Figure~\ref{fig:position_encoding} and Table~\ref{tab:position_encoding} in Appendix~\ref{app:fig_tab} for an example of $M=3$. Then $\|\bx-\bx^{(\ell)}\|_\infty\le 1/M$ for all $\bx\in Q_\ell$. 
Given $\varepsilon>0$, we always choose
\begin{equation}
\delta:=2^{-\lceil\log_2(2/\varepsilon)\rceil}\in\bigl(\tfrac{\varepsilon}{4},\tfrac{\varepsilon}{2}\bigr],\quad
M:=\Bigl\lceil\bigl(\tfrac{2\lambda}{\varepsilon}\bigr)^{1/\alpha}\Bigr\rceil,\quad
B:=\max\bigl\{2,\,2^{\lceil\log_2(\lfloor 2R/\delta\rfloor+1)\rceil}\bigr\},
\label{eq:parameters}
\end{equation}
so that $\lambda M^{-\alpha}\le\varepsilon/2$, $\delta$ is dyadic and $B$ is a power of two. 
For $f\in\Hol$, we define the quantization indices $q_\ell$ and the
corresponding quantized approximations $\widehat f_\ell$ of
$f(\bx^{(\ell)})$ by
\begin{equation}
q_\ell:=\Bigl\lfloor\frac{f(\bx^{(\ell)})+R}{\delta}\Bigr\rfloor\in\{0,\dots,B-1\},\quad \widehat f_\ell:=-R+\delta q_\ell,\quad 0\le f(\bx^{(\ell)})-\widehat f_\ell<\delta,
\label{eq:quantized-values}
\end{equation}
which satisfy, for every $\bx\in Q_\ell$,
\begin{equation}
|f(\bx)-\widehat f_\ell|\le|f(\bx)-f(\bx^{(\ell)})|+|f(\bx^{(\ell)})-\widehat f_\ell|<\lambda M^{-\alpha}+\delta\le\varepsilon.
\label{eq:basic-error}
\end{equation}
The construction below is a network that outputs $\widehat f_{r(\bx)}$, or equivalently
$\widehat f_\ell$ when $\bx\in Q_\ell$.
Moreover, 
\begin{equation}
M=\Theta(\varepsilon^{-1/\alpha}),\quad K=\Theta(\varepsilon^{-d/\alpha}),\quad B=\Theta(\varepsilon^{-1})\quad \text{as}~\varepsilon\to0.
\label{eq:parameter-rates}
\end{equation}

\begin{figure}[t]
\centering
\begin{subfigure}[t]{0.4\textwidth}\centering
\includegraphics[width=0.8\textwidth]{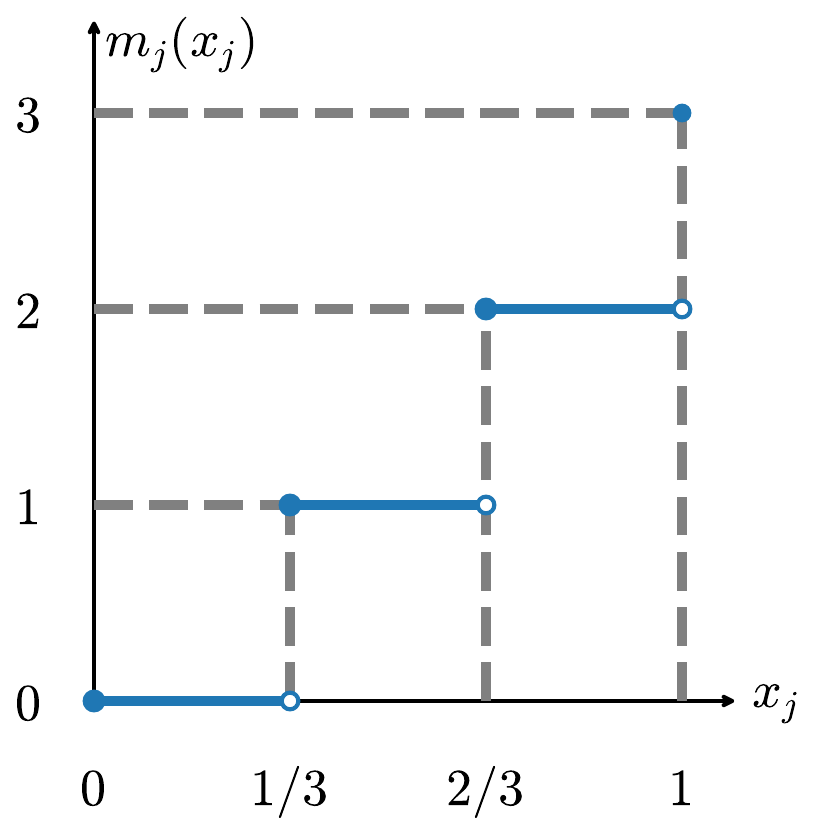}
\caption{$m_j(x_j)=\lfloor Mx_j\rfloor$ maps $[\frac iM,\frac{i+1}M)$ to $i$ and $1$ to $M$.}
\label{subfig:mj}
\end{subfigure}\hspace{10mm}
\begin{subfigure}[t]{0.45\textwidth}\centering
\includegraphics[width=0.8\textwidth]{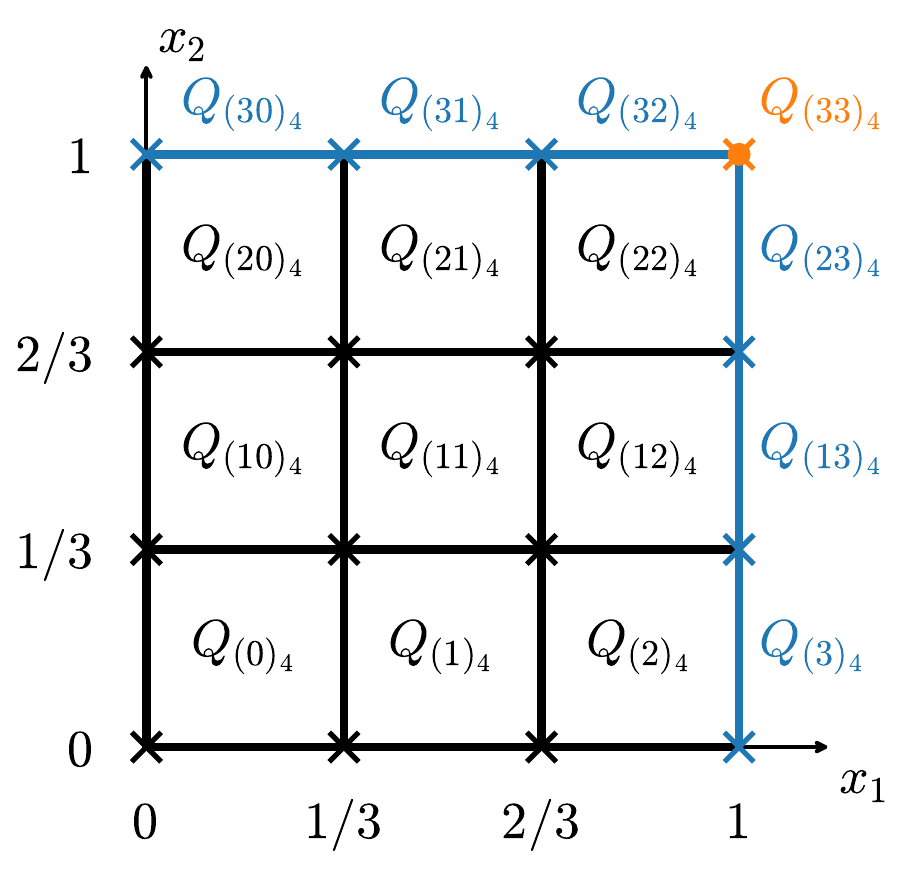}
\caption{$r(\bx)$ for $d=2$: each cell $Q_\ell$ is mapped to $\ell$; crosses mark the representatives $\bx^{(\ell)}$.}
\label{subfig:r}
\end{subfigure}
\caption{Position encoding with $M=3$.}
\label{fig:position_encoding}
\vspace{-3mm}
\end{figure}

\section{Optimal neuron count: $d+1$ hidden neurons}
\label{sec:optimal-neurons}
\subsection{Sufficiency: an explicit $(d,1)$ network}
\label{subsec:upper}
Our goal is to realize the piecewise-constant approximation $\widehat f_{r(\bx)}=-R+\delta q_{r(\bx)}$. Our strategy is first to determine the cell containing $\bx$ through its address $r(\bx)$, and then to pack $q_0,\dots,q_{K-1}$ into a single integer as digits in a suitable base, so that $q_{r(\bx)}$ can be recovered by extracting the digit indexed by $r(\bx)$. This motivates an activation $\dexa$ whose negative branch provides the floor operation needed for addressing and whose nonnegative branch performs digit extraction.
\begin{figure}[h]
\centering
\includegraphics[width=0.7\linewidth]{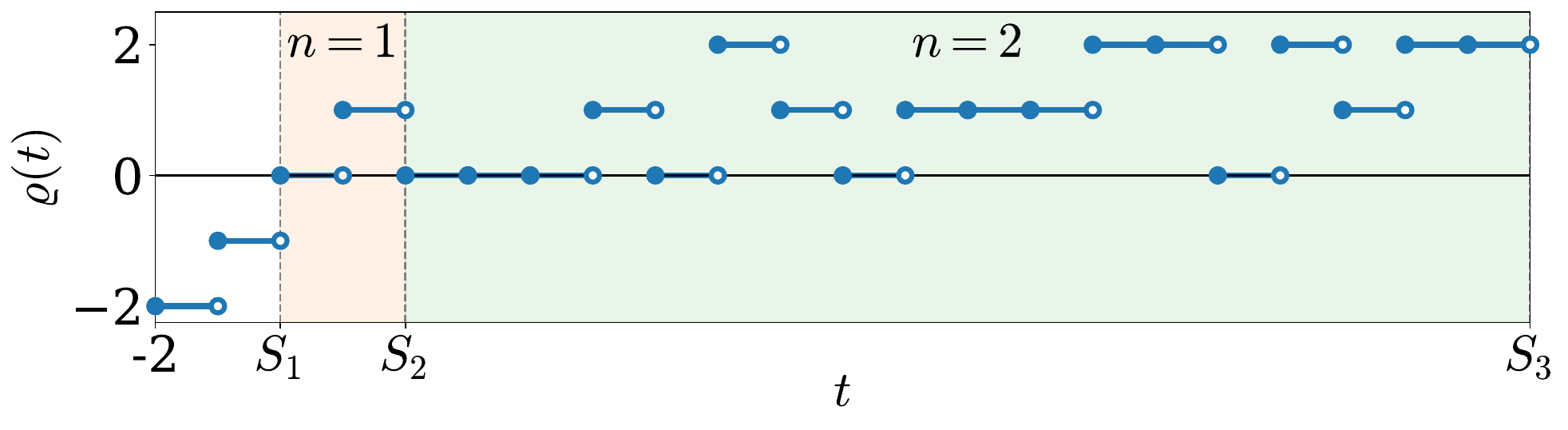}
\vspace{-4mm}
\caption{$\varrho$ in $[-2,S_3)$, or equivalently, $[-2,0)\bigcup\big(\bigcup_{n=1}^2\bigcup_{j=0}^{(n+1)^n-1}
    \bigcup_{r=0}^{n-1}
    I_{n,j,r}\big)$.}
\label{fig:varrho}
\end{figure}
\paragraph{The digit-extraction activation.}
For $n\in\N_+$, let $S_1:=0$ and
$S_n:=\sum_{k=1}^{n-1}k(k+1)^k$, so that
$S_{n+1}-S_n=n(n+1)^n$ and $S_n\to\infty$. The half-line
$[0,\infty)$ is the disjoint union of the blocks
$[S_n,S_{n+1})$, $n\in\N_+$, and the $n$-th block is the disjoint
union of the unit intervals
\begin{equation}
I_{n,j,r}:=[S_n+nj+r,S_n+nj+r+1),
\quad
j\in\{0,\ldots,(n+1)^n-1\},
\quad
r\in\{0,\ldots,n-1\}.
\label{eq:unit-intervals}
\end{equation}
Write $j=\sum_{s=0}^{n-1}b_s(j)(n+1)^{n-1-s}$ with digits $b_s(j)\in\{0,\dots,n\}$. Define
\begin{equation}
\dexa(t):=\begin{cases}\lfloor t\rfloor,& t<0,\\ b_r(j), & t\in I_{n,j,r}.\end{cases}
\label{eq:dexa}
\end{equation}
Equivalently, on $I_{n,j,r}$ the value $b_r(j)$ is $\lfloor j/(n+1)^{n-1-r}\rfloor-(n+1)\lfloor j/(n+1)^{n-r}\rfloor$. As shown in Figure \ref{fig:varrho},the function $\dexa$ is piecewise constant on countably many intervals, satisfies $0\le\dexa(t)\le n$ on $[S_n,S_{n+1})$, and is therefore locally bounded and locally integrable. Its defining property is the \emph{digit-extraction identity}
\begin{equation}
\dexa(S_n+nj+r)=b_r(j),\quad j\in\{0,\ldots,(n+1)^n-1\},
\quad
r\in\{0,\ldots,n-1\}.
\label{eq:digit-extraction}
\end{equation}

\paragraph{The network.}
Let $f\in\Hol$, $\varepsilon>0$, and let $M,\delta,B,q_\ell$ be as in~\eqref{eq:parameters}--\eqref{eq:quantized-values}. Set
\begin{equation}
n:=\max\{K,B-1\},\quad
J_f:=\sum_{\ell=0}^{K-1}q_\ell\,(n+1)^{n-1-\ell},\quad
N_f:=S_n+nJ_f .
\label{eq:Jf-Nf}
\end{equation}
Since $q_\ell\le B-1\le n$, the base-$(n+1)$ digits of $J_f$ are $b_\ell(J_f)=q_\ell$ for $\ell<K$ and $0$ for $K\le\ell<n$; in particular $0\le J_f<(n+1)^n$, and~\eqref{eq:digit-extraction} gives $\dexa(N_f+\ell)=q_\ell$ for $0\le\ell<K$. 

As illustrated in Figure~\ref{fig:nn2}, the outputs
of the hidden layers and the affine output layer are defined as:
\begin{align}
u_j(\bx)
:=&\dexa\bigl(Mx_j-(M+1)\bigr)
=\lfloor Mx_j\rfloor-(M+1),
\quad j=1,\dots,d,
&&\text{(1st hidden layer)}
\label{eq:exact-layer1}\\
z(\bx)
:=&\dexa\Bigl(
N_f+\sum_{j=1}^{d}(M+1)^{j-1}
\bigl(u_j(\bx)+M+1\bigr)
\Bigr)
&&\smash{\raisebox{-0.5\baselineskip}{%
    \text{(2nd hidden layer)}}}
\notag\\
=&\dexa\bigl(N_f+r(\bx)\bigr)
=q_{r(\bx)}
&&
\label{eq:exact-layer2}\\
\Phi^{\dexa}_{f,\varepsilon}(\bx)
:=&-R+\delta z(\bx)
=\widehat f_{r(\bx)}.
&&\text{(output layer)}
\label{eq:exact-output}
\end{align}
In~\eqref{eq:exact-layer1} the argument of $\dexa$ is at most $-1$, so the negative branch (the floor) applies; in~\eqref{eq:exact-layer2} the argument is $N_f+r(\bx)\in[S_n,S_{n+1})$ and the digit-extraction branch applies. Since the pointwise estimate \eqref{eq:basic-error} holds on every
$Q_\ell$ and $\bigcup_{\ell=0}^{K-1}Q_\ell=[0,1]^d$, taking the
supremum over $\bx\in[0,1]^d$ yields:

\begin{theorem}[Two hidden layers: $d+1$ neurons]
\label{thm:exact}
Let $d\in\N_+$, $\alpha\in(0,1]$, $\lambda,R>0$. For every $f\in\Hol$ and every $\varepsilon>0$ the network $\Phi^{\dexa}_{f,\varepsilon}\in\Net\bigl((d,1);\dexa\bigr)$ defined by~\eqref{eq:exact-layer1}--\eqref{eq:exact-output} satisfies 
\begin{equation*}
\|f-\Phi^{\dexa}_{f,\varepsilon}\|_{\Linf}\le\varepsilon.    
\end{equation*}
In particular $\nu(\Hol)\le d+1$.
\end{theorem}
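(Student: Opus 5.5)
The plan is to verify that the network defined by \eqref{eq:exact-layer1}--\eqref{eq:exact-output} has the stated architecture and computes $\widehat f_{r(\bx)}$ exactly at every $\bx\in[0,1]^d$. Once this holds, the pointwise bound \eqref{eq:basic-error} gives $|f(\bx)-\Phi^{\dexa}_{f,\varepsilon}(\bx)|<\varepsilon$ on each $Q_\ell$. Since the cells $Q_\ell$ cover $[0,1]^d$, taking the supremum yields the $L^\infty$ estimate. The argument has three steps, carried out in order.

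\textbf{First layer.} For $x_j\in[0,1]$, the argument $t=Mx_j-(M+1)$ lies in $[-M-1,-1]$, so it is negative and $\dexa(t)=\lfloor t\rfloor$. Since $M+1$ is an integer, $\lfloor Mx_j-(M+1)\rfloor=\lfloor Mx_j\rfloor-(M+1)=m_j(x_j)-(M+1)$. This layer is realized by the diagonal weight matrix $W_1=M I_d$ and the bias vector $\bb_1=-(M+1)\mathbf 1$.

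\textbf{Second layer.} The single neuron has weights $(M+1)^{j-1}$ and bias $N_f+\sum_j(M+1)^{j-1}(M+1)$. Its preactivation is therefore $N_f+r(\bx)$, by the definition \eqref{eq:address} of the address. Since $r(\bx)\in\{0,\dots,K-1\}$ and $K\le n$, we have $N_f+r(\bx)=S_n+nJ_f+r(\bx)$ with $r(\bx)$ a valid offset $r<n$. We also have $0\le J_f<(n+1)^n$, which holds because the base-$(n+1)$ digits of $J_f$ are $q_\ell\le B-1\le n$ placed in positions $\ell<K\le n$. Together these show that the preactivation lies in the unit interval $I_{n,J_f,r(\bx)}\subset[S_n,S_{n+1})$. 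The digit-extraction identity \eqref{eq:digit-extraction} then gives $z(\bx)=b_{r(\bx)}(J_f)=q_{r(\bx)}$.

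\textbf{Output layer.} The affine map $z\mapsto -R+\delta z$ returns $\widehat f_{r(\bx)}$ by \eqref{eq:quantized-values}, and combining with \eqref{eq:basic-error} completes the estimate.

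The only delicate point is the digit bookkeeping in the second step, and I expect this to be the main obstacle. One must confirm that the ``digits'' $q_\ell$ are genuinely base-$(n+1)$ digits, i.e.\ $q_\ell\le n$; this holds since $q_\ell\le B-1\le n$ by \eqref{eq:quantized-values} and the choice of $n$. One must also confirm that the position indexing $(n+1)^{n-1-\ell}$ matches the convention $b_s(j)$ used in \eqref{eq:dexa}. Finally, the offset $r(\bx)$ must not overflow into the next $j$-slot, which is guaranteed because $r(\bx)\le K-1\le n-1$. The choice $n=\max\{K,B-1\}$ is exactly what secures all three conditions.

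The consequence $\nu(\Hol)\le d+1$ follows directly from Definition~\ref{def:fixed-size}. The architecture $(d,1)$ has $d+1$ hidden neurons, and $\dexa$ is locally bounded, hence in $L^1_{\mathrm{loc}}$. The infimum of $\|f-\Phi\|_{\Linf}$ over $\Net((d,1);\dexa)$ is at most every $\varepsilon>0$, hence equals $0$.
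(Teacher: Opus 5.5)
Your proposal is correct and follows essentially the same route as the paper. Both arguments use the negative (floor) branch in the first layer, observe that the second-layer preactivation is $N_f+r(\bx)\in I_{n,J_f,r(\bx)}$ so that digit extraction returns $q_{r(\bx)}$ (your bookkeeping via $q_\ell\le B-1\le n$ and $r(\bx)\le K-1\le n-1$ matches the paper's), and then take the supremum of the cellwise bound \eqref{eq:basic-error} over the covering cells $Q_\ell$. One small addition: to conclude $\dexa\in L^1_{\mathrm{loc}}$ you also need measurability, which is immediate since $\dexa$ is piecewise constant on intervals.
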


\begin{figure}[t]
\centering
\includegraphics[width=\linewidth]{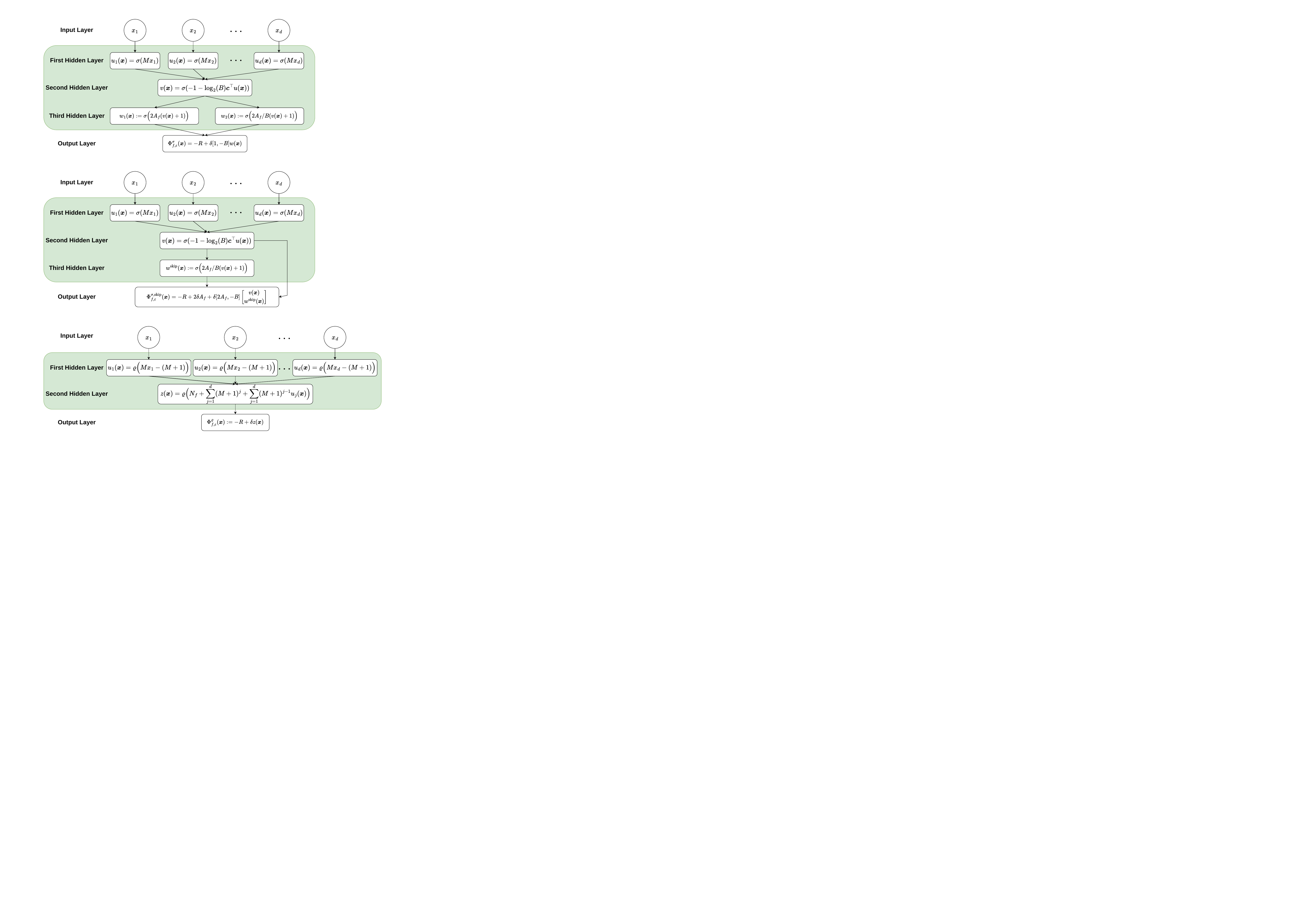}
\caption{Architecture of the network
$\Phi^\varrho_{f,\varepsilon}$ in Theorem~\ref{thm:exact}.
The $d$ neurons in the first hidden layer quantize the input coordinates,
whose weighted combination forms the cell address $r(\bx)$; the neuron in
the second hidden layer decodes the corresponding quantized value from
the stored integer $N_f$. The network has $d+1$ hidden neurons in total.}
\label{fig:nn2}
\vspace{-3mm}
\end{figure}

All parameters (see details in Appendix~\ref{app:exact}) are integers except the dyadic output weight $\delta$
and the output bias $-R$; the latter is also an integer if, without loss of generality, we take $R\in\N_+$. The only function-dependent quantity among the parameters is $N_f$, which
enters the second-layer bias and encodes the quantized values of $f$
required for accuracy $\varepsilon$. For fixed $d$, the architecture
is independent of $f$ and $\varepsilon$, while the activation $\dexa$
is independent of all problem parameters.

\subsection{Necessity: no network with $d$ hidden neurons is universal}
\label{subsec:lower}
In this section, we prove that at least $d+1$ hidden neurons are necessary to approximate $\Hol$ to arbitrary accuracy in the uniform norm when $d\ge2$. 
The proof relies on two activation-independent results: Proposition~\ref{prop:first-layer} shows that the first hidden layer must contain at least $d$ neurons, and Proposition~\ref{prop:shallow} shows that a single-hidden-layer network of fixed width is insufficient. 

\begin{proposition}[First-layer width obstruction]
\label{prop:first-layer}
Let $d\ge2$, $\alpha\in(0,1]$, $\lambda,R>0$, and let
$\gact:\R\to\R$ be arbitrary. For every architecture
$\bm n=(n_1,\ldots,n_L)$ with $n_1<d$, there exist a function
$f_0\in\Hol$ and a constant $\varepsilon_0>0$, depending only on
$\alpha,\lambda$, and $R$, such that
\begin{equation}
\inf_{\Phi\in\Net^{\mathrm{skip}}(\bm n;\gact)}
\|f_0-\Phi\|_{\Linf}
\ge \varepsilon_0.
\label{eq:first-layer-bound}
\end{equation}
\end{proposition}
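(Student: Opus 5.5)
If the first hidden layer has $n_1<d$ neurons, every network in $\Net^{\mathrm{skip}}(\bm n;\gact)$ depends on $\bx$ only through $W_1\bx\in\R^{n_1}$. The proof therefore reduces to a rank argument. Since $W_1$ has rank at most $n_1\le d-1$, its kernel contains a unit vector $\bv\ne0$. Every hidden layer $h_k$, and every skip term, is a function of $h_1(\bx)=\gact(W_1\bx+\bb_1)$. Hence, whatever $\gact$, the weights, the later layers and the skip connections are, the network satisfies $\Phi(\bx+t\bv)=\Phi(\bx)$ whenever both points lie in $[0,1]^d$. No regularity of $\gact$ is needed for this, which is why $\gact$ can be arbitrary.

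Next I choose a single test function that varies in every direction. A natural choice is the radial bump
\[
f_0(\bx)=c\,\bigl(r_0^\alpha-\|\bx-\bm{e}\|_\infty^\alpha\bigr)_+,\qquad \bm e=(\tfrac12,\dots,\tfrac12),
\]
with $r_0=1/2$ and $c=\min\{\lambda,R\,2^{\alpha}\}$. This is $\alpha$-Hölder with constant $c\le\lambda$, because $t\mapsto t^\alpha$ is $\alpha$-Hölder with constant $1$ and $\|\cdot\|_\infty$ is $1$-Lipschitz. It is also bounded by $c\,2^{-\alpha}\le R$. Hence $f_0\in\Hol$.

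For any unit direction $\bv$ (in the $\|\cdot\|_2$ sense, so $\|\bv\|_\infty\ge d^{-1/2}$, although scaling so that $\|\bv\|_\infty=1$ is simpler), I compare the center $\bm e$ with the point $\bm e+s\bv$. I take $s=r_0/\|\bv\|_\infty$, so that $\|s\bv\|_\infty=1/2$. This point lies in $[0,1]^d$ because each coordinate moves by at most $1/2$. At the center $f_0=c\,2^{-\alpha}$, and at $\bm e+s\bv$ we have $f_0=0$. Since $\Phi(\bm e)=\Phi(\bm e+s\bv)$, the triangle inequality gives
\[
c\,2^{-\alpha}\le|f_0(\bm e)-\Phi(\bm e)|+|\Phi(\bm e+s\bv)-f_0(\bm e+s\bv)|\le 2\|f_0-\Phi\|_{\Linf}.
\]
Thus $\varepsilon_0:=c\,2^{-\alpha-1}=\tfrac12\min\{\lambda2^{-\alpha},R\}$, which depends only on $\alpha,\lambda,R$ and is independent of the architecture and of $\gact$.

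The only delicate point is the uniformity in the definition of the infimum. The kernel direction $\bv$ changes with $\Phi$, so $f_0$ must defeat every direction simultaneously. This is why I use the $\|\cdot\|_\infty$-radial bump centered at the middle of the cube, and normalize the segment length in $\|\cdot\|_\infty$ so that the endpoint stays inside $[0,1]^d$ for every $\bv$. One must also check that the $L^\infty$ norm is attained pointwise. Pointwise evaluation is legitimate because $\|\cdot\|_{\Linf}$ here is taken over the domain as a supremum: $f_0$ is continuous, and the network is defined pointwise. If an essential supremum is intended instead, I would apply the same argument to small neighborhoods, namely the set of $\bx$ near $\bm e$ and its translate by $s\bv$, both of positive measure, which again uses only the invariance of $\Phi$ along $\bv$.
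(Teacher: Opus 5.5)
Your proof is correct and is essentially the paper's argument. You take a kernel vector of $W_1$ normalized to $\|\cdot\|_\infty$-length $\tfrac12$ and compare $\Phi$ at the cube's center and at the shifted point via the triangle inequality, which yields the same $\varepsilon_0=\tfrac12\min\{\lambda2^{-\alpha},R\}$; and since the positive part never activates on $[0,1]^d$, your bump equals $c\,2^{-\alpha}$ minus the paper's test function $\zeta\|\bx-\bx_0\|_\infty^\alpha$, so the two choices are equivalent.
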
 
See the proof in Appendix~\ref{app:proof_first-layer}.
\begin{proposition}[Fixed-width single-hidden-layer obstruction]
\label{prop:shallow}
Let $d\ge2$, $\alpha\in(0,1]$, $\lambda,R>0$, and $n_1\in\N_+$,
and let $\gact\in L^1_{\mathrm{loc}}(\R)$. There exists a function
$f_h\in\Hol$ and a constant $\varepsilon_1>0$, depending only on
$d,\lambda,R$, and $n_1$, such that
\begin{equation}
\inf_{\Phi\in\Net((n_1);\gact)}
\|f_h-\Phi\|_{\Linf}
\ge \varepsilon_1.
\label{eq:shallow-bound}
\end{equation}
In particular, no single-hidden-layer network of fixed width can approximate $\Hol$ to arbitrary accuracy.
\end{proposition}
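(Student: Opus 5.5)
The plan is to show that every function in $\Net((n_1);\gact)$ is a sum of at most $n_1$ ridge functions plus a constant, $\Phi(\bx)=\sum_{i=1}^{n_1}a_i\,g_i(\bw_i\cdot\bx)+c$ with $g_i(t)=\gact(t+b_i)$, and that this class is uniformly far from a suitable fixed $f_h\in\Hol$. Since $\gact$ is only locally integrable, the $g_i$ are arbitrary $L^1_{\mathrm{loc}}$ univariate functions. We therefore prove the stronger statement that the set $\mathcal R_{n_1}:=\{\sum_{i=1}^{n_1}g_i(\bw_i\cdot\bx): \bw_i\in\R^d,\ g_i\in L^1_{\mathrm{loc}}(\R)\}$ stays at positive distance from $f_h$. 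This is independent of $\gact$ and gives $\varepsilon_1$ depending only on $d,\lambda,R,n_1$ (and $\alpha$ through the Hölder constraint; we can take $f_h$ smooth and scale it to fit).

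\textbf{Step 1 (reduction to $d=2$ and to an $L^2$ bound).} Restrict to functions of $(x_1,x_2)$ by fixing the other coordinates. A ridge function restricted to a two-dimensional slice is again a ridge function, or a constant. It therefore suffices to find $f_h(x_1,x_2)$ and $\varepsilon_1$ with $\inf_{g\in\mathcal R_{n_1}}\|f_h-g\|_{L^2([0,1]^2)}\ge\varepsilon_1$, since the $L^\infty$ distance dominates the $L^2$ distance on the unit square. Working in $L^2$ is also the natural way to handle the merely locally integrable $g_i$: if the $L^\infty$ error is finite, then $g_i(\bw_i\cdot\bx)$ can be taken bounded after grouping the terms.

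\textbf{Step 2 (annihilating difference operators).} For each direction $\bw_i$, choose a vector $\bv_i\perp\bw_i$ of small length $h$. The finite difference $\Delta_{\bv_i}$ kills $g_i(\bw_i\cdot\bx)$ for any $g_i$. The composed operator $D:=\Delta_{\bv_1}\cdots\Delta_{\bv_{n_1}}$, applied on the subsquare where all shifts remain inside $[0,1]^2$, then annihilates every element of $\mathcal R_{n_1}$ with those directions. Moreover $D$ is bounded with $\|Dg\|_{L^\infty}\le 2^{n_1}\|g\|_{L^\infty}$. Consequently
\begin{equation}
\|f_h-g\|_{L^\infty}\ge 2^{-n_1}\|Df_h\|_{L^\infty}
\end{equation}
for every $g\in\mathcal R_{n_1}$. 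This bound needs no integrability of the $g_i$ beyond pointwise definition; measurability only matters for the a.e.\ version, which we can avoid by using essential suprema on sets of positive measure.

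\textbf{Step 3 (a uniform lower bound over all directions).} This is the main obstacle, because the directions $\bw_i$ are chosen adversarially. We take $f_h$ to be a fixed polynomial of degree $n_1$ that is not a sum of $n_1$ ridge functions, namely a harmonic-type homogeneous polynomial. A clean choice is $f_h=c\,\mathrm{Re}\,(x_1+\mathrm{i}x_2)^{n_1}$ plus a suitable lower-order adjustment, or better $f_h=c\,(x_1^2+x_2^2)^{\lceil n_1/2\rceil}$, scaled so that $f_h\in\Hol$. For a polynomial $P$ of degree $n_1$ we have $D P=h^{n_1}\,\partial_{\hat\bv_1}\cdots\partial_{\hat\bv_{n_1}}P$ exactly, a constant. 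The quantity $\Psi(\hat\bv_1,\dots,\hat\bv_{n_1}):=|\partial_{\hat\bv_1}\cdots\partial_{\hat\bv_{n_1}}P|$ is continuous on the compact product of circles. It never vanishes if $P$ is chosen so that no product of $n_1$ linear forms annihilates it. For the power of $x_1^2+x_2^2$ with even degree this holds, because the symbol $\prod_i(\hat\bv_i\cdot\xi)$ paired with $|\xi|^{n_1}$ gives a positive multiple of the average of $\prod_i(\hat\bv_i\cdot\xi)^2$ up to a sign; we would verify this via the apolar pairing. Hence $\Psi$ has a positive minimum $m$. Fixing $h$ (say $h=1/(2n_1)$) then gives $\varepsilon_1=2^{-n_1}h^{n_1}m>0$, which depends only on $n_1$, $\lambda$, $R$ (and $d$ through the slice). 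If the apolar positivity check fails, the fallback is the classical non-density argument of Lin--Pinkus: use the dimension count. The polynomials of degree $n_1$ form a space of dimension $n_1+1$, while ridge sums of $n_1$ terms cover a compact family of $n_1$-dimensional subspaces. A compactness argument over the directions then gives the uniform gap.
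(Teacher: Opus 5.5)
\textbf{Step 3 fails: the target you chose can be represented exactly by the network.}

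Your Step~2 is sound. $\Delta_{\bv_i}$ with $\bv_i\perp\bw_i$ kills $\gact(\bw_i\cdot\bx+b_i)$ pointwise, so finite differences are a legitimate and more elementary substitute for the paper's distributional operator $\mathcal L_{\bv}$.

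The gap is in Step~3. For every $n_1\ge2$, your target is itself a sum of at most $n_1$ ridge functions, so the claimed minimum $m$ is $0$. For $n_1=2$, your $f_h=c(x_1^2+x_2^2)$ equals $c\,\gact(x_1)+c\,\gact(x_2)\in\Net((2);\gact)$ for the locally integrable activation $\gact(t)=t^2$, so the infimum is $0$. In general,
$(x_1^2+x_2^2)^j=\frac{4^j}{(j+1)\binom{2j}{j}}\sum_{k=0}^{j}(\bw_k\cdot\bx)^{2j}$
with unit vectors $\bw_k$ at angles $k\pi/(j+1)$. Since $j+1=\lceil n_1/2\rceil+1\le n_1$, the activation $\gact(t)=t^{2j}$ represents your $f_h$ exactly.

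The apolar computation does not produce a square. For $2j=n_1$, Fischer symmetry gives $\prod_i\partial_{\hat\bv_i}\|\bx\|_2^{2j}=\Delta^j\prod_i(\hat\bv_i\cdot\bx)$. This is a multiple of the circle average of $\prod_i(\hat\bv_i\cdot\xi)$, not of its square, and it can vanish (e.g.\ $\partial_1\partial_2\|\bx\|_2^2=0$). For odd $n_1$ your polynomial has degree $n_1+1$, so $DP$ is not even constant.

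\textbf{No polynomial of degree at most $n_1$ can work, and this also defeats your fallback.} For such $P$, $\prod_{i=1}^{n_1}\partial_{\hat\bv_i}P=n_1!\,\widetilde P(\hat\bv_1,\dots,\hat\bv_{n_1})$, where $\widetilde P$ is the symmetric multilinear form of the degree-$n_1$ part. Fix $\hat\bv_2,\dots,\hat\bv_{n_1}$. Then $\bu\mapsto\widetilde P(\bu,\hat\bv_2,\dots,\hat\bv_{n_1})$ is a linear functional on $\R^2$, so it vanishes at some unit vector. Hence $\Psi$ always attains $0$. Equivalently, the union over directions of the hyperplanes you compare against covers the whole $(n_1+1)$-dimensional space of degree-$n_1$ forms, so the dimension count proves nothing. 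A genuine count needs degree $N$ with $N+1>2n_1$, plus a projection step to handle non-polynomial profiles.

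\textbf{How to repair it.} Use a target with higher-order content, such as $\|\bx\|_2^{2n_1}$ or the paper's $e^{\|\bx\|_2^2}$.
\begin{itemize}
\item For $\|\bx\|_2^{2n_1}$, applying $\prod_i\partial_{\hat\bv_i}$ twice gives a positive multiple of $\int_{\mathbb S^1}\prod_i(\hat\bv_i\cdot\xi)^2\,d\sigma(\xi)$. This is where your ``average of squares'' intuition is right. Consequently $Q_{\hat\bv}:=\prod_i\partial_{\hat\bv_i}f_h$ is a nonzero polynomial for every choice of directions.
\item $Df_h$ is then no longer constant. You must show $Df_h=h^{n_1}Q_{\hat\bv}+O(h^{n_1+1})$ uniformly in the directions on a fixed box, e.g.\ $[\frac14,\frac34]^2$, which is valid once $n_1h\le\frac14$.
\item Compactness of $(\mathbb S^1)^{n_1}$ then gives $\min_{\hat\bv}\|Q_{\hat\bv}\|_{L^\infty([1/4,3/4]^2)}>0$, and only after that do you fix $h$ small.
\end{itemize}
This is the mechanism of the paper's Steps~2--3. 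There, $\mathcal L_{\bv}e^{\|\bx\|_2^2}=P_{\bv}e^{\|\bx\|_2^2}$ has top-degree part $2^{n_1}\prod_i(\bv_i\cdot\bx)\ne0$, and a test-function pairing with $\chi\,\mathcal L_{\bv}e^{\|\bx\|_2^2}$ plays the role of your difference operator.
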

See the proof in Appendix~\ref{app:proof_shallow}. 
In fact, that fixed-width single-hidden-layer networks are never dense in $C([0,1]^d)$, $d\ge2$, is classical for continuous activations: it follows from the non-density of sums of $m$ ridge functions \cite{lin1993fundamentality,maiorov1999ridge,pinkus1999approximation}. We still include a direct proof because it is valid for arbitrary locally integrable $\gact$. 

Combining the results of the above two propositions, we obtain the following theorem:
\begin{theorem}[Lower bound of neurons]
\label{thm:lower}
Let $d\ge2$, $\alpha\in(0,1]$, and $\lambda,R>0$. Let
$L\in\N_+$, $\bm n\in\N_+^L$, and
$\gact\in L^1_{\mathrm{loc}}(\R)$. If
$\Net^{\mathrm{skip}}(\bm n;\gact)$
approximates $\Hol$ to arbitrary accuracy in the
sense of Definition~\ref{def:fixed-size}, then
$|\bm n|\ge d+1$.
Consequently, together with Theorem~\ref{thm:exact},
\begin{equation}
\nu\bigl(\Hol\bigr)=d+1
\quad\text{for every }
d\ge2,\ \alpha\in(0,1],\ \lambda,R>0.
\end{equation}
\end{theorem}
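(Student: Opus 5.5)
We argue by contradiction and combine Proposition~\ref{prop:first-layer} with Proposition~\ref{prop:shallow}. Suppose $\Net^{\mathrm{skip}}(\bm n;\gact)$ approximates $\Hol$ to arbitrary accuracy with $|\bm n|\le d$. The first step is to pin down the first layer. If $n_1<d$, Proposition~\ref{prop:first-layer} supplies $f_0\in\Hol$ and $\varepsilon_0>0$ such that every $\Phi\in\Net^{\mathrm{skip}}(\bm n;\gact)$ satisfies $\|f_0-\Phi\|_{\Linf}\ge\varepsilon_0$. This contradicts arbitrary accuracy, so $n_1\ge d$. Since $n_1\le|\bm n|\le d$, we get $n_1=d$ and $|\bm n|=d$, which forces $L=1$ and $\bm n=(d)$.

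The second step disposes of this remaining case. With a single hidden layer, $\Net^{\mathrm{skip}}((d);\gact)=\Net((d);\gact)$, because the skip output $\sum_{k=1}^{L}\bm a_k^\top h_k+c$ reduces to $\bm a_1^\top h_1+c$. Proposition~\ref{prop:shallow} with $n_1=d$ then gives $f_h\in\Hol$ and $\varepsilon_1>0$ with $\inf_{\Phi}\|f_h-\Phi\|_{\Linf}\ge\varepsilon_1$, again a contradiction. Hence $|\bm n|\ge d+1$.

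Only activations $\gact\in L^1_{\mathrm{loc}}(\R)$ need to be checked here. Proposition~\ref{prop:first-layer} holds for arbitrary $\gact$, and the hypothesis of Proposition~\ref{prop:shallow} matches that of the theorem exactly.

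For the equality $\nu(\Hol)=d+1$, the lower bound above applies in particular to the plain class $\Net(\bm n;\gact)\subseteq\Net^{\mathrm{skip}}(\bm n;\gact)$ used in the definition of $\nu$, so $\nu(\Hol)\ge d+1$. For the matching upper bound, Theorem~\ref{thm:exact} gives $\nu(\Hol)\le d+1$, since $\dexa$ is locally bounded and hence locally integrable.

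None of these steps is genuinely difficult once the two propositions are available. The only point requiring care is the counting argument, specifically that $n_1\ge d$ and $|\bm n|\le d$ leave the single-layer architecture $(d)$ as the only possibility.
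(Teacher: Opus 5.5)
Your proposal is correct and is essentially the paper's proof: the paper splits on $L\ge2$ (which forces $n_1\le d-1$ and invokes Proposition~\ref{prop:first-layer}) versus $L=1$ (where $\Net^{\mathrm{skip}}$ coincides with $\Net((n_1);\gact)$ and Proposition~\ref{prop:shallow} applies), whereas you split on $n_1<d$ versus $n_1=d$, which is the same argument reorganized. Your remark that $\Net(\bm n;\gact)\subseteq\Net^{\mathrm{skip}}(\bm n;\gact)$, so that the bound transfers to the definition of $\nu$, makes explicit a step the paper leaves implicit.
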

See the proof in Appendix ~\ref{app:proof_lower}. 
The hypotheses $\gact\in L^1_{\mathrm{loc}}(\R)$ and $d\ge2$ are essential; see the two remarks below.
\begin{remark}[The case $d=1$]
The above results (Propositions~\ref{prop:first-layer}--\ref{prop:shallow} and Theorem~\ref{thm:lower}) need $d\ge2$. For $d=1$, \cite{guliyev2016single} construct an activation for which a single hidden neuron approximates every $f\in C([0,1])$ to arbitrary accuracy, so $\nu=1$ when $d=1$.
\end{remark}
\begin{remark}[Continuous activations and $L^p$ approximation]
The discontinuity of the activation cannot be dispensed with for uniform approximation
with the optimal neuron count $d+1$ established in
Theorems~\ref{thm:exact} and~\ref{thm:lower}, at least when $d=2$:
Proposition~\ref{prop:cont} shows that no network with three hidden
neurons, a continuous activation, and no skip connections can achieve
arbitrary accuracy in $L^\infty([0,1]^2)$.
If the error criterion is
weakened to $L^p([0,1]^d)$, $1\le p<\infty$, however, a $(d,1)$
architecture with a continuous activation can achieve arbitrary accuracy
(see Theorem~\ref{thm:contLp}).
\end{remark}

\section{Elementary activation: $d+3$ neurons (or $d+2$ with a skip)}
\label{sec:elementary}

The activation $\dexa$ in Theorem~\ref{thm:exact} performs digit
extraction in a single step by encoding every possible digit table in
its graph. 
We now replace this lookup mechanism with elementary radix
arithmetic. Store the quantized values as the base-$B$ integer $A_f=\sum_\ell q_\ell B^\ell$. Its $\ell$-th digit can be recovered by $q_\ell=\lfloor A_f/B^\ell\rfloor-B\lfloor A_f/B^{\ell+1}\rfloor$. 
Consequently, retrieving the digit indexed by $r(\bx)$ requires computing
$B^{-r(\bx)}$ and applying the floor function twice. This motivates an
elementary activation that combines exponential and floor operations as
in \eqref{eq:eact-intro}.
Let $f\in\Hol$, $\varepsilon>0$, $M,K,\delta,B,q_\ell$ as in~\eqref{eq:parameters}--\eqref{eq:quantized-values}, and
\begin{equation}
A_f:=\sum_{\ell=0}^{K-1}q_\ell B^{\ell}\in\{0,\dots,B^K-1\},\quad
T_0(\bx):=\Bigl\lfloor\frac{A_f}{B^{r(\bx)}}\Bigr\rfloor,\quad
T_1(\bx):=\Bigl\lfloor\frac{A_f}{B^{r(\bx)+1}}\Bigr\rfloor,
\label{eq:Af}
\end{equation}
so that $T_0(\bx)-BT_1(\bx)=q_{r(\bx)}$ (all base-$B$ digits at positions greater than $r(\bx)$ cancel). 
Therefore, we define the outputs of the hidden layers and the affine output layer of the elementary activation neural network with $\bc:=(1,M+1,\dots,(M+1)^{d-1})^{\top}$,
\begin{align}
\qquad\qquad \bu(\bx)&:=\eact(M\bx)=\bigl(\lfloor Mx_1\rfloor,\dots,\lfloor Mx_d\rfloor\bigr)^{\top}, &&\text{(1st hidden layer)}\label{eq:el-layer1}\\
\qquad\qquad v(\bx)&:=\eact\bigl(-1-\log_2(B)\,\bc^{\top}\bu(\bx)\bigr)
=\tfrac12B^{-r(\bx)}-1, &&\text{(2nd hidden layer)}\label{eq:el-layer2}\\
\qquad\qquad \bw(\bx)&:=\eact\Bigl(\begin{bmatrix}2A_f\\ 2A_f/B\end{bmatrix}\bigl(v(\bx)+1\bigr)\Bigr)
=\begin{bmatrix}T_0(\bx)\\ T_1(\bx)\end{bmatrix}, &&\text{(3rd hidden layer)}\label{eq:el-layer3}\\
\qquad\qquad \Phi^{\eact}_{f,\varepsilon}(\bx)&:=-R+\delta\,[1,\,-B]\,\bw(\bx)=-R+\delta\,q_{r(\bx)}=\widehat f_{r(\bx)}. &&\text{(output layer)}\label{eq:el-output}
\end{align}
Here $Mx_j\ge0$ in~\eqref{eq:el-layer1} selects the floor branch of $\sigma(t)$; the pre-activation part in~\eqref{eq:el-layer2} is $\le-1$ and selects the exponential branch, and $\log_2B\in\N$ because $B$ is a power of two; the pre-activations in~\eqref{eq:el-layer3} are nonnegative and select the floor branch. The network has widths $(d,1,2)$ (see Figure~\ref{fig:nn_basic}).

\paragraph{Saving one neuron with a skip connection.}
The first neuron at the third hidden layer~\eqref{eq:el-layer3} can be dispensed with if the output layer may read the second hidden layer directly, because $A_f/B^{r(\bx)}$ itself differs from $\lfloor A_f/B^{r(\bx)}\rfloor$ by the fractional part $\eta_{r(\bx)}:=A_fB^{-r(\bx)}-\lfloor A_fB^{-r(\bx)}\rfloor\in[0,1)$, whose contribution $\delta\eta_{r(\bx)}$ to the output is below the quantization step. Thus, keeping~\eqref{eq:el-layer1}--\eqref{eq:el-layer2} and setting
\begin{equation}
w^{\mathrm{skip}}(\bx):=\eact\bigl(\tfrac{2A_f}{B}(v(\bx)+1)\bigr)=T_1(\bx),
\quad
\Phi^{\eact,\mathrm{skip}}_{f,\varepsilon}(\bx):=-R+2\delta A_f\bigl(v(\bx)+1\bigr)-\delta B\,w^{\mathrm{skip}}(\bx),
\label{eq:skip-network}
\end{equation}
one obtains $\Phi^{\eact,\mathrm{skip}}_{f,\varepsilon}(\bx)=-R+\delta\bigl(A_fB^{-r(\bx)}-BT_1(\bx)\bigr)=\widehat f_{r(\bx)}+\delta\eta_{r(\bx)}$, a network in $\Net^{\mathrm{skip}}((d,1,1);\eact)$ with $d+2$ hidden neurons (see Figure~\ref{fig:nn_skip}).

These two networks also provide uniform approximation of $\Hol$:
\begin{theorem}[Elementary activation: $d+3$ neurons, or $d+2$ with a skip connection]
\label{thm:elementary}
Let $d\in\N_+$, $\alpha\in(0,1]$, $\lambda,~R>0$.
For every $f\in\Hol$ and every $\varepsilon>0$, the networks
$\Phi^{\eact}_{f,\varepsilon}\in\Net((d,1,2);\eact)$
and
$\Phi^{\eact,\mathrm{skip}}_{f,\varepsilon}
\in\Net^{\mathrm{skip}}((d,1,1);\eact)$,
defined respectively by~\eqref{eq:el-layer1}--\eqref{eq:el-output}
and~\eqref{eq:el-layer1}, \eqref{eq:el-layer2}, \eqref{eq:skip-network}
satisfy
\begin{equation}
\bigl\|f-\Phi^{\eact}_{f,\varepsilon}\bigr\|_{\Linf}\le\varepsilon
\quad\text{and}\quad
\bigl\|f-\Phi^{\eact,\mathrm{skip}}_{f,\varepsilon}\bigr\|_{\Linf}
\le\varepsilon.
\end{equation}
\end{theorem}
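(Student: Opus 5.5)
The plan is to verify the closed-form identities in \eqref{eq:el-layer1}--\eqref{eq:el-output} and \eqref{eq:skip-network} layer by layer. Once the outputs are identified, the bound follows from the pointwise estimate \eqref{eq:basic-error} on each cell $Q_\ell$. For the skip variant one extra quantization-step argument is needed.

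\textbf{Layers one and two.} Since $Mx_j\ge0$, the floor branch of $\eact$ applies, so $u_j=\lfloor Mx_j\rfloor=m_j(x_j)$. This gives $\bc^\top\bu(\bx)=r(\bx)$ by \eqref{eq:address}. The second pre-activation is $-1-\log_2(B)\,r(\bx)$. Because $B\ge2$ and $r(\bx)\ge0$, it is at most $-1<0$, so the exponential branch applies and
\[
v(\bx)=2^{-1-r(\bx)\log_2 B}-1=\tfrac12B^{-r(\bx)}-1 .
\]

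\textbf{Layer three.} We have $2A_f(v+1)=A_fB^{-r(\bx)}\ge0$ and $\tfrac{2A_f}{B}(v+1)=A_fB^{-r(\bx)-1}\ge0$, so the floor branch yields $T_0(\bx)$ and $T_1(\bx)$. The identity $T_0-BT_1=q_{r(\bx)}$ comes from writing $A_f=\sum_\ell q_\ell B^\ell$ with $0\le q_\ell\le B-1$. Split the sum as $\sum_{\ell<r}q_\ell B^{\ell}$, which is $<B^r$, plus $B^r\sum_{\ell\ge r}q_\ell B^{\ell-r}$. The low-order part then disappears under the floor, giving $T_0=\sum_{\ell\ge r}q_\ell B^{\ell-r}$. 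By the same argument $T_1=\sum_{\ell\ge r+1}q_\ell B^{\ell-r-1}$, and the difference $T_0-BT_1$ is $q_r$. This requires $q_\ell\le B-1$, which is guaranteed by \eqref{eq:quantized-values}. The output is therefore exactly $\widehat f_{r(\bx)}$. For $\bx\in Q_\ell$, \eqref{eq:basic-error} gives $|f(\bx)-\Phi^{\eact}_{f,\varepsilon}(\bx)|<\varepsilon$, and taking the supremum over the cells $Q_\ell$ (which cover $[0,1]^d$) gives the first bound.

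\textbf{Skip network.} By the same computation,
\[
\Phi^{\eact,\mathrm{skip}}_{f,\varepsilon}(\bx)=\widehat f_{r}+\delta\eta_r,\qquad \eta_r\in[0,1),\quad r=r(\bx).
\]
This is the main obstacle: the crude bound $\lambda M^{-\alpha}+\delta+\delta$ could reach $3\varepsilon/2$, so we cannot simply add $\delta\eta_r$ to \eqref{eq:basic-error}. The fix is to use the one-sidedness of the quantization, $0\le f(\bx^{(r)})-\widehat f_r<\delta$. Then $f(\bx^{(r)})-(\widehat f_r+\delta\eta_r)$ is a difference of two numbers in $[0,\delta)$, so it lies in $(-\delta,\delta)$ and has absolute value $<\delta$. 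Adding the Hölder term $\lambda M^{-\alpha}\le\varepsilon/2$ and using $\delta\le\varepsilon/2$ gives error $<\varepsilon$. Finally, we check that the architecture is legitimate. The output of the skip network reads $v$ from the second hidden layer and $w^{\mathrm{skip}}$ from the third, so it is affine in the hidden layers, and the network lies in $\Net^{\mathrm{skip}}((d,1,1);\eact)$.
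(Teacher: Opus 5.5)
Your proposal is correct and follows essentially the same route as the paper. The layer identities you verify are the ones stated with \eqref{eq:el-layer1}--\eqref{eq:el-output} and \eqref{eq:skip-network}, and for the skip variant the paper uses the same one-sided quantization argument: it combines $0\le f(\bx^{(\ell)})-\widehat f_\ell<\delta$ with $0\le\delta\eta_\ell<\delta$ to get an error $<\delta$ at the grid point, then adds the Hölder term $\lambda M^{-\alpha}\le\varepsilon/2$.
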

See the proof in Appendix~\ref{app:elementary}. Compared with the first construction of \cite{yarotsky2021elementary}, which also uses $d+2$ neurons and a skip connection, $\Phi^{\eact,\mathrm{skip}}_{f,\varepsilon}$ requires only a single shared activation, and all of its parameters are rational and given in closed form (see Table~\ref{tab:numerics-comparison}).

\section{Bit Complexity of Fixed-Size Approximators}
\label{sec:bits}
In each of the three constructions, the target function is encoded by
a single integer, namely, $N_f$ in~\eqref{eq:Jf-Nf} or $A_f$
in~\eqref{eq:Af}. Although the number of neurons is fixed, the size of
this integer increases as the prescribed accuracy improves. A natural
measure of storage and computational cost is therefore the bit complexity defined
in~\eqref{eq:bit-complexity}. This complexity is readily computed because all our network parameters are rational: $\delta$ is dyadic, $B$ is a power of two, and the remaining quantities are integers for $R\in\N_+$. The preceding discussion leads to the following theorem:

\begin{theorem}[Bit complexity of the constructions]
\label{thm:bits}
Let $d\ge2$, $\alpha\in(0,1]$, $\lambda>0$, and $R\in\N_+$. 
For the networks $\Phi^{\dexa}_{f,\varepsilon}$, $\Phi^{\eact}_{f,\varepsilon}$, and $\Phi^{\eact,\mathrm{skip}}_{f,\varepsilon}$ in Theorems~\ref{thm:exact} and~\ref{thm:elementary}, as $\varepsilon\to0$,
$\sup_{f\in\Hol}\bit\bigl(\Phi^{\dexa}_{f,\varepsilon}\bigr)
=\Theta\bigl(\varepsilon^{-d/\alpha}\log\frac1\varepsilon\bigr)$,
$\sup_{f\in\Hol}\bit\bigl(\Phi^{\eact}_{f,\varepsilon}\bigr)
=\Theta\bigl(\varepsilon^{-d/\alpha}\log\frac1\varepsilon\bigr)$, and
$\sup_{f\in\Hol}\bit\bigl(\Phi^{\eact,\mathrm{skip}}_{f,\varepsilon}\bigr)
=\Theta\bigl(\varepsilon^{-d/\alpha}\log\frac1\varepsilon\bigr)$.
\end{theorem}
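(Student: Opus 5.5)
The plan is to bound the bit complexity from above and below separately for each network. In every construction all parameters except one are "small": their bit lengths are $O(\log(1/\varepsilon))$. The single large parameter, $N_f$ or $A_f$, has bit length $\Theta(K\log B)=\Theta(\varepsilon^{-d/\alpha}\log\frac1\varepsilon)$ for a suitable worst-case $f$. The whole argument therefore reduces to bookkeeping with the rates~\eqref{eq:parameter-rates}.

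\textbf{Small parameters.} First list the parameters of each network; they are detailed in Appendix~\ref{app:exact}, and for the elementary networks they can be read off \eqref{eq:el-layer1}--\eqref{eq:skip-network}.
\begin{itemize}[leftmargin=*,itemsep=1pt,topsep=2pt]
\item For $\Phi^{\dexa}_{f,\varepsilon}$, the remaining parameters are the first-layer weights $M$ and biases $-(M+1)$, the second-layer weights $(M+1)^{j-1}$, the extra bias contribution $\sum_j(M+1)^{j-1}(M+1)$ absorbed into the second-layer bias, and the output parameters $\delta$ and $-R$.
\item For the elementary networks, they are $M$, $-1$, $-\log_2(B)(M+1)^{j-1}$, $B$, $\delta$ and $-R$.
\end{itemize}
Since $d$ is fixed, each of these numbers has magnitude polynomial in $M$, $B$ and $1/\delta$. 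By~\eqref{eq:parameter-rates} each therefore contributes $O(\log\frac1\varepsilon)$ bits, and there are $O(d)$ of them.

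\textbf{The large parameter.} The parameters carrying $A_f$ are $2A_f$, $2A_f/B$ and $2\delta A_f$, where $\delta$ and $B$ are powers of two. Each has numerator and denominator at most $2B^K$ times a power of two of size $O(\log\frac1\varepsilon)$ bits, so each costs at most $K\log_2B+O(\log\frac1\varepsilon)$ bits. Since $K\log_2 B=\Theta(\varepsilon^{-d/\alpha}\log\frac1\varepsilon)$, this gives the elementary upper bounds.

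For $\Phi^{\dexa}_{f,\varepsilon}$, the second-layer bias is $N_f+O(M^d)$ with $N_f=S_n+nJ_f$. Here I need $\log S_{n+1}$: since $S_{n+1}\le n\cdot n(n+1)^n$, we get $\log_2 S_{n+1}=O(n\log n)$. With $n=\max\{K,B-1\}=K$ for small $\varepsilon$ (because $K\gg B$ when $d\ge2$, as $d/\alpha>1$), this is $O(K\log K)=O(\varepsilon^{-d/\alpha}\log\frac1\varepsilon)$. The term $nJ_f<n(n+1)^n$ obeys the same bound.

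\textbf{Lower bounds.} For the lower bounds I will pick $f$ so that the large integer cannot be small. For $\Phi^{\dexa}$ this is automatic for every $f$: the bias is at least $S_n\ge(n-1)n^{n-1}$, so it has at least $(n-1)\log_2 n=\Omega(K\log K)$ bits. For $A_f$ I need the top digit $q_{K-1}$ to be nonzero, so that $A_f\ge B^{K-1}$. Take $f\equiv R$ (or $f\equiv0$): then $q_\ell=\lfloor R/\delta\rfloor\ge1$ for every $\ell$. Consequently $2A_f$, and likewise $2\delta A_f$ after reduction (dividing by a power of two removes only $O(\log\frac1\varepsilon)$ bits), has at least $(K-1)\log_2B-O(\log\frac1\varepsilon)$ bits. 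Finally, $\log_2B=\Theta(\log\frac1\varepsilon)$ and $K=\Theta(\varepsilon^{-d/\alpha})$.

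\textbf{Main obstacle.} The subtle point is the lowest-terms convention: I must check that the dyadic denominators and powers of two cancel only $O(\log\frac1\varepsilon)$ bits, which holds because all denominators are powers of two bounded by $2B/\delta$. I also need to confirm that $n=K$ eventually, so that the $\dexa$ bound is $K\log K$ rather than $B\log B$. This uses $d\ge2$ and $\alpha\le1$ together with~\eqref{eq:parameter-rates}; even if $n=B-1$ held, the bound would still be $O(\varepsilon^{-d/\alpha}\log\frac1\varepsilon)$.
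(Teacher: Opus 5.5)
Your proposal is correct and follows essentially the same route as the paper: it bounds every parameter other than $N_f$ or $A_f$ by $O(\log\frac1\varepsilon)$ bits, uses $S_n\le N_f<S_{n+1}$ with $n=K$ for small $\varepsilon$ to get $\Theta(n\log n)$ bits for $\Phi^{\dexa}$, and uses a constant target function to force $\bit(A_f)=\Theta(K\log_2B)$. There are two harmless slips: for $f\equiv R$ one has $q_\ell=\lfloor 2R/\delta\rfloor$, while your $\lfloor R/\delta\rfloor$ is the value for $f\equiv0$; either is $\ge1$, and the paper uses $\lfloor 2R/\delta\rfloor\ge B/2$ to get $A_f\ge\frac12B^K$. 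Also, in the skip variant the output bias is $-R+2\delta A_f$ rather than $-R$; it costs $\bit(A_f)+O(\log\frac1\varepsilon)$ bits and changes nothing.
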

The proof is given in Appendix~\ref{app:bits}.

We next compare this complexity with the information-theoretic lower
bound. The classical Kolmogorov--Tikhomirov entropy estimate
\cite[Theorem~XIV, equation~(68)]{kolmogorov1959varepsilon} states
that the \emph{$\varepsilon$-entropy} of $\Hol$ in the uniform norm,
defined as the base-$2$ logarithm of the minimum cardinality of an
$\varepsilon$-cover of $\Hol$, is
$\Theta(\varepsilon^{-d/\alpha})$. Equivalently, the minimum
cardinality of such a cover is
$2^{\Theta(\varepsilon^{-d/\alpha})}$.
For a fixed architecture, any
collection of $N$ distinct rational-parameter networks has worst-case
bit complexity $\Omega(\log_2 N)$. Consequently, any family of
networks approximating the entire class to accuracy $\varepsilon$ must
have worst-case bit complexity
$\Omega(\varepsilon^{-d/\alpha})$. This yields the following result:
\begin{proposition}[Bit complexity lower bound]
\label{prop:entropy}
Let $d\in\N_+$, $\alpha\in(0,1]$, $\lambda,R>0$. Fix any $L\in\N_+$, $\bm n\in\N_+^L$, and activation $\gact:\R\to\R$.
For each $\varepsilon>0$ and $f\in\Hol$, let
$\Phi_{f,\varepsilon}^\phi\in\Net^{\mathrm{skip}}(\bm n;\gact)$ be any network with rational parameters satisfying $\|f-\Phi_{f,\varepsilon}^\phi\|_{\Linf}\le\varepsilon$.
Then
$\sup_{f\in\Hol}\bit(\Phi^\phi_{f,\varepsilon})=
\Omega(\varepsilon^{-d/\alpha})$ as 
$\varepsilon\to0$.
\end{proposition}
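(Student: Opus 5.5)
I will argue by counting. The idea is to convert the accuracy requirement into a lower bound on how many distinct networks are needed, and then convert that number into a bound on the bit budget. Fix $\varepsilon>0$ and put $b:=\sup_{f\in\Hol}\bit(\Phi^\phi_{f,\varepsilon})$; if $b=\infty$ there is nothing to prove. Let $\mathcal F_\varepsilon:=\{\Phi^\phi_{f,\varepsilon}:f\in\Hol\}$.

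\textbf{Step 1 (entropy lower bound on the number of networks).} By hypothesis every $f\in\Hol$ lies within $\varepsilon$ of some member of $\mathcal F_\varepsilon$ in $L^\infty$, so $\mathcal F_\varepsilon$ is an $\varepsilon$-net for $\Hol$. Two technical points need care here.
\begin{itemize}
\item The Kolmogorov--Tikhomirov entropy is stated for covers or nets whose centres may lie outside the class. If the cited definition requires centres inside $\Hol$, I will pass to $2\varepsilon$: each ball that meets $\Hol$ can be recentred at a point of $\Hol$ with radius $2\varepsilon$.
\item By \cite[Theorem~XIV]{kolmogorov1959varepsilon} the $2\varepsilon$-entropy is still $\ge c\,\varepsilon^{-d/\alpha}$ for a constant $c=c(d,\alpha,\lambda,R)>0$ and all small $\varepsilon$.
\end{itemize}
Hence $\#\mathcal F_\varepsilon\ge 2^{c\varepsilon^{-d/\alpha}}$. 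If I want to avoid quoting the theorem, I can use a packing argument instead. Take the bump functions $f_{\bm s}=\sum_{\ell}s_\ell\,\lambda' M^{-\alpha}\psi(M(\bx-\bx^{(\ell)}))$ with signs $s\in\{\pm1\}^{K}$, with $M\asymp\varepsilon^{-1/\alpha}$ chosen so that the bump height exceeds $2\varepsilon$. These are $4\varepsilon$-separated, so no single network can be $\varepsilon$-close to two of them. This gives $\#\mathcal F_\varepsilon\ge 2^{K}=2^{\Theta(\varepsilon^{-d/\alpha})}$ directly.

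\textbf{Step 2 (few networks of small bit complexity).} Fix the architecture, so the set $P$ of parameter slots has size $p=p(\bm n,d)$ independent of $\varepsilon$. A network with rational parameters is determined by two pieces of data:
\begin{itemize}
\item which slots are nonzero, giving $2^p$ choices;
\item for each nonzero slot, the string consisting of the sign bit, the binary form of $|a|$ and the binary form of $b$, with total length $\sum\bit(\theta)\le b$.
\end{itemize}
The catch is that the concatenation is not self-delimiting. I will handle this by also recording the split points, i.e.\ the lengths $\bit(|a|)$ and $\bit(b)$ for each slot. There are at most $(b+1)^{2p}$ choices for these. Given the split points, the concatenated strings have length at most $b$, so there are at most $2^{b+1}$ of them. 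Consequently
\[
\#\mathcal F_\varepsilon\le 2^p(b+1)^{2p}2^{b+1},
\]
that is, $\log_2\#\mathcal F_\varepsilon\le b+O(p\log(b+2))$.

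\textbf{Step 3 (combine).} Steps 1 and 2 give $c\,\varepsilon^{-d/\alpha}\le b+2p\log_2(b+2)+p+1$. Since $p$ is fixed, the right-hand side is at most $2b+C_p$. Therefore $b\ge \tfrac c2\varepsilon^{-d/\alpha}-C_p$, which is $\Omega(\varepsilon^{-d/\alpha})$.

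The main obstacle is Step 1: invoking the entropy estimate in exactly the form needed. That means handling centres outside the class and confirming that the lower-bound constant depends only on $(d,\alpha,\lambda,R)$. The explicit sign-pattern packing avoids both issues, and I would present that self-contained version as the primary route. Step 2 is routine once the non-self-delimiting encoding is handled via the polynomial overhead factor.
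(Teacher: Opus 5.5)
Your proposal is correct and follows the same route as the paper's argument in the paragraph preceding the proposition: a Kolmogorov--Tikhomirov lower bound $2^{\Omega(\varepsilon^{-d/\alpha})}$ on the number of distinct networks needed (or, in your self-contained variant, an explicit sign-pattern packing), combined with the fact that a fixed architecture has only $2^{b+O(\log b)}$ rational-parameter networks of bit complexity at most $b$. You are more careful than the paper's sketch on two points it leaves implicit: you handle the non-self-delimiting encoding by recording the split points, and you address whether cover centres must lie in the class.
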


Combining Proposition~\ref{prop:entropy} with
Theorem~\ref{thm:bits}, our constructions are optimal up to a
logarithmic factor. In particular, the polynomial storage requirement
$\varepsilon^{-d/\alpha}$ is unavoidable for any finite-bit
implementation of a fixed-size universal approximator. The ``curse of
memory'' identified by \cite{zhang2022deep} is therefore not a defect
of a particular construction but an information-theoretic necessity.

The inadequacy of neuron count alone is also reflected in VC
dimension. 
Indeed, consider $\Net((d,1);\dexa)$ and any finite set of distinct points $\bx_1,\ldots,\bx_N\in[0,1]^d$ with arbitrary labels $y_1,\ldots,y_N\in\{-1,1\}$. There exists a bounded Lipschitz function $f$ such that $f(\bx_i)=y_i$ for every $i$. By Theorem~\ref{thm:exact}, there is a network $\Phi\in\Net((d,1);\dexa)$ satisfying $\|f-\Phi\|_{\Linf}\le1/2$, and hence $y_i\Phi(\bx_i)>0$, $i=1,\ldots,N$. Thus, the classifier $\operatorname{sign}\Phi$ realizes the prescribed labels. Since the points and labels were arbitrary, $\Net((d,1);\dexa)$ has infinite VC dimension. The same argument applies
to the network classes in Theorem~\ref{thm:elementary}. Hence, for
these superexpressive activations, the number of neurons or parameters
alone does not control statistical complexity.

\section{Numerical verification in exact arithmetic}
\label{sec:numerics}
The explicit construction of $\Phi_{f,\varepsilon}^\varrho$ in Theorem~\ref{thm:exact} permits
exact-arithmetic evaluation. We carry this out for $d=2,3$ to provide
a computational check of the construction.

We consider four functions $f_1,\ldots,f_4$ on $[0,1]^2$ and two
functions $f_5,f_6$ on $[0,1]^3$, with certified
H\"older parameters $(\alpha,\lambda,R)$ in the
$\ell^\infty$ norm:

\begin{align*}
f_1(\bx)&=\tfrac12\sin(\pi x_1)\cos(\pi x_2), & (\alpha,\lambda,R)&=(1,\pi,1);\\
f_2(\bx)&=|x_1-\tfrac12|^{1/2}+\tfrac12|x_2-x_1|^{1/2}, & (\alpha,\lambda,R)&=(\tfrac12,\tfrac74,2);\\
f_3(\bx)&=\exp\bigl[\sin(\pi p)\cos(\pi p)\bigr]\log\bigl[1+\tfrac{p^2}{2+p}\bigr]+\tfrac{\sin[3\pi(x_1+x_2)]}{1+p^2}, & (\alpha,\lambda,R)&=(1,19,1);\\
f_4(\bx)&=e^{-3\,\mathrm{dist}(\bx,\Gamma)}\cos\bigl(4\pi\,\mathrm{dist}(\bx,\Gamma)\bigr), & (\alpha,\lambda,R)&=(1,18.3,1);\\
f_5(\bx)&=\tfrac12\sin(\pi x_1x_2)\cos(\pi x_3)+\tfrac14\bigl|x_2-x_1x_3\bigr|, & (\alpha,\lambda,R)&=(1,\tfrac{39}{10},1);\\
f_6(\bx)&=\tfrac12\cos(\pi\,\|\bx-\bc\|_2),\quad \bc=(0.3,0.6,0.2),& (\alpha,\lambda,R)&=(1,\tfrac{11}{4},1).
\end{align*}
For $f_3$, $p:=x_1x_2$. For $f_4$, $\Gamma$ denotes the boundary of the level-$3$ Koch snowflake obtained by applying three iterations of the classical outward Koch construction~\cite{koch1904courbe} to an
upward-pointing equilateral triangle centered at
$(\frac12,\frac12)$ with circumradius $0.36$, and define
$\mathrm{dist}(\bx,\Gamma):=\min_{\by\in\Gamma}\|\bx-\by\|_2$.
\begin{figure}[htbp!]
\centering
\includegraphics[width=0.95\linewidth]{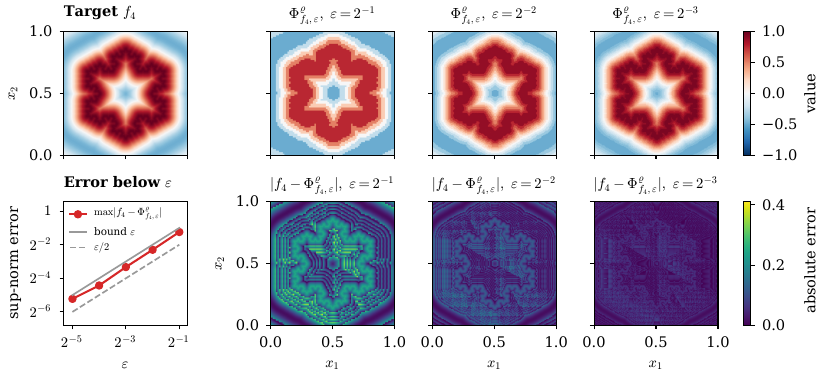}
\vspace{-4mm}
\caption{Numerical verification of Theorem~\ref{thm:exact} for the approximation of $f_4$ by $\Phi^\varrho_{f_4,\varepsilon}$. Top: The target function $f_4$ and the network outputs for $\varepsilon=2^{-1},2^{-2},2^{-3}$ ($M=74,147,293$). Bottom: The sup-norm error over the $3{,}004$ test points versus $\varepsilon$, and the absolute errors $|f_4-\Phi^\varrho_{f_4,\varepsilon}|$ on a $161^2$ grid. The error plot additionally includes $\varepsilon=2^{-4},2^{-5}$.}
\label{fig:numerics-f4}
\end{figure}
We perform the verification over a sequence of dyadic accuracies starting from $\varepsilon=2^{-1}$, with the finest tested accuracy chosen so that $\bit(N_f)$ is on the order of $10^7$ bits. Table~\ref{tab:numerics-comparison} in Appendix \ref{app:fig_tab} reports the parameters $\delta,M,K=(M+1)^d,B$ from~\eqref{eq:parameters}, the bit length $\bit(N_f)$ from~\eqref{eq:Jf-Nf}, and the maximal error $\max|f_i-\Phi^{\dexa}_{f_i,\varepsilon}|$ over $3{,}004$ exactly represented test points for $d=2$ and $3{,}008$ for $d=3$, comprising $2{,}000$ random dyadic points, $1{,}000$ boundary points, and the $2^d$ corners. Figure~\ref{fig:numerics-f4} shows the target $f_4$, the network outputs $\Phi^{\dexa}_{f_4,\varepsilon}$, and the pointwise errors on a $161^2$ display grid at the three common accuracy levels $\varepsilon=2^{-1},2^{-2},2^{-3}$, together with the maximal errors over the full sequence of tested accuracies. The visualizations for $f_1,f_2,f_3,f_5$, and $f_6$ are provided in Appendix~\ref{app:fig_tab} (Figures~\ref{fig:numerics-f1}--\ref{fig:numerics-errors}). For the two-dimensional targets $f_1,\ldots,f_4$, the maximal-error plots are included in their respective visualization figures; for the three-dimensional targets $f_5$ and $f_6$, they are presented separately in Figure~\ref{fig:numerics-errors}, together with those for $\Phi^\sigma_{f,\varepsilon}$ and $\Phi^{\sigma,\mathrm{skip}}_{f,\varepsilon}$ in Theorem~\ref{thm:elementary}. All observed errors are below $\varepsilon$ as guaranteed, and both $\bit(N_f)/(n\log_2(n+1))$ (for $\Phi^\dexa_{f,\varepsilon}$) and $\bit(A_f)/(K\log_2B)$ (for $\Phi^\sigma_{f,\varepsilon}$ and $\Phi^{\sigma,\mathrm{skip}}_{f,\varepsilon}$) approach $1$ as $\varepsilon$ decreases in every case, consistently with Theorem~\ref{thm:bits}. 

\paragraph{Implementation.}
The dyadic quantities and nonintegral rational parameters are stored exactly as \texttt{Fraction} objects; $R,M,B,K,n$ and the integer parameters as Python integers; $(q_\ell)_{\ell=0}^{K-1}$ as a temporary \texttt{int64} array; and the large integers $S_n,J_f,N_f$ as arbitrary-precision \texttt{mpz} objects. Arbitrary-precision arithmetic is necessary because $\bit(N_f)$ reaches
the order of $10^7$, far beyond the precision of double precision. 

Now we discuss how to deal with the bulky lookup table definition of $\varrho$ in the verification. As shown in \eqref{eq:exact-layer1}, the first hidden layer uses only the
negative branch, which is evaluated directly using the floor function.
In the second hidden layer \eqref{eq:exact-layer2}, since
$t=N_f+\ell=S_n+nJ_f+\ell$,
dividing $t-S_n$ by $n$ yields quotient $J_f$ and remainder $\ell$.
The activation $\varrho$ then extracts the corresponding
base-$(n+1)$ digit of $J_f$:
$\varrho(t)
=
\big\lfloor
\frac{J_f}{(n+1)^{n-1-\ell}}
\big\rfloor-
(n+1)
\big\lfloor
\frac{J_f}{(n+1)^{n-\ell}}
\big\rfloor
=q_\ell$.
Although constructing $N_f$ requires the values
$(q_\ell)_{\ell=0}^{K-1}$, once $N_f$ has been formed, pointwise network evaluation recovers each required $q_\ell$ directly from $N_f$,
without repeatedly accessing a lookup table.

\section{Discussion}
\label{sec:discussion}

\paragraph{Limitations.}
Our results concern representational capacity rather than practical
training. The encoding integers $N_f$ and $A_f$ may require
$\Theta(\varepsilon^{-d/\alpha}\log(1/\varepsilon))$ bits, far exceeding
standard fixed-precision floating-point arithmetic in the gradient descent-based training in actual function fitting. Moreover, in the $(d,1)$ construction, $\dexa$ is
a lookup function rather than a practical nonlinearity, and its evaluation
can be computationally expensive.

\paragraph{Open problems.}
(i) Can the optimal count $d+1$ be attained with an elementary
discontinuous activation, or can $d+2$ neurons suffice without a
skip connection?
(ii) What is the minimal neuron count for uniform approximation with a continuous activation since $d+1$ neurons do not suffice when $d=2$?
(iii) Can a fixed architecture exploit the bounded differences
between neighboring quantized values to attain bit complexity
$\Theta(\varepsilon^{-d/\alpha})$, removing the logarithmic factor in our constructions?

\bibliography{refs}
\bibliographystyle{plain}

\appendix
\section*{Appendix}
\section{Details of Section~\ref{subsec:upper}}
\label{app:exact}

\paragraph{The activation $\varrho$ is well defined.}
Fix $n\in\N_+$. The unit intervals $I_{n,j,r}$ of~\eqref{eq:unit-intervals}, indexed by integers $j$ and $r$ such that $0\le j<(n+1)^n$ and $0\le r<n$, form a partition of
$[S_n,S_{n+1})$.
Indeed, let $k=nj+r$ ranging over $\{0,\dots,n(n+1)^n-1\}$,
\begin{equation}
    \bigcup_{j=0}^{(n+1)^n-1}
    \bigcup_{r=0}^{n-1}
    I_{n,j,r}
    =
    \bigcup_{k=0}^{n(n+1)^n-1}
    [S_n+k,S_n+k+1) =
    [S_n,S_n+n(n+1)^n) =
    [S_n,S_{n+1}).
\label{eq:activation-block-partition}
\end{equation}
Since $S_1=0$ and $S_n\to\infty$, it follows that
$    \bigcup_{n=1}^{\infty}
    \bigcup_{j=0}^{(n+1)^n-1}
    \bigcup_{r=0}^{n-1}
    I_{n,j,r}=[0,\infty).$
Therefore, $\dexa$ in~\eqref{eq:dexa} is well defined on $\R$. It is constant on each unit interval, takes values in $\{0,\dots,n\}$ on $[S_n,S_{n+1})$, and every compact set meets finitely many blocks; hence $\dexa$ is locally bounded and Borel measurable, so $\dexa\in L^1_{\mathrm{loc}}(\R)$.

\paragraph{Digit extraction identity of the activation $\varrho$.}
Write $j\in\{0,\dots,(n+1)^n-1\}$ with base-$(n+1)$ expansion $j=\sum_{s=0}^{n-1}b_s(n+1)^{n-1-s}$, $b_s\in\{0,\dots,n\}$. For $r\in\{0,\dots,n-1\}$,
\begin{equation}
\Bigl\lfloor\frac{j}{(n+1)^{n-1-r}}\Bigr\rfloor=\sum_{s=0}^{r}b_s(n+1)^{r-s},\quad
\Bigl\lfloor\frac{j}{(n+1)^{n-r}}\Bigr\rfloor=\sum_{s=0}^{r-1}b_s(n+1)^{r-1-s},
\end{equation}
Subtracting $n+1$ times the second from the first leaves $b_r$, i.e.
$\lfloor j/(n+1)^{n-1-r}\rfloor-(n+1)\lfloor j/(n+1)^{n-r}\rfloor=b_r$.
Since $S_n+nj+r\in I_{n,j,r}$ as defined in \eqref{eq:unit-intervals}, this proves~\eqref{eq:digit-extraction}.

\paragraph{Parameters of the network $\Phi_{f,\varepsilon}^\varrho$.}
For the record, the parameters of $\Phi^{\dexa}_{f,\varepsilon}$ are: $W_1=M\,\mathrm{Id}_d$, $\bb_1=-(M+1)\bm 1$; $W_2=\bigl(1,M+1,\dots,(M+1)^{d-1}\bigr)$, $b_2=N_f+(M+1)\sum_{j=1}^d(M+1)^{j-1}=N_f+(M+1)\frac{(M+1)^d-1}{M}$; $a=\delta$, $c=-R$. All are integers except $\delta$ (dyadic) and $-R$ (an integer once $R\in\N_+$).

\section{Proofs of Section~\ref{subsec:lower}}
\label{app:lower}

\subsection{Proof of Proposition~\ref{prop:first-layer}}\label{app:proof_first-layer}
Define $f_0:[0,1]^d\to\R$ by $f_0(\bx):=\zeta\|\bx-\bx_0\|_\infty^\alpha$, $\bx_0:=\left(\tfrac12,\ldots,\tfrac12\right)$, $\zeta:=\min\{\lambda,2^\alpha R\}$. 

\paragraph{Step 1: Verification that $f_0\in\Hol$.} Since $\max_{\bx\in[0,1]^d}\|\bx-\bx_0\|_\infty=\frac12$, we have $\|f_0\|_{\Linf}=\zeta2^{-\alpha}\le R$. For $a,b\ge0$ and $\alpha\in(0,1]$, one has
$|a^\alpha-b^\alpha|\le|a-b|^\alpha$ by the subadditivity of $t\mapsto t^\alpha$. Hence, by the reverse triangle inequality,
\begin{align*}
|f_0(\bx)-f_0(\by)|&=\zeta\bigl|\|\bx-\bx_0\|_\infty^\alpha-\|\by-\bx_0\|_\infty^\alpha\bigr|
\le\zeta\bigl|\|\bx-\bx_0\|_\infty-\|\by-\bx_0\|_\infty\bigr|^\alpha\\
&\le\zeta\|\bx-\by\|_\infty^\alpha\le\lambda\|\bx-\by\|_\infty^\alpha .
\end{align*}

\paragraph{Step 2: Error lower bound from the kernel of the first layer.} The input enters $\Phi$ only through $W_1\bx+\bb_1$ with $W_1\in\R^{n_1\times d}$. Since $n_1<d$, there exists $\bz\in\ker W_1$ with $\|\bz\|_\infty=1$; put $\bx_1:=\bx_0+\frac12\bz\in[0,1]^d$. Then $W_1\bx_1+\bb_1=W_1\bx_0+\bb_1$, hence $\Phi(\bx_1)=\Phi(\bx_0)$, whereas $f_0(\bx_0)=0$ and $f_0(\bx_1)=\zeta2^{-\alpha}$. Therefore
\begin{equation}
    \zeta2^{-\alpha}
    =
    |f_0(\boldsymbol{x}_1)-f_0(\boldsymbol{x}_0)| \le
    |f_0(\boldsymbol{x}_1)-\Phi(\boldsymbol{x}_1)|
    +
    |\Phi(\boldsymbol{x}_0)-f_0(\boldsymbol{x}_0)| \le
    2\|f_0-\Phi\|_{L^\infty([0,1]^d)}.
\end{equation}
Therefore, by definition of $\zeta$, $\|f_0-\Phi\|_{L^\infty([0,1]^d)}
\ge
\frac{1}{2}\zeta2^{-\alpha} =
\frac{1}{2}
\min\bigl\{\lambda2^{-\alpha},R\bigr\}
=:\varepsilon_0>0$.\qed

\subsection{Proof of Proposition~\ref{prop:shallow}}\label{app:proof_shallow}
Let $\Omega:=(0,1)^d$ and define
$h:\R^d\to\R$, $h(\bx):=\exp(\|\bx\|_2^2)$.
We write $(C_c^\infty(\Omega))'$ for the continuous dual of $C_c^\infty(\Omega)$.

\paragraph{Step 1: a suitable directional derivative of $\Phi$ vanishes.}
Let $\Phi(\bx)=c+\sum_{i=1}^{n_1}
a_i\gact(\bw_i\cdot\bx+b_i)
\in\Net((n_1);\gact)$.
Since $\gact\in L^1_{\mathrm{loc}}(\R)$, each function
$a_i\gact(\bw_i\cdot\bx+b_i)$ is locally integrable on $\R^d$ and
can therefore be viewed as a regular distribution on $\Omega$.

Denote $\mathbb S^{d-1}:=\{\bx\in\R^d:\|\bx\|_2=1\}$. For each $i=1,\ldots,n_1$, choose a unit vector
$\bv_i\in\mathbb S^{d-1}$ satisfying $\bv_i\cdot\bw_i=0$.
Such a vector exists because $d\ge2$; if $\bw_i=0$, we may choose any unit vector. Write
$D_{\bv}:=\bv\cdot\nabla$
for the distributional directional derivative.

For every test function $\psi\in C_c^\infty(\Omega)$, extended by
zero to $\R^d$, we have
$\left\langle
D_{\bv_i}
\bigl[a_i\gact(\bw_i\cdot\bx+b_i)\bigr],
\psi\right\rangle
=-\int_{\R^d}
a_i\gact(\bw_i\cdot\bx+b_i)
D_{\bv_i}\psi(\bx)\,d\bx$.    
Using the orthogonal decomposition
$\bx=\by+s\bv_i$,
$\by\in\bv_i^\perp$,
whose Jacobian is one, and the identity
$\bw_i\cdot(\by+s\bv_i)=\bw_i\cdot\by$
gives
\begin{equation}
\big\langle
D_{\bv_i}
\bigl[a_i\gact(\bw_i\cdot\bx+b_i)\bigr],
\psi
\big\rangle
=
-\int_{\bv_i^\perp}
a_i\gact(\bw_i\cdot\by+b_i)
\Big[
\int_{\R}
\frac{\partial}{\partial s}
\psi(\by+s\bv_i)\,ds
\Big]d\by
=0.    
\end{equation}
The inner integral vanishes because $\psi$ has compact support.
Therefore,
\begin{equation}
D_{\bv_i}
\bigl[a_i\gact(\bw_i\cdot\bx+b_i)\bigr]
=0
\quad\text{in }(C_c^\infty(\Omega))'.
\label{eq:ridge-directional-zero}
\end{equation}

Now define
$\mathcal L_{\bv}:=
D_{\bv_1}D_{\bv_2}\cdots D_{\bv_{n_1}}$,
$\bv:=(\bv_1,\ldots,\bv_{n_1})
\in(\mathbb S^{d-1})^{n_1}$.
Constant-coefficient directional derivatives commute. For each
$i$, the operator $\mathcal L_{\bv}$ contains the factor
$D_{\bv_i}$, which makes the $i$th ridge term vanish by
\eqref{eq:ridge-directional-zero}. It also makes the constant term
vanish. Hence
$\mathcal L_{\bv}\Phi=0$
in $(C_c^\infty(\Omega))'$.

\paragraph{Step 2: the same directional derivative of $h$ does not vanish.}
For every $\bu\in\R^d$,
$D_{\bu}h(\bx)=2(\bu\cdot\bx)h(\bx)$.
Repeated application of the product rule gives
$\mathcal L_{\bv}h=P_{\bv}h$,
where $P_{\bv}$ is a polynomial of degree $n_1$ whose homogeneous
part of degree $n_1$ is
$2^{n_1}\prod_{i=1}^{n_1}(\bv_i\cdot\bx)$.
Since each $\bv_i$ is a unit vector,
$\bv_i\cdot\bx$ is a nonzero linear polynomial. Therefore,
$\prod_{i=1}^{n_1}(\bv_i\cdot\bx)$
is a nonzero polynomial. It follows that
$P_{\bv}\not\equiv0$, and hence
\begin{equation}
\mathcal L_{\bv}h=P_{\bv}h\not\equiv0
\label{eq:h-derivative-nonzero}
\end{equation}
for every
$\bv\in(\mathbb S^{d-1})^{n_1}$.

\paragraph{Step 3: derive a uniform lower bound for the approximation error.} Choose $\chi\in C_c^\infty(\Omega)$ such that $\chi\ge0$ and
$\chi>0$ on a nonempty open subset of $\Omega$, and define
$A_{n_1}(\bv):=\int_\Omega\chi(\bx)
\big|\mathcal L_{\bv}h(\bx)\big|^2\,d\bx$.
The function $\mathcal L_{\bv}h$ is real analytic and, by
\eqref{eq:h-derivative-nonzero}, is not identically zero. It
therefore cannot vanish on the open set where $\chi>0$, and hence $A_{n_1}(\bv)>0$.
Moreover, $\bv\mapsto A_{n_1}(\bv)$ is continuous. Since
$(\mathbb S^{d-1})^{n_1}$ is compact,
\begin{equation}
a_{n_1}
:=
\min_{\bv\in(\mathbb S^{d-1})^{n_1}}
A_{n_1}(\bv)
>0.
\label{eq:a-n1-definition}
\end{equation}

For the directions $\bv_1,\ldots,\bv_{n_1}$ selected in Step~1,
define
$\psi_{\bv}:=\chi\,\mathcal L_{\bv}h
\in C_c^\infty(\Omega)$.
Since $\mathcal L_{\bv}\Phi=0$
in $(C_c^\infty(\Omega))'$, we obtain
$A_{n_1}(\bv)
=
\left\langle
\mathcal L_{\bv}h,\psi_{\bv}
\right\rangle
=
\left\langle
\mathcal L_{\bv}(h-\Phi),\psi_{\bv}
\right\rangle
=
(-1)^{n_1}
\int_\Omega
(h-\Phi)\mathcal L_{\bv}\psi_{\bv}\,d\bx$.    
Therefore,
\begin{equation}
A_{n_1}(\bv)
\le
\|h-\Phi\|_{L^\infty(\Omega)}
\|\mathcal L_{\bv}\psi_{\bv}\|_{L^1(\Omega)}.
\label{eq:testing-bound}
\end{equation}

By the Leibniz rule, $\mathcal L_{\bv}\psi_{\bv}=\sum_{S\subseteq\{1,\dots,n_1\}}\bigl(\prod_{i\in S}D_{\bv_i}\bigr)\chi\cdot\bigl(\prod_{i\notin S}D_{\bv_i}\bigr)\mathcal L_{\bv}h$ involves derivatives of the fixed smooth functions $\chi$ and $h$ of order at most $2n_1$ with coefficients depending polynomially in $\bv$. Hence $(\bx,\bv)\mapsto\mathcal L_{\bv}\psi_{\bv}(\bx)$ is continuous on the compact set $\supp\chi\times(\mathbb S^{d-1})^{n_1}$. It follows that
$0<b_{n_1}
:=
\sup_{\bv\in(\mathbb S^{d-1})^{n_1}}
\|\mathcal L_{\bv}\psi_{\bv}\|_{L^1(\Omega)}
<\infty$.
Combining \eqref{eq:testing-bound} with
\eqref{eq:a-n1-definition} gives
\begin{equation}
\|h-\Phi\|_{L^\infty(\Omega)}
\ge
\frac{A_{n_1}(\bv)}{b_{n_1}}
\ge
\frac{a_{n_1}}{b_{n_1}}
>0,\quad \text{for every}~\Phi\in\Net((n_1);\gact).
\label{eq:h-shallow-lower-bound}
\end{equation}

\paragraph{Step 4: rescale $h$ into the H\"older class.}
On $[0,1]^d$,
$\|h\|_\infty=e^d$
and
$\|\nabla h(\bx)\|_1=
2e^{\|\bx\|_2^2}\sum_{j=1}^d x_j
\le 2de^d$.
Therefore, by the mean value theorem,
$|h(\bx)-h(\by)|
\le
2de^d\|\bx-\by\|_\infty
\le
2de^d\|\bx-\by\|_\infty^\alpha$,
where the last inequality follows from
$\|\bx-\by\|_\infty\le1$ and $\alpha\le1$.

Set $\gamma:=
\min\left\{Re^{-d},
\frac{\lambda}{2de^d}
\right\}$,
$f_h:=\gamma h$.
Then $f_h\in\Hol$. Since $\Net((n_1);\gact)$ is invariant under
multiplication by the positive constant $\gamma$, it follows from
\eqref{eq:h-shallow-lower-bound} that
\begin{equation}
\inf_{\Phi\in\Net((n_1);\gact)}
\|f_h-\Phi\|_{\Linf}=\gamma
\inf_{\Phi\in\Net((n_1);\gact)}
\|h-\Phi\|_{\Linf}\ge
\gamma\frac{a_{n_1}}{b_{n_1}}
=:\varepsilon_1>0.    
\end{equation}\qed
\subsection{Proof of Theorem~\ref{thm:lower}}\label{app:proof_lower}
Suppose, to the contrary, that $|\bm n|\le d$. If $L\ge2$, then
$n_1\le |\bm n|-n_2\le d-1<d$,
so Proposition~\ref{prop:first-layer} exhibits an $f_0\in\Hol$ that
cannot be approximated by any
$\Phi\in\Net^{\mathrm{skip}}(\bm n;\gact)$ with error smaller than
$\varepsilon_0$. If $L=1$, then
$\Net^{\mathrm{skip}}(\bm n;\gact)=\Net((n_1);\gact)$, and
Proposition~\ref{prop:shallow} exhibits an $f_h\in\Hol$ that cannot
be approximated by any such network with error smaller than
$\varepsilon_1$. Both cases contradict arbitrary-accuracy approximation.
Therefore, $|\bm n|\ge d+1$.

Together with the upper bound provided by
Theorem~\ref{thm:exact}, this yields $\nu(\Hol)=d+1$.\qed
\section{Details and Proofs of Section~\ref{sec:elementary}}
\label{app:elementary}

\paragraph{Parameters of the networks $\Phi^{\eact}_{f,\varepsilon}$ and $\Phi^{\eact,\mathrm{skip}}_{f,\varepsilon}$.}
$\Phi^{\eact}_{f,\varepsilon}$: $W_1=M\,\mathrm{Id}_d$, $\bb_1=0$; $W_2=-\log_2(B)\,\bc^{\top}$, $b_2=-1$; $W_3=(2A_f,\,2A_f/B)^{\top}$, $\bb_3=(2A_f,\,2A_f/B)^{\top}$; $\bm a=(\delta,-\delta B)$, $c=-R$.

$\Phi^{\eact,\mathrm{skip}}_{f,\varepsilon}$: the same first two layers; $W_3=b_3=2A_f/B$; output weights $2\delta A_f$ on $v$ and $-\delta B$ on $w^{\mathrm{skip}}$, bias $-R+2\delta A_f$. All parameters are integers or dyadic rationals.

\subsection{Proof of Theorem~\ref{thm:elementary}}
\begin{proof}
For $\Phi^{\eact}_{f,\varepsilon}$, \eqref{eq:el-output} and~\eqref{eq:basic-error} give $|f(\bx)-\Phi^{\eact}_{f,\varepsilon}(\bx)|<\varepsilon$ on every cell. 

For $\Phi^{\eact,\mathrm{skip}}_{f,\varepsilon}$, on $Q_\ell$ we have $\Phi^{\eact,\mathrm{skip}}_{f,\varepsilon}=\widehat f_\ell+\delta\eta_\ell$ with $0\le\delta\eta_\ell<\delta$ and $0\le f(\bx^{(\ell)})-\widehat f_\ell<\delta$ by~\eqref{eq:quantized-values}, so $|f(\bx^{(\ell)})-\Phi^{\eact,\mathrm{skip}}_{f,\varepsilon}(\bx)|=|(f(\bx^{(\ell)})-\widehat f_\ell)-\delta\eta_\ell|<\delta$ and the triangle inequality with $|f(\bx)-f(\bx^{(\ell)})|\le\lambda M^{-\alpha}\le\varepsilon/2$ gives the claim. 
\end{proof}
\section{Proof of Theorem \ref{thm:bits}}
\label{app:bits}

Throughout the proof, we consider $d\ge2$, $\alpha\in(0,1]$, $\lambda>0$, $R\in\N_+$ are fixed and $\varepsilon\to0$; by~\eqref{eq:parameter-rates}, $M=\Theta(\varepsilon^{-1/\alpha})$, $K=\Theta(\varepsilon^{-d/\alpha})$, $B=\Theta(\varepsilon^{-1})$, and $\log_2B=\Theta(\log(1/\varepsilon))$.

\subsection{Bit complexity of $A_f$}
\paragraph{Upper bound.} Since $0\le q_\ell\le B-1$, $0\le A_f=\sum_{\ell=0}^{K-1}q_\ell B^{\ell}\le(B-1)\sum_{\ell=0}^{K-1} B^{\ell}=B^K-1$, hence $\bit(A_f)\le\lceil\log_2B^K\rceil=K\log_2B$ (an integer, as $B$ is a power of two). Thus $\sup_{f\in\Hol}\bit(A_f)=O(\varepsilon^{-d/\alpha}\log(1/\varepsilon))$.

\paragraph{Lower bound.} Let $f_R\equiv R$. By~\eqref{eq:parameters}, $B/2<\lfloor2R/\delta\rfloor+1\le B$ as soon as $\lfloor2R/\delta\rfloor+1\ge2$, i.e.\ for small $\varepsilon$; since $B/2$ and $\lfloor2R/\delta\rfloor$ are integers, $B/2\le\lfloor2R/\delta\rfloor\le B-1$. For $f_R$ every digit equals $q_\ell=\lfloor2R/\delta\rfloor\ge B/2$, so 
\begin{equation}\label{eq:lb_A_fR}
A_{f_R}\ge\frac B2\sum_{\ell<K}B^\ell\ge\frac12B^K    
\end{equation}
and $\bit(A_{f_R})\ge K\log_2B-1$. Hence
\begin{equation}
\sup_{f\in\Hol}\bit(A_f)=\Theta\bigl(K\log_2B\bigr)=\Theta\bigl(\varepsilon^{-d/\alpha}\log\frac1\varepsilon\bigr).
\label{eq:app-Af-bits}
\end{equation}

\subsection{Bit complexity of $\Phi^{\eact}_{f,\varepsilon}$ and $\Phi^{\eact,\mathrm{skip}}_{f,\varepsilon}$}
Using the parameter lists in Appendix~\ref{app:elementary}: the first hidden layer has $d$ nonzero weights equal to $M$, contributing $d\,\bit(M)=O(d\log(1/\varepsilon))$. The second hidden layer has weights $\log_2(B)(M+1)^{j-1}$, $j=1,\dots,d$, and bias $-1$, with $\bit(\log_2(B)(M+1)^{j-1})=O(j\log(1/\varepsilon))$, contributing $O(d^2\log(1/\varepsilon))$ in total. The third hidden layer contains $2A_f$ and $2A_f/B$ (each at most twice); since $2A_f/B$ in lowest terms has numerator at most $2A_f$ and denominator at most $B$, $\bit(2A_f/B)\le\bit(A_f)+\bit(B)+O(1)$, so this layer contributes $O(\bit(A_f)+\log(1/\varepsilon))$. The output layer has parameters $\delta$, $\delta B$, $R$ (and $2\delta A_f$, $-R+2\delta A_f$ in the skip variant), contributing $O(\bit(A_f)+\log(1/\varepsilon))$. 
Altogether, since  
$d^2\log(1/\varepsilon)
=
o\big(
\varepsilon^{-d/\alpha}
\log(1/\varepsilon)
\big)$ as $\varepsilon\to0$,
$\sup_{f\in\Hol}\bit\bigl(\Phi^{\eact}_{f,\varepsilon}\bigr)=\Theta(\sup_{f\in\Hol}\bit\bigl(A_f\bigr))$ and $\sup_{f\in\Hol}\bit\bigl(\Phi^{\eact,\mathrm{skip}}_{f,\varepsilon}\bigr)=\Theta(\sup_{f\in\Hol}\bit\bigl(A_f\bigr))$. Thus,
\begin{equation}
\sup_{f\in\Hol}\bit\bigl(\Phi^{\eact}_{f,\varepsilon}\bigr)
=\Theta\bigl(\varepsilon^{-d/\alpha}\log\frac1\varepsilon\bigr),\
\sup_{f\in\Hol}\bit\bigl(\Phi^{\eact,\mathrm{skip}}_{f,\varepsilon}\bigr)
=\Theta\bigl(\varepsilon^{-d/\alpha}\log\frac1\varepsilon\bigr).
\end{equation}

\subsection{Bit complexity of $\Phi^{\dexa}_{f,\varepsilon}$}
By the parameter list at Appendix~\ref{app:exact}, all parameters other than the bias $b_2=N_f+(M+1)((M+1)^d-1)/M$ contribute $O(d^2\log(1/\varepsilon))$ bits, and $\bit(b_2)=\bit(N_f)+O(d\log(1/\varepsilon))$. Since $n=\max\{K,B-1\}$ and $K=\Theta(\varepsilon^{-d/\alpha})$ dominates $B=\Theta(\varepsilon^{-1})$ (as $d/\alpha\ge2$), we have $n=K=\Theta(\varepsilon^{-d/\alpha})$ for small $\varepsilon$. From $S_n\le N_f<S_{n+1}$ and
\[
(n-1)n^{\,n-1}\le S_n=\sum_{k=1}^{n-1}k(k+1)^k\le(n-1)\cdot(n-1)n^{\,n-1}\le n^{\,n+1},\quad S_{n+1}\le(n+1)^{\,n+2},
\]
we get $(n-1)\log_2n\le\log_2N_f\le(n+2)\log_2(n+1)$, i.e.\ $\bit(N_f)=\Theta(n\log n)=\Theta(\varepsilon^{-d/\alpha}\log(1/\varepsilon))$ for every $f\in\Hol$. Hence $\sup_f\bit(\Phi^{\dexa}_{f,\varepsilon})=\Theta(\varepsilon^{-d/\alpha}\log(1/\varepsilon))$. \qed

\section{Continuous activations: uniform obstruction and $L^p$ construction}\label{app:continuous_Linf}
The activation $\dexa$ of Theorem~\ref{thm:exact} is discontinuous, while Theorem~\ref{thm:lower} allows arbitrary locally integrable activations. Appendix~\ref{app:continuous_obstruction} shows that the discontinuity is not incidental: with a continuous activation, $d+1$ hidden neurons do not suffice for uniform approximation, already for $d=2$. Appendix~\ref{app:continuous_Lp} shows that the obstruction is specific to the uniform norm: a single continuous activation $\dexac$, independent of $f$, $\varepsilon$, $p$ and $d$, gives the $(d,1)$ architecture an $L^p$ guarantee for every $p\in[1,\infty)$.
\subsection{A uniform-approximation obstruction for $(2,1)$ networks}\label{app:continuous_obstruction}

\begin{proposition}[Three-neuron obstruction for continuous activations]
\label{prop:cont}
Let $d=2$, $\alpha\in(0,1]$, $\lambda,R>0$, and let
$\gact:\R\to\R$ be continuous. For every $L\in\N_+$ and every
architecture $\bm n=(n_1,\ldots,n_L)\in\N_+^L$ with
$|\bm n|=3$, there exist a function $f\in\Hol$ and a constant
$\varepsilon_2>0$, depending only on $\alpha,\lambda,R$,
such that
$\inf_{\Phi\in\Net(\bm n;\gact)}
\|f-\Phi\|_{\Linf}
\ge \varepsilon_2$.
Consequently, no fixed architecture with three hidden neurons and a
continuous activation can approximate $\Hol$ to arbitrary accuracy.
\end{proposition}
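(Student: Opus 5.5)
The plan is a case split over the four architectures with $|\bm n|=3$, namely $(3)$, $(1,2)$, $(1,1,1)$ and $(2,1)$. For $(1,2)$ and $(1,1,1)$ we have $n_1=1<d=2$, so Proposition~\ref{prop:first-layer} applies verbatim (it holds for arbitrary $\gact$). For $(3)$, Proposition~\ref{prop:shallow} with $n_1=3$ applies, since continuous activations are locally integrable. In each of these cases the constant may be taken as the minimum of the constants produced there. The only new case is $\bm n=(2,1)$, where the network is
\begin{equation*}
\Phi(\bx)=a\,\gact\bigl(\ell(\bx)\bigr)+c,\qquad \ell(\bx):=w_1\gact(\bm p_1\cdot\bx+\beta_1)+w_2\gact(\bm p_2\cdot\bx+\beta_2)+b .
\end{equation*}
Here continuity of $\gact$ matters: $\ell$ is a \emph{continuous} sum of two ridge functions, and $\Phi$ is constant on every level set of $\ell$. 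By contrast, the discontinuous $\dexa$ of Theorem~\ref{thm:exact} lets $\ell$ be injective on cells, which is exactly what fails here.

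For $(2,1)$ I would argue in three steps. First, dispose of degenerate configurations. If $\bm p_1,\bm p_2$ are linearly dependent, or one of $w_1,w_2$ vanishes, then $\Phi$ depends on $\bx$ only through one linear functional. The kernel argument of Proposition~\ref{prop:first-layer} then gives error $\ge\varepsilon_0$ for $f_0$.

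Second, in the nondegenerate case, change variables $s=\bm p_1\cdot\bx$, $t=\bm p_2\cdot\bx$, so that $\ell=g_1(s)+g_2(t)$ with $g_1,g_2$ continuous on intervals. The key structural claim is that for such a sum one can always find two disjoint pairs of points $\{\bx,\bx'\}$ and $\{\by,\by'\}$ that are forced into equal $\Phi$-values, with the pairs well separated. The mechanism is the classical ``closed path'' (rectangle) obstruction for sums of two ridge functions, combined with an intermediate-value argument along the continuous $g_i$: either some $g_i$ is nonmonotone, producing $s\neq s'$ with $g_1(s)=g_1(s')$ and hence horizontal segments in a level set; or both $g_i$ are monotone. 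In the monotone case the level sets of $\ell$ are monotone curves from one side of the parallelogram image of the square to the other.

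Third, choose a single target $f\in\Hol$ adapted to all cases simultaneously, for instance a fixed scaled combination of a strictly convex bump and a saddle, $\gamma\bigl(\|\bx-\bx_0\|_2^2+\kappa(x_1-\tfrac12)(x_2-\tfrac12)\bigr)$ rescaled as in Step~4 of Proposition~\ref{prop:shallow}. The aim is that every long monotone curve and every pair of parallel segments produced in the second step carries an oscillation of $f$ of at least $2\varepsilon_2$. A uniform bound then needs compactness over the normalized directions $(\bm p_1,\bm p_2)/\|\cdot\|$, as in Step~3 of Proposition~\ref{prop:shallow}.

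The main obstacle is the second and third steps: proving a quantitative statement, uniform over all continuous $\gact$ and all weights, that a level set of $g_1(s)+g_2(t)$ always contains a continuum (or a closed-path configuration) along which the chosen $f$ oscillates by a fixed amount. A purely topological fiber lemma (some level component crosses the square) is not enough, because $f$'s own level sets also cross. So the ridge structure must be used essentially. I would first try the monotone case, where the level sets are graphs $t=g_2^{-1}(c-g_1(s))$ of \emph{monotone decreasing} type in the $(s,t)$ coordinates (after orienting $g_1,g_2$ increasing). The target should then be chosen so that $f$ has no long monotone level arcs of that orientation for any pair of directions; this is the role of the convex bump plus saddle term, whose level sets curve too much. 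The nonmonotone case should reduce to the rectangle condition $f(\bx)-f(\bx')-f(\by)+f(\by')=0$ being violated by a definite amount, which follows from a nonvanishing mixed second derivative of $f$ along $\bm p_1,\bm p_2$.
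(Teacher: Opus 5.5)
Your reduction to $\bm n=(2,1)$ via Propositions~\ref{prop:first-layer} and~\ref{prop:shallow} matches the paper. So does your observation that $\Phi$ is constant on the level sets of the continuous inner function $h$ (your $\ell$), a sum of two ridge functions. The gap is that the $(2,1)$ case is not actually proved, and the target and criterion you propose for it cannot work.

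Your candidate $\gamma\bigl(\|\bx-\bx_0\|_2^2+\kappa(x_1-\frac12)(x_2-\frac12)\bigr)$ is a quadratic form $\gamma\,\by^\top A\by$ in $\by=\bx-\bx_0$. It diagonalizes as $\gamma\bigl[(1+\frac\kappa2)u^2+(1-\frac\kappa2)v^2\bigr]$ with $u=(y_1+y_2)/\sqrt2$ and $v=(y_1-y_2)/\sqrt2$, so it is itself a sum of two ridge functions. Take a continuous $\gact$ equal to $t^2$ on $[-1,1]$ and to $t-T$ on a distant interval, joined continuously in between. Then a $(2,1)$ network reproduces this target exactly. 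Since $f$ may not depend on $\gact$, no $\varepsilon_2>0$ can exist for it.

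The heuristics behind your choice fail for the same reason. The mixed derivative of a quadratic along $\bm p_1,\bm p_2$ is $2\bm p_1^\top A\bm p_2$, which vanishes for $A$-conjugate directions. The rectangle identity holds for $h$, not for $\Phi=g\circ h$. What a coincidence $g_1(s)=g_1(s')$ actually gives is $f(s,t)\approx f(s',t)$ for all $t$. Any target that is even in one adapted coordinate satisfies this exactly, so the obstruction has to come from breaking reflection symmetries, not from curvature. Your monotone-case requirement is also unattainable: closed level curves of a bump contain long monotone arcs of both orientations. Finally, $f_0$ in your degenerate case must be replaced by the one final target.

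The missing idea is to give the target a strict zero minimum at $\bx_0$ with linear growth, and to look only at the level set $L_0$ of $h$ through $\bx_0$.

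\begin{itemize}
\item If $L_0$ leaves the open disc of radius $\frac14$ about $\bx_0$, the collision with $\bx_0$ already forces a fixed positive error. This handles the degenerate and monotone cases at once, with no analysis of level-curve shapes.
\item If $L_0$ stays inside the disc, then $h-h(\bx_0)$ has constant sign on the connected set $[0,1]^2$ minus that disc. After a sign normalization this forces $\psi_1(\pm\frac14)>\psi_1(0)$ for the recentered profile $\psi_1$ in adapted coordinates. The intermediate value theorem then gives $r$ with $\psi_1(r)=\psi_1(-\frac14)$, and the growth of $f$ forces $r\in[\frac18,\frac14]$ once $\varepsilon$ is small.
\end{itemize}

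This is the quantitative separation that your ``well separated pairs'' claim needs but does not supply.

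It remains to choose $f$ so that $f(\bx_0-\frac14\bm e+t\bm e')=f(\bx_0+r\bm e+t\bm e')$ for all $|t|\le\frac14$ fails for every $(\bm e,\bm e',r)\in\mathbb S^1\times\mathbb S^1\times[\frac18,\frac14]$. Parallel pairs must be included so that compactness yields a uniform gap. The paper uses $\|\by\|_2+\frac1{10}(y_1^3+2y_2^3)$. The radial part alone fails (take $\bm e\perp\bm e'$ and $r=\frac14$), and the cubic term breaks every such symmetry. This is checked by analytic continuation in $t$, followed by comparing polynomial coefficients and the limits as $t\to\pm\infty$.
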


\begin{proof}
For $|\bm n|=3$, the only possible architectures are $(3)$, $(1,2)$, $(2,1)$, $(1,1,1)$.
The architecture $(3)$ is excluded by Proposition~\ref{prop:shallow}, while $(1,2)$ and $(1,1,1)$ are excluded by Proposition~\ref{prop:first-layer}. It therefore remains to only consider $\bm n=(2,1)$.

\paragraph{Step 0: growth of the function and the collision condition.}

Let $\bx_0:=(\frac12,\frac12)$. For
$\bx\in[0,1]^2$, write
$\by:=\bx-\bx_0=(y_1,y_2)$, and define
\begin{equation}\label{eq:fc}
    f_{\mathrm c}(\bx):=\|\by\|_2+\frac1{10}\kappa(\by),\quad\kappa(\by):=y_1^3+2y_2^3.
\end{equation}
Since $\by\in[-\frac12,\frac12]^2$, we have
$|\kappa(\by)|
\le |y_1|^3+2|y_2|^3
\le \frac34\|\by\|_2$.
Thus, with $\theta:=3/40$,
\begin{equation}
(1-\theta)\|\bx-\bx_0\|_2
\le f_{\mathrm c}(\bx)
\le (1+\theta)\|\bx-\bx_0\|_2.
\label{eq:fc-growth}
\end{equation}
In particular, $f_{\mathrm c}(\bx_0)=0$.

Let $\Phi\in\Net((2,1);\gact)$. 
In the notation of~\eqref{eq:network},
write $\bm\eta^\top,\bm\xi^\top\in\R^2$ for the rows of $W_1$,
$\bb_1=(b_{1,1},b_{1,2})$, $W_2=(w_1,w_2)$, $\bb_2=b_2$, and
$\bm a=a$. Then $\Phi=g\circ h$ with
$h(\bx)=\psi_1(\bm\eta^\top\bx)+\psi_2(\bm\xi^\top\bx)$,
where $\psi_i(t):=w_i\,\gact(t+b_{1,i})$ and $g(t):=a\,\gact(t+b_2)+c$;
thus $g,\psi_1,\psi_2:\R\to\R$ are continuous.

Suppose that
$\|f_{\mathrm c}-\Phi\|_{L^\infty([0,1]^2)}
\le\varepsilon$.
Whenever $h(\bx)=h(\bx')$, we have
$\Phi(\bx)=g(h(\bx))=g(h(\bx'))=\Phi(\bx')$, and hence, for $\bx,\bx'\in[0,1]^2$, the \emph{collision condition},
\begin{equation}
h(\bx)=h(\bx')\ \Longrightarrow\ 
|f_{\mathrm c}(\bx)-f_{\mathrm c}(\bx')|
\le|f_{\mathrm c}(\bx)-\Phi(\bx)|+|\Phi(\bx')-f_{\mathrm c}(\bx')|
\le2\varepsilon.
\label{eq:collision}
\end{equation}
Steps~1--4 then show that~\eqref{eq:collision} forces $\varepsilon\ge\varepsilon_*$
for an absolute constant $\varepsilon_*>0$ defined in Step~4. 
\paragraph{Step 1: a direct error bound for the degenerate case $\bm\eta\parallel\bm\xi$.}
Suppose that $\bm\eta$ and $\bm\xi$ are linearly dependent. There is
then a unit vector $\bm v$ orthogonal to both of them, so that
$h(\bx_0+t\bm v)=h(\bx_0)$
whenever $\bx_0+t\bm v\in[0,1]^2$. This line segment reaches the
boundary of the square and therefore contains a point
$\bm p\in\partial[0,1]^2$ satisfying
$\|\bm p-\bx_0\|_2\ge\frac12$.
By \eqref{eq:collision} and \eqref{eq:fc-growth},
$2\varepsilon\ge|f_{\mathrm c}(\bm p)-f_{\mathrm c}(\bx_0)|
= f_{\mathrm c}(\bm p)
\ge(1-\theta)\|\bm p-\bx_0\|_2\ge\frac{1-\theta}{2}$.
Thus
$\varepsilon\ge\frac{1-\theta}{4}$.

We may henceforth assume that $\bm\eta$ and $\bm\xi$ are linearly
independent.

\paragraph{Step 2: coordinates adapted to the two directions.}
Absorbing $\|\bm\eta\|_2$ and $\|\bm\xi\|_2$ into $\psi_1$ and $\psi_2$, we
may assume $\|\bm\eta\|_2=\|\bm\xi\|_2=1$. Put
$\Delta:=|\det(\bm\eta,\bm\xi)|\in(0,1]$ and let $\bm e$ be the unit vector
orthogonal to $\bm\xi$ with $\bm\eta^\top\bm e=\Delta$, and $\bm e'$ the unit
vector orthogonal to $\bm\eta$ with $\bm\xi^\top\bm e'=\Delta$. For
$\bx\in\R^2$ define
$u(\bx):=\frac{\bm\eta^\top(\bx-\bx_0)}{\Delta}$,
$v(\bx):=\frac{\bm\xi^\top(\bx-\bx_0)}{\Delta}$.
Then $\bx=\bx_0+u(\bx)\bm e+v(\bx)\bm e'$: applying $\bm\eta^\top$ and
$\bm\xi^\top$ to both sides gives the same values, and $\bm\eta,\bm\xi$ span
$\R^2$. 
In these coordinates
$h(\bx)=\psi_1(\bm\eta^\top\bx_0+\Delta u(\bx))+\psi_2(\bm\xi^\top\bx_0+\Delta v(\bx))$;
absorbing the affine changes of variables into $\psi_1,\psi_2$ we may write
\begin{equation}
h(\bx)=\psi_1\bigl(u(\bx)\bigr)+\psi_2\bigl(v(\bx)\bigr).
\label{eq:h-adapted}
\end{equation}

\paragraph{Step 3: constructing pairs with the same $h$-value.}
Let $L_0:=\{\bx\in[0,1]^2:h(\bx)=h(\bx_0)\}$. If $L_0$ contains a point
$\bm p$ with $\|\bm p-\bx_0\|_2\ge\frac14$, then~\eqref{eq:collision}
and~\eqref{eq:fc-growth} give $2\varepsilon\ge f_{\mathrm c}(\bm p)\ge\frac{1-\theta}{4}$,
i.e.\ $\varepsilon\ge\frac{1-\theta}{8}$.

It remains to consider the case $L_0\subset B(\bx_0,\frac14)$,
where $B(\bx_0,\rho):=
\{\bx\in\R^2:\|\bx-\bx_0\|_2<\rho\}$ denotes the open Euclidean ball. Then the
continuous function $h(\bx)-h(\bx_0)$ does not vanish on
$[0,1]^2\setminus B(\bx_0,\frac14)$, which is connected, and hence has constant sign there.
Replacing $(\psi_1,\psi_2,g)$ by $(-\psi_1,-\psi_2,g(-\,\cdot))$ if necessary,
which leaves $\Phi$ unchanged, we may assume
\begin{equation}
h(\bx)>h(\bx_0)\quad\text{for all }\bx\in[0,1]^2\setminus B\bigl(\bx_0,\tfrac14\bigr).
\label{eq:hx_hx0}
\end{equation}
The points $\bx_0\pm\frac14\bm e$ have adapted coordinates $(\pm\frac14,0)$
and lie on $\partial B(\bx_0,\frac14)$, hence in
$[0,1]^2\setminus B(\bx_0,\frac14)$. By~\eqref{eq:hx_hx0}
and~\eqref{eq:h-adapted},
$\psi_1(\pm\tfrac14)+\psi_2(0)>\psi_1(0)+\psi_2(0)$, i.e.
$\psi_1\bigl(-\tfrac14\bigr)>\psi_1(0)$,
$\psi_1\bigl(\tfrac14\bigr)>\psi_1(0)$.
Replacing $(\bm\eta,\bm e,\psi_1)$ by $(-\bm\eta,-\bm e,\psi_1(-\,\cdot))$ if
necessary, which leaves $h$, $\Delta$ and~\eqref{eq:h-adapted} unchanged, we
may assume $\psi_1(-\frac14)\le\psi_1(\frac14)$. Since
$\psi_1(0)<\psi_1(-\frac14)\le\psi_1(\frac14)$, the intermediate value theorem
on $[0,\frac14]$ yields $r\in(0,\frac14]$ with
\begin{equation}
\psi_1(r)=\psi_1\bigl(-\tfrac14\bigr).
\label{eq:psi-collision}
\end{equation}
For $|t|\le\frac14$ define
$\bx_1(t):=\bx_0-\frac14\bm e+t\bm e'$ and $\bx_2(t):=\bx_0+r\bm e+t\bm e'$,
with adapted coordinates $(-\frac14,t)$ and $(r,t)$. Both lie in the closed
ball of radius $\frac14+|t|\le\frac12$ about $\bx_0$, hence in $[0,1]^2$, and
by~\eqref{eq:h-adapted} and~\eqref{eq:psi-collision},
$h(\bx_1(t))=\psi_1(-\frac14)+\psi_2(t)=\psi_1(r)+\psi_2(t)=h(\bx_2(t))$.
Consequently~\eqref{eq:collision} gives
\begin{equation}
\bigl|f_{\mathrm c}(\bx_1(t))-f_{\mathrm c}(\bx_2(t))\bigr|\le2\varepsilon,
\quad|t|\le\tfrac14.
\label{eq:profiles}
\end{equation}
At $t=0$, \eqref{eq:fc-growth} gives $f_{\mathrm c}(\bx_1(0))\ge\frac{1-\theta}{4}$
and $f_{\mathrm c}(\bx_2(0))\le(1+\theta)r$, so
$\frac{1-\theta}{4}-2\varepsilon\le(1+\theta)r$. With $\theta=\frac3{40}$ this
yields $r\ge\frac18$ whenever $\varepsilon\le\frac1{50}$; in that case
$r\in[\frac18,\frac14]$.

\paragraph{Step 4: a uniform gap between their $f_{\mathrm c}$-values.}
Let $\mathbb S^1:=\{\bx\in\R^2:\|\bx\|_2=1\}$ and, on the compact set
$\mathcal K:=\mathbb S^1\times\mathbb S^1\times[\frac18,\frac14]$, define
\begin{equation*}
\mathcal M(\bm e,\bm e',r)
:=\max_{|t|\le1/4}
\bigl|f_{\mathrm c}\bigl(\bx_0-\tfrac14\bm e+t\bm e'\bigr)
-f_{\mathrm c}\bigl(\bx_0+r\bm e+t\bm e'\bigr)\bigr| .
\end{equation*}
$\mathcal M$ is continuous
on $\mathcal K$. We claim that
\begin{equation}
\mathcal M(\bm e,\bm e',r)>0\quad\text{for every }(\bm e,\bm e',r)\in\mathcal K .
\label{eq:M-positive}
\end{equation}
Note here we include parallel pairs of unit vectors in $\mathcal K$ (not only those
produced by Step~2) so that
the parameter set is compact and yields a uniform lower bound.

\emph{Case $\bm e'=\pm\bm e$.} Choose $t=\pm\frac14$ accordingly, so that
$\bx_0-\frac14\bm e+t\bm e'=\bx_0$, while
$\bx_0+r\bm e+t\bm e'=\bx_0+(r+\frac14)\bm e$. The first value of
$f_{\mathrm c}$ is $0$, the second is at least $(1-\theta)(r+\frac14)>0$
by~\eqref{eq:fc-growth}.

\emph{Case $\bm e'\not\parallel\bm e$.} Put $\mu:=\bm e^\top\bm e'\in(-1,1)$
and assume, for contradiction, that
\begin{equation}
f_{\mathrm c}\bigl(\bx_0-\tfrac14\bm e+t\bm e'\bigr)
=f_{\mathrm c}\bigl(\bx_0+r\bm e+t\bm e'\bigr)
\quad\text{for all }t\in[-\tfrac14,\tfrac14].
\label{eq:restrictions-equal}
\end{equation}
Both
$\|\bx_1(t)-\bx_0\|_2^2=\frac1{16}-\frac{t}{2}\mu+t^2\ge\frac1{16}(1-\mu^2)$ and
$\|\bx_2(t)-\bx_0\|_2^2=r^2+2rt\mu+t^2\ge r^2(1-\mu^2)$, are bounded below by positive
constants on $\R$, so both norms are real-analytic functions of $t$ on
$\R$; the $\kappa$-terms are polynomials in $t$. By the identity theorem
for real-analytic functions, \eqref{eq:restrictions-equal} holds for every
$t\in\R$.

Write $\bm e=(e_1,e_2)$ and $\bm e'=(e_1',e_2')$. Expanding the cubes (the
$t^3$-terms cancel),
\begin{equation}
\begin{aligned}
\kappa\bigl(-\tfrac14\bm e+t\bm e'\bigr)
 -\kappa\bigl(r\bm e+t\bm e'\bigr)
=&-3\bigl(\tfrac14+r\bigr)
 \bigl((e_1')^2e_1+2(e_2')^2e_2\bigr)t^2\\
&+3\bigl(\tfrac1{16}-r^2\bigr)
 \bigl(e_1'e_1^2+2e_2'e_2^2\bigr)t
 -\bigl(\tfrac1{64}+r^3\bigr)(e_1^3+2e_2^3).
\end{aligned}
\label{eq:kappa-expansion}
\end{equation}
a polynomial in $t$ of degree at most two. By~\eqref{eq:restrictions-equal}
and the definition~\eqref{eq:fc} of $f_{\mathrm c}$,
\begin{equation}
\kappa\bigl(-\tfrac14\bm e+t\bm e'\bigr)-\kappa\bigl(r\bm e+t\bm e'\bigr)
=-10\Bigl[\bigl\|-\tfrac14\bm e+t\bm e'\bigr\|_2-\bigl\|r\bm e+t\bm e'\bigr\|_2\Bigr]
\quad\text{for all }t\in\R .
\label{eq:kappa-norm-relation}
\end{equation}
By the reverse triangle inequality, $D(t):=\|-\frac14\bm e+t\bm e'\|_2-\|r\bm e+t\bm e'\|_2\le
\big\|-\frac14\bm e-r\bm e\big\|_2=
\frac14+r$ is bounded on $\R$. A polynomial that is
bounded on $\R$ is constant, so the quadratic coefficient
in~\eqref{eq:kappa-expansion} vanishes; since $\frac14+r>0$,
\begin{equation}
(e_1')^2e_1+2(e_2')^2e_2=0 .
\label{eq:quadratic-coefficient}
\end{equation}
Moreover, \eqref{eq:kappa-norm-relation} now shows that  $D(t)$ is constant in $t$.
Rationalizing,
\begin{equation}
D(t)=\frac{\frac1{16}-r^2-2t\bigl(\frac14+r\bigr)\mu}
{\|-\frac14\bm e+t\bm e'\|_2+\|r\bm e+t\bm e'\|_2},    
\end{equation}
whose limits as $t\to+\infty$ and $t\to-\infty$ are $-(\frac14+r)\mu$ and
$(\frac14+r)\mu$, respectively. Since $D$ is constant, these limits agree,
so $\mu=0$: $\bm e\perp\bm e'$. Then
$D(t)=\sqrt{\frac1{16}+t^2}-\sqrt{r^2+t^2}$ tends to $0$ as
$|t|\to\infty$, hence $D\equiv0$, and $t=0$ gives $r=\frac14$. The
right-hand side of~\eqref{eq:kappa-norm-relation} is now identically zero,
so the polynomial~\eqref{eq:kappa-expansion} vanishes identically; its
constant term with $r=\frac14$ gives $-\frac1{32}(e_1^3+2e_2^3)=0$, i.e.
\begin{equation}
e_1^3+2e_2^3=0 .
\label{eq:constant-coefficient}
\end{equation}
Since $\bm e\perp\bm e'$ are unit vectors, $\bm e'=\pm(-e_2,e_1)$, and
\eqref{eq:quadratic-coefficient} becomes $e_2^2e_1+2e_1^2e_2=e_1e_2(e_2+2e_1)=0$.
If $e_1=0$ then $e_2=\pm1$ and $e_1^3+2e_2^3=\pm2$; if $e_2=0$ then
$e_1=\pm1$ and $e_1^3+2e_2^3=\pm1$; if $e_2=-2e_1$ then $e_1\ne0$ and
$e_1^3+2e_2^3=-15e_1^3\ne0$. All three cases
contradict~\eqref{eq:constant-coefficient}. Hence
\eqref{eq:restrictions-equal} is impossible, which proves~\eqref{eq:M-positive}.

\emph{Conclusion.} Since $\mathcal K$ is compact and $\mathcal M$ is continuous, define
$s:=\min_{\mathcal K}\mathcal M>0$. Hence every
$\Phi\in\Net((2,1);\gact)$ satisfies
$\|f_{\mathrm c}-\Phi\|_{\Linf}
\ge\varepsilon_*$,
where
$\varepsilon_*
:=
\min\big\{
\frac{1-\theta}{8},
\frac1{50},
\frac s2
\big\}>0$.

\paragraph{Step 5: rescale $f_{\mathrm c}$ into the H\"older class.}

The map $\by\mapsto\|\by\|_2$ is $\sqrt2$-Lipschitz with respect to
$\|\cdot\|_\infty$. Moreover,
$\|\nabla\kappa(\by)\|_1
=
3y_1^2+6y_2^2
\le\frac94$
for
$\by\in\big[-\frac12,\frac12\big]^2$.
Thus $f_{\mathrm c}$ is Lipschitz with respect to
$\|\cdot\|_\infty$, with constant
$\sqrt2+\frac9{40}<2$.
Also, \eqref{eq:fc-growth} gives
$\|f_{\mathrm c}\|_{L^\infty([0,1]^2)}
\le\frac{1+\theta}{\sqrt2}<1$.
Set
$\gamma':=\min\big\{\frac{\lambda}{2},R\big\}$,
$\widetilde f_{\mathrm c}:=\gamma'f_{\mathrm c}$.
Then
$\|\widetilde f_{\mathrm c}\|_{L^\infty([0,1]^2)}\le R$.
Furthermore, since $\|\bx-\bx'\|_\infty\le1$ on $[0,1]^2$,
$|\widetilde f_{\mathrm c}(\bx)-\widetilde f_{\mathrm c}(\bx')|
\le
2\gamma'\|\bx-\bx'\|_\infty\le
\lambda\|\bx-\bx'\|_\infty\le
\lambda\|\bx-\bx'\|_\infty^\alpha$.
Therefore
$\widetilde f_{\mathrm c}
\in\mathcal H^\alpha_{\lambda,R}([0,1]^2)$.
Finally, $\Net((2,1);\gact)$ is invariant under multiplication of the
output layer by the positive constant $\gamma'$. Hence
\[
\inf_{\Phi\in\Net((2,1);\gact)}
\|\widetilde f_{\mathrm c}-\Phi\|_{L^\infty([0,1]^2)}
=
\gamma'
\inf_{\Phi\in\Net((2,1);\gact)}
\|f_{\mathrm c}-\Phi\|_{L^\infty([0,1]^2)}
\ge \gamma'\varepsilon_*:=\varepsilon_2>0.
\]
\end{proof}

\subsection{$L^p$ approximation with $d+1$ hidden neurons}
\label{app:continuous_Lp}
In this appendix, we construct a continuous activation with which a network of hidden-layer widths $(d,1)$ approximates every $f\in\Hol$ to accuracy $\varepsilon$ in $L^p([0,1]^d)$.  
The idea of the continuous activation is to smooth the jumps of $\dexa$ over transition intervals whose lengths decay geometrically along the negative axis, and to shift the arguments of the first hidden layer so far into the negative axis that the transition intervals there have negligible total length.

\paragraph{Continuous activation.}
The activation $\dexa$ in~\eqref{eq:dexa} is constant on each interval $[k,k+1)$, $k\in\Z$, and hence can only jump at integers. For $k\in\Z$, let $w_k:=2^{\min\{k,0\}-1}\in(0,\tfrac12]$, i.e.\ $w_k=\tfrac12$ for $k\ge0$ and $w_k=2^{k-1}$ for $k<0$, and define
\begin{equation}
\dexac(t):=\begin{cases}
\dexa(t),& t\notin\bigcup_{k\in\Z}(k-w_k,\,k),\\[2pt]
\dexa(k-1)+\bigl(\dexa(k)-\dexa(k-1)\bigr)\dfrac{t-(k-w_k)}{w_k},& t\in(k-w_k,\,k),\ k\in\Z;
\end{cases}
\label{eq:dexac}
\end{equation}
see Figure~\ref{fig:varrho_c}. Here $\dexac\in C(\R)$, $\dexac(k)=\dexa(k)$ for every $k\in\Z$, and on each $[k-1,k]$ the values of $\dexac$ lie between $\dexa(k-1)$ and $\dexa(k)$. In particular, $\dexac$ is locally bounded, $\lfloor t\rfloor\le\dexac(t)\le\lfloor t\rfloor+1$ for $t<0$, and $0\le\dexac(t)\le n$ on $[S_n,S_{n+1}]$. Only the negative axis is reached by the first hidden layer below, and there the total length of the transition intervals $(k-w_k,k)$ with $k\le-N$ is $\sum_{k\ge N}2^{-k-1}=2^{-N}$ for every $N\in\N_0$. The ramp lengths on the positive axis are immaterial, because the second hidden layer will be evaluated at integers outside a small exceptional set. 
\begin{figure}[h]
\centering
\includegraphics[width=0.7\linewidth]{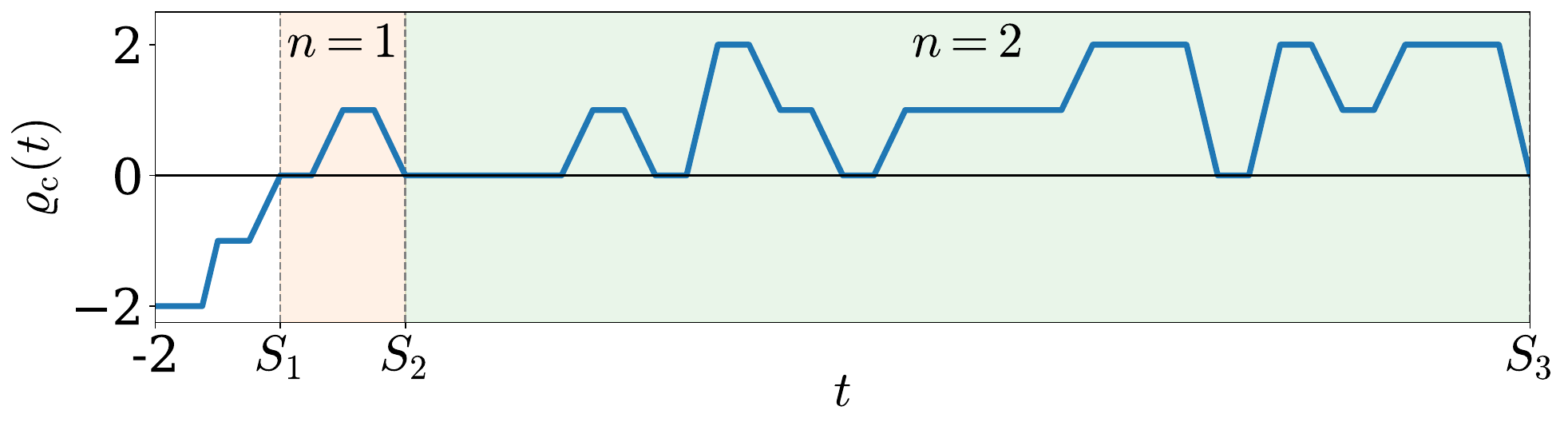}
\vspace{-4mm}
\caption{$\dexac$ in $[-2,S_3)$, to be compared with $\dexa$ in Figure~\ref{fig:varrho}. Each jump of $\dexa$ at an integer $k$ is replaced by a linear ramp on $(k-w_k,k)$, of length $\tfrac12$ for $k\ge0$ and of geometrically shrinking length $2^{k-1}$ for $k<0$.}
\label{fig:varrho_c}
\end{figure}

\paragraph{Network.}
Let $p\in[1,\infty)$, $f\in\Hol$ and $\varepsilon>0$. Let $\delta,M,B,K,n,q_\ell,J_f,N_f$ be defined by~\eqref{eq:parameters}, \eqref{eq:quantized-values} and~\eqref{eq:Jf-Nf} with $\varepsilon$ replaced by $\varepsilon/2$, so that $\lambda M^{-\alpha}+\delta\le\varepsilon/2$, and put
\begin{equation}
N_0:=\max\Bigl\{0,\ \Bigl\lceil\log_2 d+p\log_2\frac{2(2R+\delta n)}{\varepsilon}\Bigr\rceil\Bigr\}\in\N_0 ,
\label{eq:N0}
\end{equation}
so that $d\,2^{-N_0}(2R+\delta n)^p\le(\varepsilon/2)^p$. The network $\Phi^{\dexac}_{f,\varepsilon,p}\in\Net((d,1);\dexac)$ is defined by
\begin{align}
u_j(\bx)&:=\dexac\bigl(Mx_j-(M+1)-N_0\bigr),\quad j=1,\dots,d,&&\text{(1st hidden layer)}\label{eq:cont-layer1}\\
z(\bx)&:=\dexac\Bigl(N_f+\sum_{j=1}^{d}(M+1)^{j-1}\bigl(u_j(\bx)+M+1+N_0\bigr)\Bigr),&&\text{(2nd hidden layer)}\label{eq:cont-layer2}\\
\Phi^{\dexac}_{f,\varepsilon,p}(\bx)&:=-R+\delta\,z(\bx).&&\text{(output layer)}\label{eq:cont-output}
\end{align}
Compared with~\eqref{eq:exact-layer1}--\eqref{eq:exact-output}, the only changes are the activation and the shift $N_0$, which is compensated in the second-layer bias; now the second-layer bias is the integer $N_f+(M+1+N_0)\sum_{j=1}^d(M+1)^{j-1}=N_f+(M+1+N_0)(K-1)/M$.

\paragraph{Transition region.}
For $\bx\in[0,1]^d$, the arguments $t_j(\bx):=Mx_j-(M+1)-N_0$ in~\eqref{eq:cont-layer1} lie in $[-(M+1)-N_0,\,-1-N_0]$, so the transition intervals that can be met are $(k-w_k,k)$ with $k\in\{-M-N_0,\dots,-1-N_0\}$, each of length $w_k\le2^{-N_0-2}$ in $t_j$, i.e.\ $w_k/M$ in $x_j$. Define
$\Lambda:=\bigl\{\bx\in[0,1]^d:\ t_j(\bx)\in\bigcup_{k\in\Z}(k-w_k,k)\text{ for some }j\bigr\}$,
whose Lebesgue measure satisfies
\begin{equation}
|\Lambda|\le d\sum_{k\le-1-N_0}\frac{w_k}{M}=\frac{d\,2^{-N_0-1}}{M}\le d\,2^{-N_0}.
\label{eq:transition-measure}
\end{equation}

\begin{lemma}\label{lem:cont-exact}
For $\bx\in[0,1]^d\setminus\Lambda$, $u_j(\bx)=\lfloor Mx_j\rfloor-(M+1)-N_0$ for all $j$ and $z(\bx)=q_{r(\bx)}$; hence $\Phi^{\dexac}_{f,\varepsilon,p}(\bx)=\widehat f_{r(\bx)}$ and $|f(\bx)-\Phi^{\dexac}_{f,\varepsilon,p}(\bx)|<\varepsilon/2$. For all $\bx\in[0,1]^d$, $0\le z(\bx)\le n$ and $|f(\bx)-\Phi^{\dexac}_{f,\varepsilon,p}(\bx)|\le2R+\delta n$.
\end{lemma}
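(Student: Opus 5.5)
The plan is to check that, off the transition set $\Lambda$, every pre-activation lands on a point where $\dexac$ coincides with $\dexa$, and then to reuse the computation behind Theorem~\ref{thm:exact}. For the global bounds, the plan is to use only the range properties of $\dexac$ listed after \eqref{eq:dexac}.

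\emph{First hidden layer.} Fix $\bx\notin\Lambda$ and $j$, and write $t_j=t_j(\bx)=Mx_j-(M+1)-N_0\in[-(M+1)-N_0,-1-N_0]$, which is negative. Let $k=\lfloor t_j\rfloor+1$. If $t_j$ is an integer, then $\dexac(t_j)=\dexa(t_j)=t_j=\lfloor t_j\rfloor$. Otherwise $t_j\in(k-1,k)$, and since $t_j\notin(k-w_k,k)$ we have $t_j\in(k-1,k-w_k]$. On that piece \eqref{eq:dexac} gives $\dexac(t_j)=\dexa(t_j)=\lfloor t_j\rfloor$. The intervals $(k'-w_{k'},k')$ for other $k'$ do not meet $(k-1,k)$ because $w_{k'}\le\tfrac12<1$. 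Since $M+1+N_0$ is an integer, $\lfloor t_j\rfloor=\lfloor Mx_j\rfloor-(M+1)-N_0$, so $u_j(\bx)+M+1+N_0=\lfloor Mx_j\rfloor=m_j(x_j)$.

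\emph{Second hidden layer and error bound.} With this, the argument of the second neuron is exactly $N_f+r(\bx)$. This is an integer in $[S_n,S_{n+1})$, because $r(\bx)\le K-1<n$. At integers we have $\dexac=\dexa$, so the digit-extraction identity \eqref{eq:digit-extraction} gives $z(\bx)=\dexa(N_f+r(\bx))=q_{r(\bx)}$, exactly as in \eqref{eq:exact-layer2}. Hence $\Phi^{\dexac}_{f,\varepsilon,p}(\bx)=\widehat f_{r(\bx)}$. Since the parameters were chosen with $\varepsilon/2$ in place of $\varepsilon$, \eqref{eq:basic-error} yields $|f(\bx)-\widehat f_{r(\bx)}|<\lambda M^{-\alpha}+\delta\le\varepsilon/2$.

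\emph{Global bounds.} For arbitrary $\bx$, the bound $\lfloor t\rfloor\le\dexac(t)\le\lfloor t\rfloor+1$ for $t<0$ gives $u_j(\bx)+M+1+N_0\in[m_j(x_j),m_j(x_j)+1]$. The quantity $m_j(x_j)+1$ can reach $M+1$, so the argument of the second neuron lies in $[N_f,\,N_f+r(\bx)+\sum_j(M+1)^{j-1}]$. The upper endpoint is at most $N_f+(K-1)+(K-1)/M$, and this is where the main (minor) obstacle lies: we must confirm that the argument stays inside the block $[S_n,S_{n+1}]$, where $0\le\dexac\le n$. Using $N_f\le S_n+n((n+1)^n-1)$, the remaining room up to $S_{n+1}$ is $n$. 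Since $\sum_j (M+1)^{j-1}(m_j+1)\le (M+1)^d-1$ when $m_j\le M$, the total is at most $N_f+K-1\le N_f+n-1<S_{n+1}$. So the argument stays in the block, and it suffices that $\dexac\in[0,n]$ on $[S_n,S_{n+1}]$. Even if an endpoint issue arose, the ramp into the next block still gives values between $0$ and $n$. This yields $0\le z\le n$, hence $\Phi\in[-R,-R+\delta n]$. Combined with $|f|\le R$, this gives $|f-\Phi|\le 2R+\delta n$.
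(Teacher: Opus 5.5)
Your argument for $\bx\notin\Lambda$ is correct and follows the paper. The global bound $0\le z(\bx)\le n$, however, has a genuine gap.

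From $\lfloor t\rfloor\le\dexac(t)\le\lfloor t\rfloor+1$ you only get $u_j(\bx)+M+1+N_0\in[m_j(x_j),m_j(x_j)+1]$. This gives the pre-activation bound $N_f+(K-1)+(K-1)/M$. For $d\ge2$ this exceeds the room $S_{n+1}-N_f\ge n$ you computed (take $n=K$).

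You then close the gap with the claim $\sum_j(M+1)^{j-1}(m_j+1)\le(M+1)^d-1$ whenever $m_j\le M$. That claim is false. At $\bx=(1,\dots,1)$ every $m_j=M$, and the left side is $(M+1)\bigl((M+1)^d-1\bigr)/M>(M+1)^d-1$. So the conclusion ``at most $N_f+K-1$'' does not follow from what you wrote.

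The fallback sentence about ``the ramp into the next block'' does not repair this. Your bounds would let the pre-activation pass $S_{n+1}$ into block $n+1$, and you say nothing about the values of $\dexa$ there.

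The missing observation is the upper end of the first-layer range, which is exactly how the paper argues. For $t<0$, $\dexac(t)$ lies between $\lfloor t\rfloor$ and $\lceil t\rceil$. Since $t_j(\bx)\in[-(M+1)-N_0,\,-1-N_0]$ and both endpoints are integers, $u_j(\bx)$ lies in the same interval. Hence $u_j(\bx)+M+1+N_0\in[0,M]$.

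Put differently, the case $m_j(x_j)+1=M+1$ that worried you occurs only at $x_j=1$. There $t_j(\bx)=-1-N_0$ is an integer, $\dexac$ equals the floor, and the value is exactly $M$. Otherwise $m_j(x_j)+1\le M$ already.

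The pre-activation then lies in $[N_f,\,N_f+M\sum_j(M+1)^{j-1}]=[N_f,\,N_f+K-1]\subset[S_n,S_{n+1}-1]$. The rest of your proof goes through unchanged.
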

\begin{proof}
If $\bx\notin\Lambda$, each $t_j(\bx)$ is negative and outside every transition interval, so $\dexac(t_j(\bx))=\dexa(t_j(\bx))=\lfloor t_j(\bx)\rfloor=\lfloor Mx_j\rfloor-(M+1)-N_0$. The argument of $\dexac$ in~\eqref{eq:cont-layer2} is then the integer $N_f+r(\bx)$, at which $\dexac$ agrees with $\dexa$, and $\dexa(N_f+r(\bx))=q_{r(\bx)}$ by~\eqref{eq:digit-extraction}; the error bound is~\eqref{eq:basic-error} with $\varepsilon/2$.

For arbitrary $\bx$, since $t_j(\bx)\in[-(M+1)-N_0,-1-N_0]$ and $\dexac$ lies between the neighboring values of $\lfloor\cdot\rfloor$ on this range, $u_j(\bx)+M+1+N_0\in[0,M]$. Hence the preactivation part in~\eqref{eq:cont-layer2} lies in $[N_f,\,N_f+M\sum_{j=1}^d(M+1)^{j-1}]=[N_f,\,N_f+K-1]\subset[S_n,S_{n+1}-1]$, because $K\le n$ and $N_f+n-1\le S_n+n(n+1)^n-1=S_{n+1}-1$. On $[S_n,S_{n+1}]$ we have $0\le\dexac\le n$, so $0\le z(\bx)\le n$, $\Phi^{\dexac}_{f,\varepsilon,p}(\bx)\in[-R,-R+\delta n]$, and since $f(\bx)\in[-R,R]$ the last claim follows.
\end{proof}

\begin{theorem}[Fixed continuous activation, $(d,1)$ architecture, $L^p$ guarantee]
\label{thm:contLp}
Let $d\in\N_+$, $\alpha\in(0,1]$, $\lambda>0$, $R\in\N_+$ and $p\in[1,\infty)$. For every $f\in\Hol$ and every $\varepsilon>0$, the network $\Phi^{\dexac}_{f,\varepsilon,p}\in\Net((d,1);\dexac)$ defined by~\eqref{eq:cont-layer1}--\eqref{eq:cont-output} satisfies
\begin{equation}
\bigl\|f-\Phi^{\dexac}_{f,\varepsilon,p}\bigr\|_{L^p([0,1]^d)}\le\varepsilon .
\end{equation}
The activation $\dexac$ is continuous and does not depend on $f$, $\varepsilon$, $p$ or $d$.
\end{theorem}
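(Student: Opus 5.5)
The proof splits $[0,1]^d$ into the transition region $\Lambda$ and its complement, and applies Lemma~\ref{lem:cont-exact} on each piece. Write $e:=f-\Phi^{\dexac}_{f,\varepsilon,p}$. Then
\[
\|e\|_{L^p([0,1]^d)}^p=\int_{[0,1]^d\setminus\Lambda}|e|^p\,d\bx+\int_{\Lambda}|e|^p\,d\bx .
\]
On $[0,1]^d\setminus\Lambda$, Lemma~\ref{lem:cont-exact} gives $|e(\bx)|<\varepsilon/2$. Since $|[0,1]^d\setminus\Lambda|\le1$, the first integral is at most $(\varepsilon/2)^p$.

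On $\Lambda$, Lemma~\ref{lem:cont-exact} gives the crude bound $|e|\le 2R+\delta n$. Combined with the measure estimate~\eqref{eq:transition-measure}, this bounds the second integral by $d\,2^{-N_0}(2R+\delta n)^p$. By the choice~\eqref{eq:N0} of $N_0$, this is at most $(\varepsilon/2)^p$.

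Adding the two pieces gives $\|e\|_{L^p}^p\le 2(\varepsilon/2)^p=2^{1-p}\varepsilon^p\le\varepsilon^p$, since $p\ge1$. Taking $p$-th roots yields the claim.

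The remaining points are routine.
\begin{itemize}
\item \textbf{Measurability.} $\Phi^{\dexac}_{f,\varepsilon,p}$ is continuous, being a composition of continuous maps, and $\Lambda$ is a finite union of open slabs intersected with the cube, so both integrals are defined.
\item \textbf{Choice of $N_0$.} The inequality $d\,2^{-N_0}(2R+\delta n)^p\le(\varepsilon/2)^p$ is immediate from the ceiling in~\eqref{eq:N0}: $\log_2$ of the left side is $\log_2 d-N_0+p\log_2(2R+\delta n)$, and $N_0$ was chosen to make this at most $p\log_2(\varepsilon/2)$.
\item \textbf{Continuity of $\dexac$.} On each ramp $(k-w_k,k)$ the formula in~\eqref{eq:dexac} interpolates linearly between $\dexa(k-1)$ and $\dexa(k)$. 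On $[k-1,k-w_k]$ the activation equals the constant $\dexa(k-1)$, and at $k$ it equals $\dexa(k)$. The two pieces therefore match at both endpoints, and the ramps are disjoint because $w_k\le\tfrac12<1$.
\item \textbf{Independence.} The definition of $\dexac$ involves only $\dexa$ and the fixed widths $w_k$, so it does not depend on $f$, $\varepsilon$, $p$ or $d$.
\end{itemize}

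The only place requiring care is already handled inside Lemma~\ref{lem:cont-exact}. There, the uniform bound $0\le z\le n$ on $\Lambda$ relies on the second-layer argument staying inside the single block $[S_n,S_{n+1}]$, even when the first-layer outputs take non-integer ramp values. Given the lemma, the theorem follows from the two-term estimate above with no further obstacle.
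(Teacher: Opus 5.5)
Your proposal is correct and follows the paper's proof essentially verbatim. Both split $[0,1]^d$ into $\Lambda$ and its complement, use the two pointwise bounds of Lemma~\ref{lem:cont-exact}, apply the measure estimate~\eqref{eq:transition-measure}, and use the choice~\eqref{eq:N0} of $N_0$ to obtain $2(\varepsilon/2)^p\le\varepsilon^p$. Your extra checks of measurability, continuity of $\dexac$, and the $N_0$ inequality are accurate; they make explicit what the paper states in passing.
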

\begin{proof}
Splitting $[0,1]^d$ into $\Lambda$ and its complement and using~\eqref{eq:transition-measure} and Lemma~\ref{lem:cont-exact},
\begin{align*}
\bigl\|f-\Phi^{\dexac}_{f,\varepsilon,p}\bigr\|_{L^p([0,1]^d)}^p
&=\int_{[0,1]^d\setminus\Lambda}\bigl|f-\Phi^{\dexac}_{f,\varepsilon,p}\bigr|^p+\int_{\Lambda}\bigl|f-\Phi^{\dexac}_{f,\varepsilon,p}\bigr|^p\\
&\le\Bigl(\frac{\varepsilon}{2}\Bigr)^p+d\,2^{-N_0}(2R+\delta n)^p\le2\Bigl(\frac{\varepsilon}{2}\Bigr)^p\le\varepsilon^p .
\end{align*}
\end{proof}
\begin{remark}[Parameters and bit complexity of $\Phi^{\dexac}_{f,\varepsilon,p}$]
For $R\in\N_+$, all parameters of $\Phi^{\dexac}_{f,\varepsilon,p}$ are
integers except the dyadic output weight $\delta$, and $\dexac$ takes
rational values at rational arguments, so the network can be evaluated in
exact arithmetic as in Section~\ref{sec:numerics}. 

Compared with
$\Phi^{\dexa}_{f,\varepsilon/2}$, only two parameters change: the
first-layer biases decrease by $N_0$ and the second-layer bias increases by
the integer $N_0(K-1)/M$. Since $\delta n\le\delta K+\delta(B-1)$ with
$\delta K=\Theta(\varepsilon^{1-d/\alpha})$ and $\delta(B-1)\le4R+2\delta$
by~\eqref{eq:parameters}, \eqref{eq:N0} gives
$N_0=O\bigl(p\,\tfrac d\alpha\log\tfrac1\varepsilon\bigr)$, so these changes
cost $O(\log\frac1\varepsilon)$ bits for fixed $d,\alpha,\lambda,R,p$, while
the dominant term $\bit(N_f)=\Theta(\varepsilon^{-d/\alpha}\log\frac1\varepsilon)$
from the proof of Theorem~\ref{thm:bits} is unchanged. Hence, for $d\ge2$,
$\sup_{f\in\Hol}\bit\bigl(\Phi^{\dexac}_{f,\varepsilon,p}\bigr)
=\Theta\bigl(\varepsilon^{-d/\alpha}\log\tfrac1\varepsilon\bigr)$:
the continuous $L^p$ construction has the same asymptotic bit complexity as
the discontinuous $L^\infty$ construction.
\end{remark}

\section{Additional Figures and Tables}\label{app:fig_tab}
In this appendix, we present additional figures and tables that help to understand our construction or numerical results.
\begin{table}[htbp!]
    \centering
    \caption{Position encoding example for $d=2$ and $M=3$, where
    $I_1=[0,\frac13)$, $I_2=[\frac13,\frac23)$,
    $I_3=[\frac23,1)$, and $I_4=\{1\}$.}
    \label{tab:position_encoding}
    \renewcommand{\arraystretch}{1.5}
    \setlength{\tabcolsep}{4pt}
    \resizebox{\linewidth}{!}{%
    \begin{tabular}{ccc|ccc|ccc|ccc}
        \hline
        $Q_{\ell}$ & $\boldsymbol{x}^{(\ell)}$ & $\ell$
        & $Q_{\ell}$ & $\boldsymbol{x}^{(\ell)}$ & $\ell$
        & $Q_{\ell}$ & $\boldsymbol{x}^{(\ell)}$ & $\ell$
        & $Q_{\ell}$ & $\boldsymbol{x}^{(\ell)}$ & $\ell$ \\
        \hline

        $I_1\times I_1$ & $(0,0)$ & $(0)_4$
        &
        $I_2\times I_1$ & $(\frac13,0)$ & $(1)_4$
        &
        $I_3\times I_1$ & $(\frac23,0)$ & $(2)_4$
        &
        $I_4\times I_1$ & $(1,0)$ & $(3)_4$
        \\

        $I_1\times I_2$ & $(0,\frac13)$ & $(10)_4$
        &
        $I_2\times I_2$ & $(\frac13,\frac13)$ & $(11)_4$
        &
        $I_3\times I_2$ & $(\frac23,\frac13)$ & $(12)_4$
        &
        $I_4\times I_2$ & $(1,\frac13)$ & $(13)_4$
        \\

        $I_1\times I_3$ & $(0,\frac23)$ & $(20)_4$
        &
        $I_2\times I_3$ & $(\frac13,\frac23)$ & $(21)_4$
        &
        $I_3\times I_3$ & $(\frac23,\frac23)$ & $(22)_4$
        &
        $I_4\times I_3$ & $(1,\frac23)$ & $(23)_4$
        \\

        $I_1\times I_4$ & $(0,1)$ & $(30)_4$
        &
        $I_2\times I_4$ & $(\frac13,1)$ & $(31)_4$
        &
        $I_3\times I_4$ & $(\frac23,1)$ & $(32)_4$
        &
        $I_4\times I_4$ & $(1,1)$ & $(33)_4$
        \\

        \hline
    \end{tabular}
    }
\end{table}

\begin{figure}[htbp]
\centering
\includegraphics[width=0.8\linewidth]{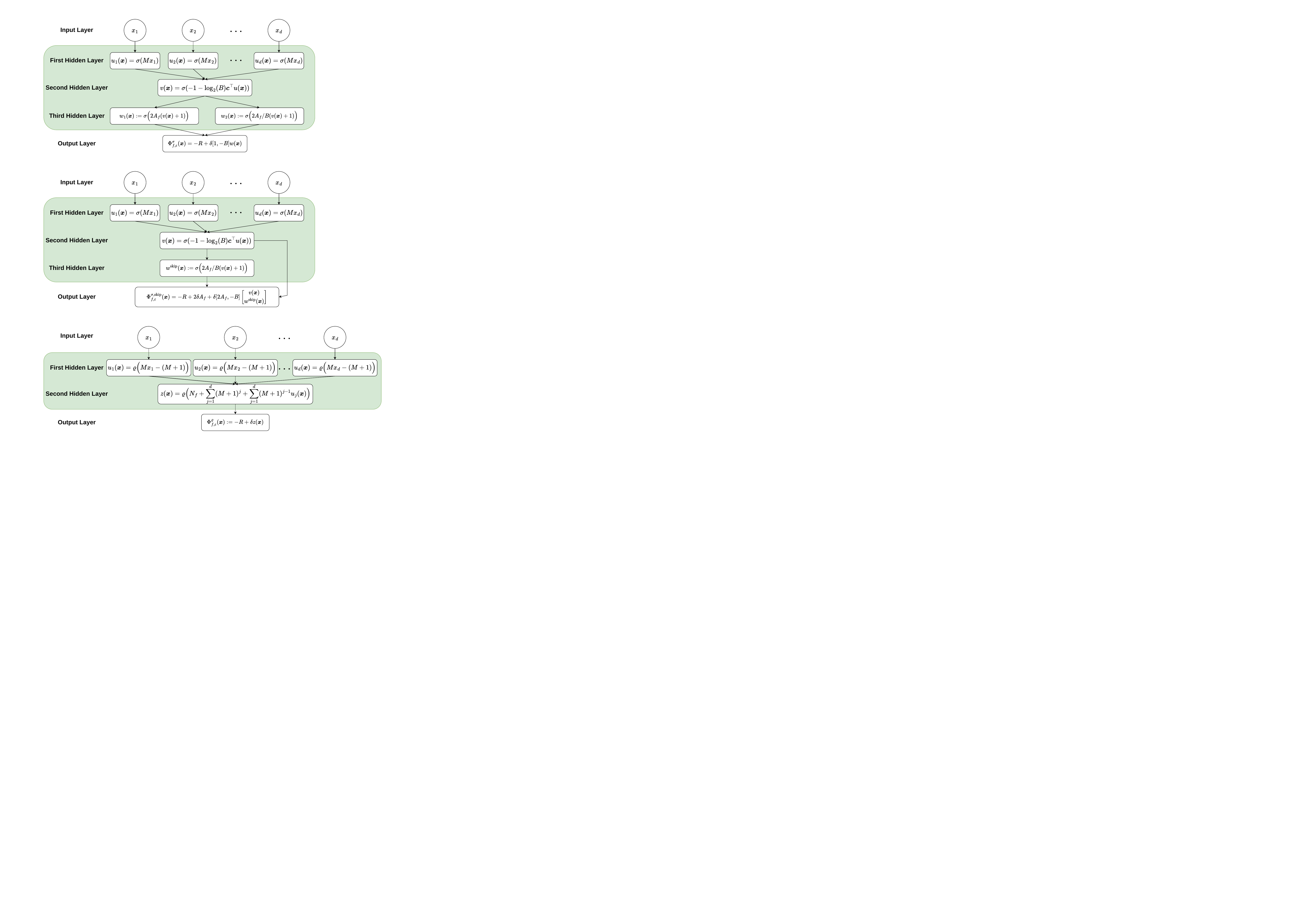}
\vspace{-2mm}
\caption{Architecture of the elementary network
$\Phi^\sigma_{f,\varepsilon}$ in
Theorem~\ref{thm:elementary}. It has hidden-layer widths
$(d,1,2)$ and $d+3$ hidden neurons in total.}
\label{fig:nn_basic}
\end{figure}

\begin{figure}[htbp]
\centering
\includegraphics[width=0.8\linewidth]{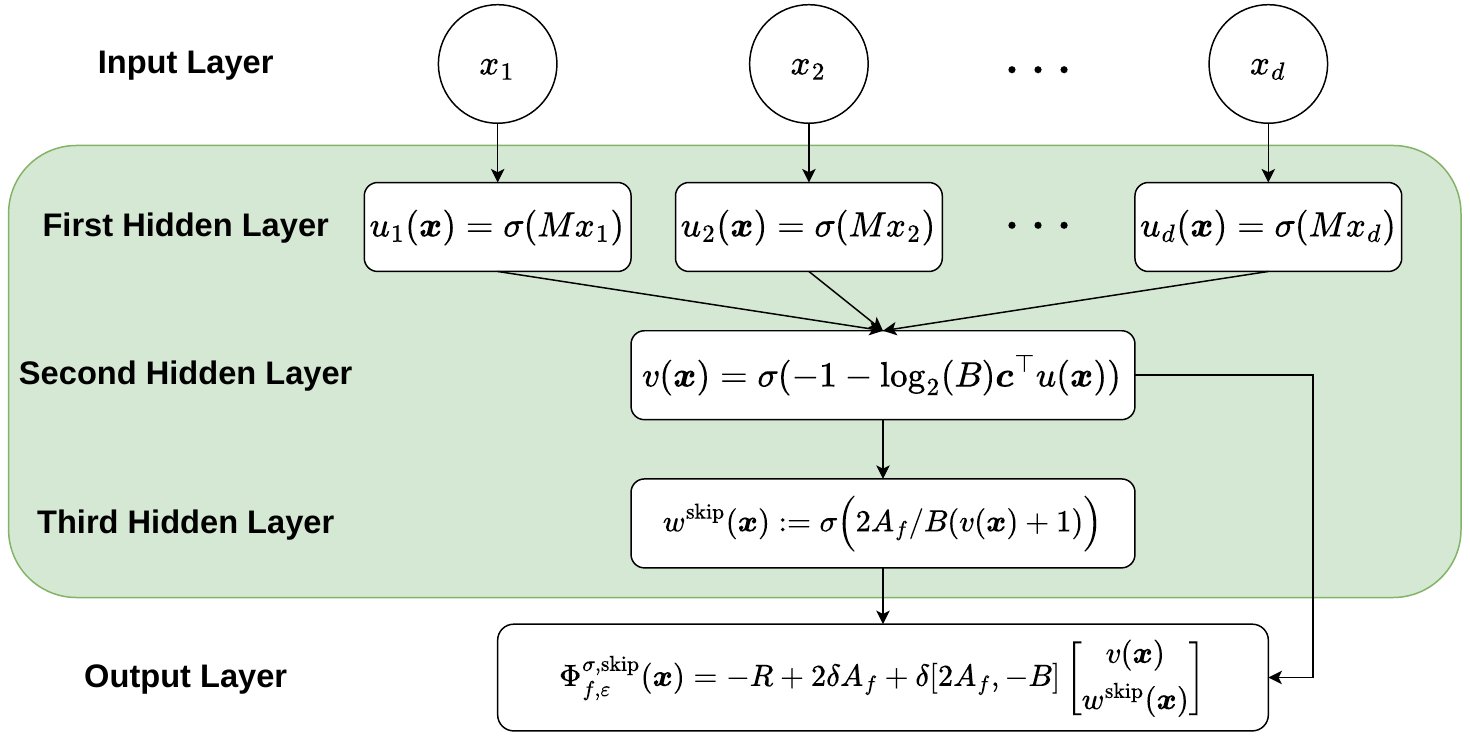}
\vspace{-2mm}
\caption{Architecture of the elementary network with a skip connection,
$\Phi^{\sigma,\mathrm{skip}}_{f,\varepsilon}$, in
Theorem~\ref{thm:elementary}. It has hidden-layer widths
$(d,1,1)$ and $d+2$ hidden neurons in total.}
\label{fig:nn_skip}
\vspace{-8mm}
\end{figure}

\begin{table}[htbp!]
\centering
\caption{Parameters, encoding bit lengths, and empirical errors for the three network constructions on $f_1,\ldots,f_6$.
The test set $\mathcal X_{\mathrm{test}}$ contains $3{,}004$ points for
$f_1,\ldots,f_4$ and $3{,}008$ points for $f_5,f_6$.
The empirical errors are
$E_{\varrho}:=\max_{\bx\in\mathcal X_{\mathrm{test}}}|f(\bx)-\Phi^\varrho_{f,\varepsilon}(\bx)|$,
$E_{\sigma}:=\max_{\bx\in\mathcal X_{\mathrm{test}}}|f(\bx)-\Phi^\sigma_{f,\varepsilon}(\bx)|$, and
$E_{\sigma,\mathrm{skip}}:=\max_{\bx\in\mathcal X_{\mathrm{test}}}|f(\bx)-\Phi^{\sigma,\mathrm{skip}}_{f,\varepsilon}(\bx)|$.}
\vspace{-2mm}
\label{tab:numerics-comparison}
\setlength{\tabcolsep}{2.5pt}
\resizebox{0.85\linewidth}{!}{%
\begin{tabular}{@{}lc*{11}{r}@{}}
\toprule
$f$ & $\varepsilon$ & $\delta$ & $M$ & $K$ & $B$
& $\bit(A_f)$ & $\bit(N_f)$
& $\displaystyle\tfrac{\bit(A_f)}{K\log_2B}$
& $\displaystyle\tfrac{\bit(N_f)}{n\log_2(n+1)}$
& $E_{\varrho}$ & $E_{\sigma}$ & $E_{\sigma,\mathrm{skip}}$ \\
\midrule
$f_1$ & $2^{-1}$ & $2^{-2}$ & 13 & 196 & 16
& 782 & 1\,497 & 0.9974 & 1.0021
& 0.3275 & 0.3275 & 0.2611 \\
$f_1$ & $2^{-2}$ & $2^{-3}$ & 26 & 729 & 32
& 3\,643 & 6\,938 & 0.9995 & 1.0006
& 0.1760 & 0.1760 & 0.1438 \\
$f_1$ & $2^{-3}$ & $2^{-4}$ & 51 & 2\,704 & 64
& 16\,222 & 30\,834 & 0.9999 & 1.0001
& 0.0912 & 0.0912 & 0.0762 \\
$f_1$ & $2^{-4}$ & $2^{-5}$ & 101 & 10\,404 & 128
& 72\,826 & 138\,847 & 1.0000 & 1.0000
& 0.0416 & 0.0416 & 0.0335 \\
$f_1$ & $2^{-5}$ & $2^{-6}$ & 202 & 41\,209 & 256
& 329\,670 & 631\,770 & 1.0000 & 1.0000
& 0.0223 & 0.0223 & 0.0186 \\
$f_1$ & $2^{-6}$ & $2^{-7}$ & 403 & 163\,216 & 512
& 1\,468\,942 & 2\,826\,326 & 1.0000 & 1.0000
& 0.0116 & 0.0116 & 0.0097 \\
$f_1$ & $2^{-7}$ & $2^{-8}$ & 805 & 649\,636 & 1\,024
& 6\,496\,358 & 12\,544\,008 & 1.0000 & 1.0000
& 0.0056 & 0.0056 & 0.0045 \\
\addlinespace
$f_2$ & $2^{-1}$ & $2^{-2}$ & 49 & 2\,500 & 32
& 12\,499 & 28\,225 & 0.9999 & 1.0002
& 0.2978 & 0.2978 & 0.2785 \\
$f_2$ & $2^{-2}$ & $2^{-3}$ & 196 & 38\,809 & 64
& 232\,853 & 591\,615 & 1.0000 & 1.0000
& 0.1772 & 0.1772 & 0.1435 \\
$f_2$ & $2^{-3}$ & $2^{-4}$ & 784 & 616\,225 & 128
& 4\,313\,574 & 11\,851\,923 & 1.0000 & 1.0000
& 0.0911 & 0.0911 & 0.0734 \\
\addlinespace
$f_3$ & $2^{-1}$ & $2^{-2}$ & 76 & 5\,929 & 16
& 23\,715 & 74\,316 & 1.0000 & 1.0000
& 0.4500 & 0.4500 & 0.4167 \\
$f_3$ & $2^{-2}$ & $2^{-3}$ & 152 & 23\,409 & 32
& 117\,044 & 339\,781 & 1.0000 & 1.0000
& 0.2370 & 0.2370 & 0.2169 \\
$f_3$ & $2^{-3}$ & $2^{-4}$ & 304 & 93\,025 & 64
& 558\,149 & 1\,535\,414 & 1.0000 & 1.0000
& 0.1189 & 0.1189 & 0.1110 \\
$f_3$ & $2^{-4}$ & $2^{-5}$ & 608 & 370\,881 & 128
& 2\,596\,166 & 6\,861\,527 & 1.0000 & 1.0000
& 0.0584 & 0.0584 & 0.0535 \\
$f_3$ & $2^{-5}$ & $2^{-6}$ & 1\,216 & 1\,481\,089 & 256
& 11\,848\,711 & 30\,359\,706 & 1.0000 & 1.0000
& 0.0301 & 0.0301 & 0.0276 \\
\addlinespace
$f_4$ & $2^{-1}$ & $2^{-2}$ & 74 & 5\,625 & 16
& 22\,498 & 70\,078 & 0.9999 & 1.0000
& 0.4219 & 0.4219 & 0.3552 \\
$f_4$ & $2^{-2}$ & $2^{-3}$ & 147 & 21\,904 & 32
& 109\,518 & 315\,837 & 1.0000 & 1.0000
& 0.2027 & 0.2027 & 0.1654 \\
$f_4$ & $2^{-3}$ & $2^{-4}$ & 293 & 86\,436 & 64
& 518\,614 & 1\,417\,500 & 1.0000 & 1.0000
& 0.0993 & 0.0993 & 0.0805 \\
$f_4$ & $2^{-4}$ & $2^{-5}$ & 586 & 344\,569 & 128
& 2\,411\,981 & 6\,338\,158 & 1.0000 & 1.0000
& 0.0465 & 0.0465 & 0.0393 \\
$f_4$ & $2^{-5}$ & $2^{-6}$ & 1\,172 & 1\,375\,929 & 256
& 11\,007\,430 & 28\,057\,917 & 1.0000 & 1.0000
& 0.0268 & 0.0268 & 0.0209 \\
\addlinespace
$f_5$ & $2^{-1}$ & $2^{-2}$ & 16 & 4\,913 & 16
& 19\,650 & 60\,249 & 0.9999 & 1.0000
& 0.3384 & 0.3384 & 0.2839 \\
$f_5$ & $2^{-2}$ & $2^{-3}$ & 32 & 35\,937 & 32
& 179\,683 & 543\,846 & 1.0000 & 1.0000
& 0.1620 & 0.1620 & 0.1399 \\
$f_5$ & $2^{-3}$ & $2^{-4}$ & 63 & 262\,144 & 64
& 1\,572\,862 & 4\,718\,598 & 1.0000 & 1.0000
& 0.0864 & 0.0864 & 0.0755 \\
$f_5$ & $2^{-4}$ & $2^{-5}$ & 125 & 2\,000\,376 & 128
& 14\,002\,630 & 41\,871\,557 & 1.0000 & 1.0000
& 0.0476 & 0.0476 & 0.0412 \\
\addlinespace
$f_6$ & $2^{-1}$ & $2^{-2}$ & 11 & 1\,728 & 16
& 6\,910 & 18\,588 & 0.9997 & 1.0001
& 0.4025 & 0.4025 & 0.3612 \\
$f_6$ & $2^{-2}$ & $2^{-3}$ & 22 & 12\,167 & 32
& 60\,833 & 165\,119 & 1.0000 & 1.0000
& 0.1959 & 0.1959 & 0.1717 \\
$f_6$ & $2^{-3}$ & $2^{-4}$ & 44 & 91\,125 & 64
& 546\,748 & 1\,501\,341 & 1.0000 & 1.0000
& 0.1092 & 0.1092 & 0.0924 \\
$f_6$ & $2^{-4}$ & $2^{-5}$ & 88 & 704\,969 & 128
& 4\,934\,781 & 13\,695\,580 & 1.0000 & 1.0000
& 0.0517 & 0.0517 & 0.0453 \\
\bottomrule
\end{tabular}%
}
\end{table}

\begin{figure}[htbp]
\centering
\includegraphics[width=0.9\linewidth]{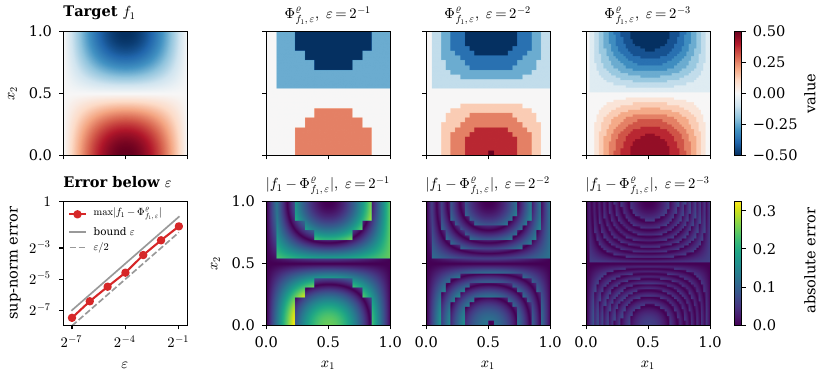}
\vspace{-5mm}
\caption{Numerical verification of Theorem~\ref{thm:exact} for the approximation of the 2D function $f_1$ by the three-neuron network $\Phi^\varrho_{f_1,\varepsilon}$. The panels are arranged as in Figure~\ref{fig:numerics-f4}, with $\varepsilon=2^{-1},2^{-2},2^{-3}$ ($M=13,26,51$); the error plot additionally includes finer networks down to $\varepsilon=2^{-7}$.}
\label{fig:numerics-f1}
\end{figure}
\vspace{-9mm}
\begin{figure}[htbp]
\centering
\includegraphics[width=0.9\linewidth]{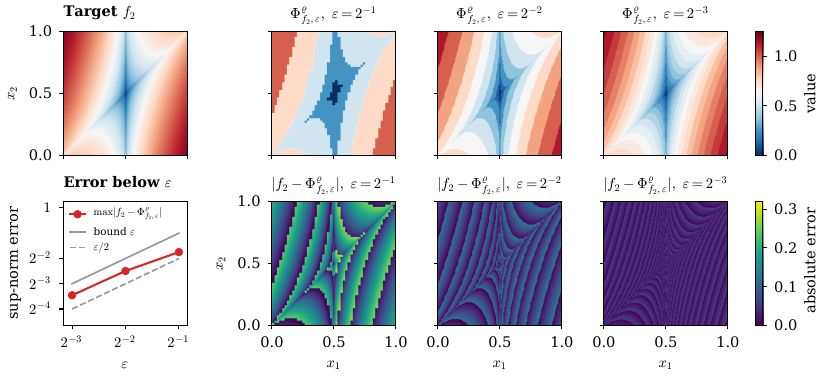}
\vspace{-5mm}
\caption{Numerical verification of Theorem~\ref{thm:exact} for the approximation of the 2D function $f_2$ by the three-neuron network $\Phi^\varrho_{f_2,\varepsilon}$. The panels are arranged as in Figure~\ref{fig:numerics-f4}, with $\varepsilon=2^{-1},2^{-2},2^{-3}$ ($M=49,196,784$).}
\label{fig:numerics-f2}
\end{figure}
\vspace{-9mm}
\begin{figure}[htbp]
\centering
\includegraphics[width=0.9\linewidth]{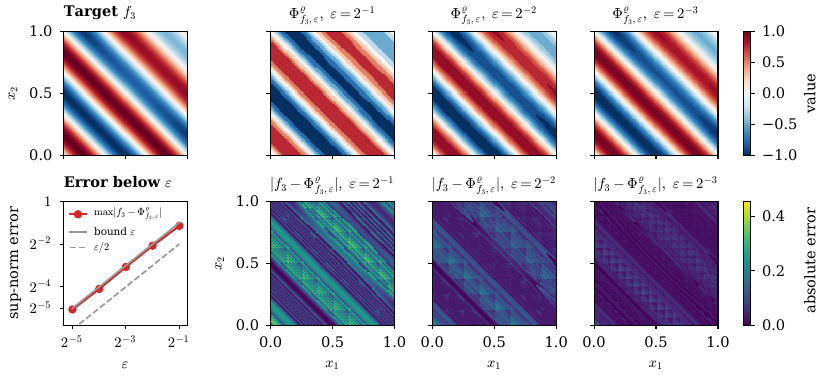}
\vspace{-5mm}
\caption{Numerical verification of Theorem~\ref{thm:exact} for the approximation of the 2D function $f_3$ by the three-neuron network $\Phi^{\dexa}_{f_3,\varepsilon}$. The panels are arranged as in Figure~\ref{fig:numerics-f4}, with $\varepsilon=2^{-1},2^{-2},2^{-3}$ ($M=76,152,304$); the error plot additionally includes finer accuracy down to $\varepsilon=2^{-5}$.}
\label{fig:numerics-f3}
\end{figure}

\begin{figure}[p]
\centering
\includegraphics[width=0.9\linewidth]
{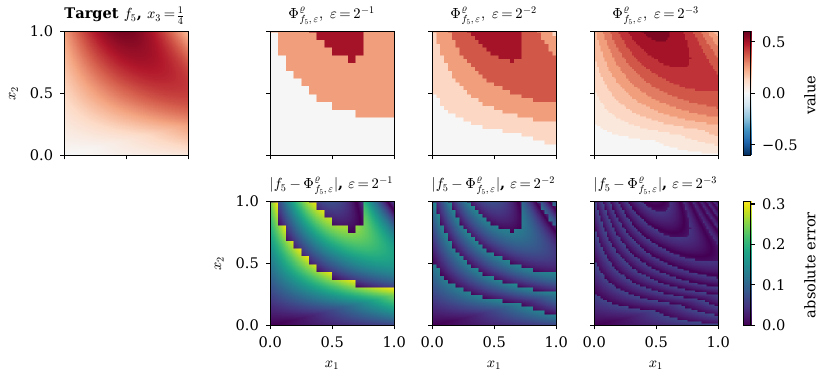}

\includegraphics[width=0.9\linewidth]
{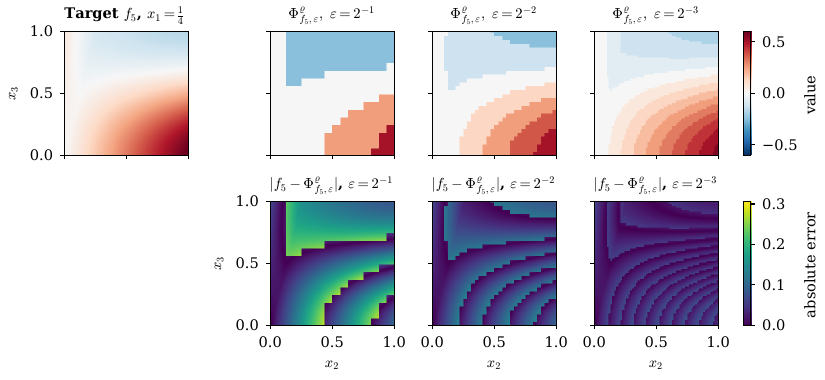}

\includegraphics[width=0.9\linewidth]
{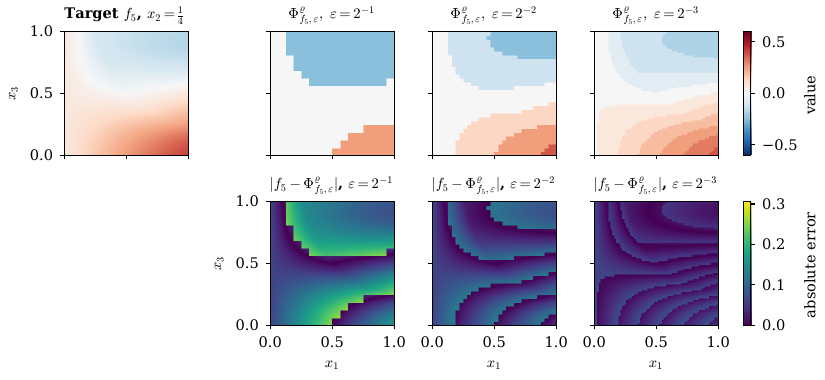}

\caption{Numerical verification of Theorem~\ref{thm:exact} for the
approximation of the 3D function $f_5$ by the
four-neuron network $\Phi^\varrho_{f_5,\varepsilon}$.
From top to bottom, the panels show the $x_1x_2$, $x_2x_3$, and
$x_1x_3$ sections, with the remaining coordinate fixed at $1/4$.
Within each panel, the top row contains the target and the network
outputs for $\varepsilon=2^{-1},2^{-2},2^{-3}$
($M=16,32,63$), and the bottom row contains the corresponding
absolute errors.}
\label{fig:numerics-d3-exact-f5}
\end{figure}

\begin{figure}[p]
\centering
\includegraphics[width=0.9\linewidth]
{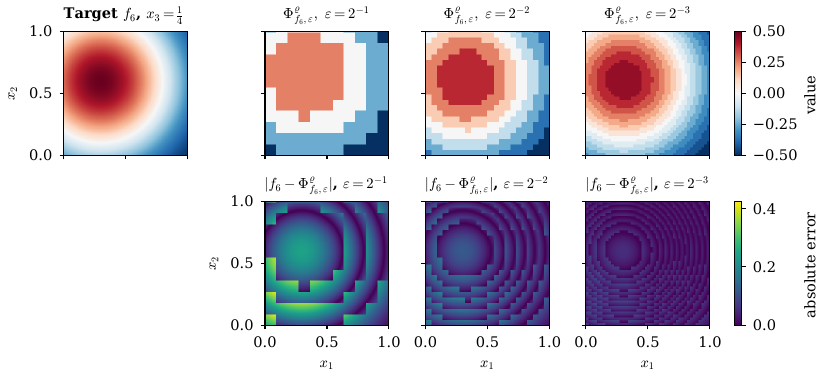}

\includegraphics[width=0.9\linewidth]
{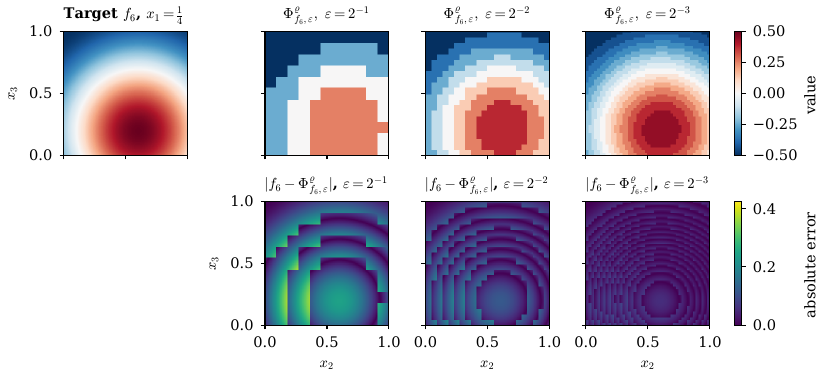}

\includegraphics[width=0.9\linewidth]
{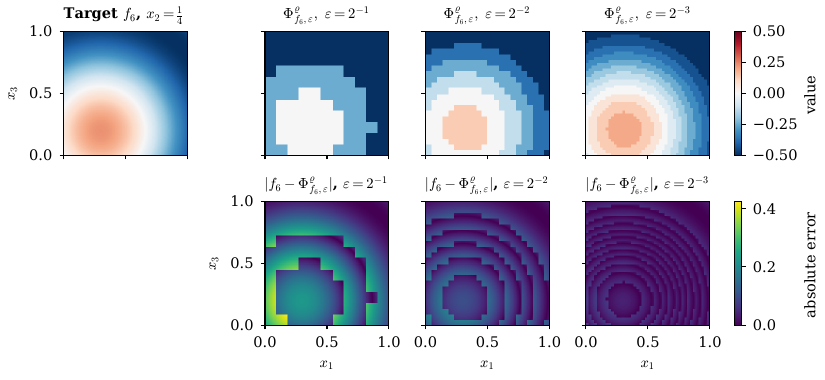}

\caption{Numerical verification of Theorem~\ref{thm:exact} for the
approximation of the 3D function $f_6$ by the
four-neuron network $\Phi^\varrho_{f_6,\varepsilon}$.
The sections and panels are arranged as in
Figure~\ref{fig:numerics-d3-exact-f5}, with
$\varepsilon=2^{-1},2^{-2},2^{-3}$ and $M=11,22,44$.}
\label{fig:numerics-d3-exact-f6}
\end{figure}

\clearpage
\begingroup
\makeatletter
\setlength{\@fptop}{0pt}
\makeatother
\begin{figure}[t]
\centering
\begin{subfigure}[t]{0.3\linewidth}
    \centering
    \includegraphics[width=0.9\linewidth]
    {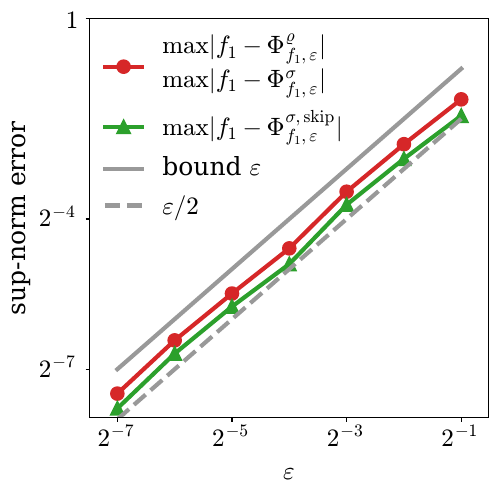}
    \caption{Target function $f_1$.}
    \label{subfig:numerics-d2-error-f1}
\end{subfigure}
\begin{subfigure}[t]{0.3\linewidth}
    \centering
    \includegraphics[width=0.9\linewidth]
    {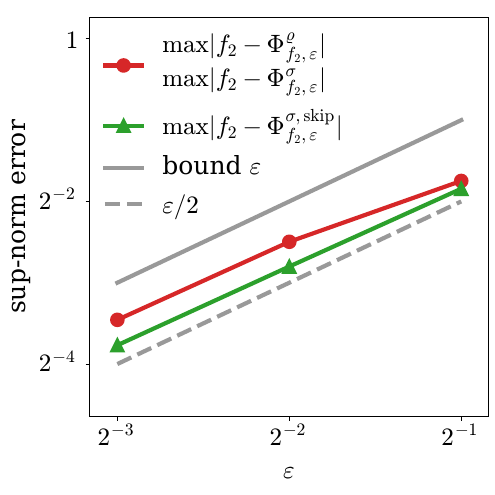}
    \caption{Target function $f_2$.}
    \label{subfig:numerics-d2-error-f2}
\end{subfigure}
\begin{subfigure}[t]{0.3\linewidth}
    \centering
    \includegraphics[width=0.9\linewidth]
    {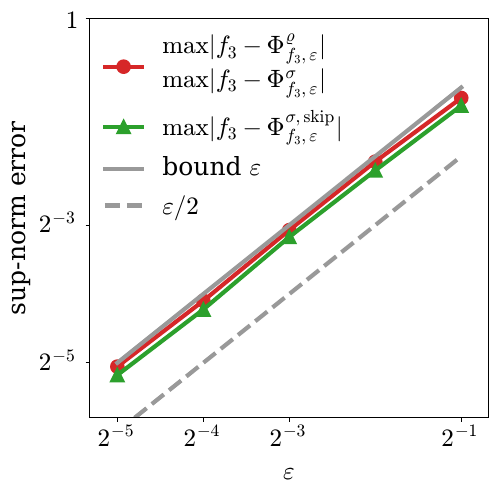}
    \caption{Target function $f_3$.}
    \label{subfig:numerics-d2-error-f3}
\end{subfigure}
\begin{subfigure}[t]{0.3\linewidth}
    \centering
    \includegraphics[width=0.9\linewidth]
    {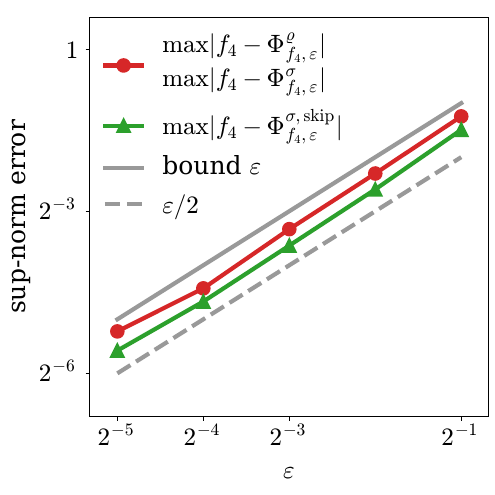}
    \caption{Target function $f_4$.}
    \label{subfig:numerics-d2-error-f4}
\end{subfigure}
\begin{subfigure}[t]{0.3\linewidth}
    \centering
    \includegraphics[width=0.9\linewidth]
    {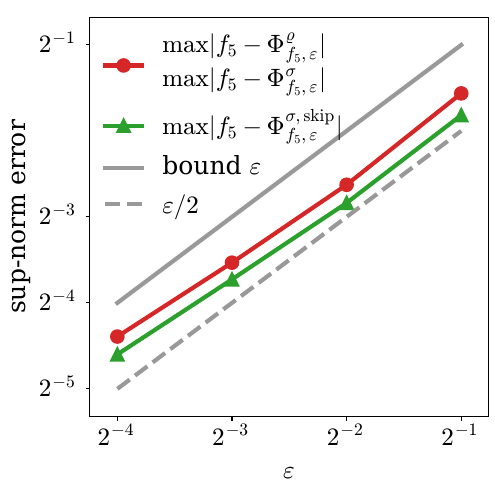}
    \caption{Target function $f_5$.}
    \label{subfig:numerics-d3-error-f5}
\end{subfigure}
\begin{subfigure}[t]{0.3\linewidth}
    \centering
    \includegraphics[width=0.9\linewidth]
    {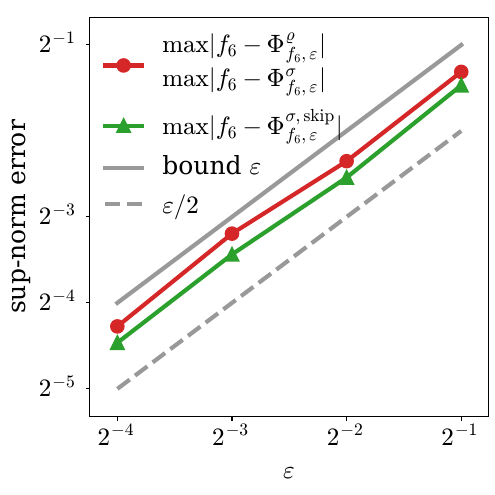}
    \caption{Target function $f_6$.}
    \label{subfig:numerics-d3-error-f6}
\end{subfigure}
\vspace{-2mm}

\caption{Sup-norm errors for the three network constructions applied to $f_1,\ldots,f_6$, evaluated in exact arithmetic over $3{,}004$ test points for $f_1,\ldots,f_4$ ($d=2$) and $3{,}008$ test points for
$f_5$ and $f_6$ ($d=3$). Each test set consists of $2{,}000$ random points, $1{,}000$ boundary points, and the $2^d$ corners. The exact and elementary networks attain identical maximal errors and are therefore represented by a single curve, while the skip-connection network is shown separately. All errors remain below the prescribed
accuracy $\varepsilon$.}
\label{fig:numerics-errors}
\end{figure}
\clearpage
\endgroup

\end{document}